\documentclass{article}



\usepackage{microtype}
\usepackage{graphicx}
\usepackage{subfigure}
\usepackage{booktabs} 

\usepackage{algorithm} 
\usepackage[noend]{algpseudocode}

\renewcommand\algorithmicdo{}

\makeatletter
\def\BState{\State\hskip-\ALG@thistlm}
\makeatother

\usepackage{hyperref}

      \usepackage[final]{neurips_2019}

\usepackage{amsmath,bm,graphicx,psfrag,epsf,enumerate,placeins,float,sectsty,amstext,relsize,wrapfig,listings,subfigure,amsfonts,amsthm,mathabx,enumitem,hyperref,grffile,colortbl,dsfont,bbm, float}

\usepackage[percent]{overpic}

\floatstyle{plain}
\newfloat{algs}{tb} 

\usepackage{tabularx}

\usepackage{multibib} 
\newcites{si}{Additional References for the Supplementary Material}

\newcommand{\nocontentsline}[3]{}
\let\oldaddcontentsline\addcontentsline
\newcommand{\tocless}[2]{%
  \let\addcontentsline\nocontentsline
  #1{#2}
  \let\addcontentsline\oldaddcontentsline}

\renewcommand{\cite}{\citep} 

\newcommand{\beginsupplement}{ 
        \setcounter{section}{0}
        \renewcommand{\thesection}{S\arabic{section}} %
         \renewcommand{\thesubsection}{\thesection.\arabic{subsection}}
        \setcounter{table}{0}
        \renewcommand{\thetable}{S\arabic{table}} %
        \setcounter{figure}{0}
        \renewcommand{\thefigure}{S\arabic{figure}} %
     }

\DeclareMathOperator*{\argmax}{argmax}
\DeclareMathOperator*{\argmin}{argmin}

\newcommand{\smallup}{\mathrel{\text{{\scalebox{0.8}{\hspace*{-2pt} $\uparrow$}}}}}
\newcommand{\smalldown}{\mathrel{\text{{\scalebox{0.8}{\hspace*{-2pt} $\downarrow$}}}}}

\definecolor{LightGray}{gray}{0.9}
\newcommand\numberthis{\addtocounter{equation}{1}\tag{\theequation}}
\newcommand{\todo}[1][]{} 

\newtheorem{thm}{Theorem}
\newtheorem{lem}{Lemma}

\newcounter{factnum}
 \setcounter{factnum}{0}

\newcounter{claimnum}
 \setcounter{claimnum}{0}
 \newcounter{defnum}
 \setcounter{defnum}{0}

\newcounter{assumptioncount} 

\newenvironment{customthm}[1]
  {\innercustomthm}
  {\endinnercustomthm}

\newenvironment{customlem}[1]
  {\innercustomlem}
  {\endinnercustomlem}


\allowdisplaybreaks  


\newcommand\NoDo{\renewcommand\algorithmicdo{}} 

\newcommand{\algfirstabbrev}{\mbox{OPOK}}
\newcommand{\algsecondabbrev}{\mbox{OPOL}}
\newcommand{\flaxmanabbrev}{\mbox{GDG}}

\title{Low-Rank Bandit Methods for High-Dimensional Dynamic Pricing}

\author{%
  Jonas Mueller  \\
  MIT CSAIL \\
   \texttt{jonasmueller@csail.mit.edu} \\
   \And
   Vasilis Syrgkanis \\
   Microsoft Research \\
   \texttt{vasy@microsoft.com} 
   \And
   Matt Taddy \\
   Chicago Booth \\
   \texttt{taddy@chicagobooth.edu} 
}

\begin{document}

\maketitle

\begin{abstract}

We consider dynamic pricing with many products under an evolving but low-dimensional demand model.   Assuming the temporal variation in cross-elasticities exhibits low-rank structure based on fixed (latent) features of the products, we show that the revenue maximization problem  reduces to an online bandit convex optimization with side information given by the observed demands.  We design dynamic pricing algorithms whose revenue approaches that of the best fixed price vector in hindsight, at a rate that only depends on the intrinsic rank of the demand model and not the number of products.  Our approach applies a bandit convex optimization algorithm in a projected low-dimensional space spanned by the latent product features, while simultaneously learning this span via online singular value decomposition of a carefully-crafted matrix containing the observed demands. 

\end{abstract}

\tocless\section{Introduction}
\label{sec:intro}

In this work, we consider a seller offering $N$ products, where $N$ is large, and the pricing of certain products may influence the demand for others in unknown ways. We let $\mathbf{p}_t \in \mathbb{R}^N$ denote the vector of selected prices at which each product is sold during time period $t \in \{1,\dots, T\}$, which results in total demands for the products over this period represented in the vector $\mathbf{q}_t \in \mathbb{R}^N$.  Note that $\mathbf{q}_t$ represents a (noisy) evaluation of the aggregate demand curve at the chosen prices $\mathbf{p}_t$, but we never observe the counterfactual demand that would have resulted had we selected a different price-point.  This is referred to as \emph{bandit} feedback in the online optimization literature \cite{Dani07}. 
Our goal is find a setting of the prices for each time period to maximize the \emph{total revenue} of the seller (over all rounds).  This is equivalent to minimizing the negative revenue over time:
\begin{equation*}
R(\mathbf{p}_1,\dots, \mathbf{p}_T) = \sum_{t=1}^T R_t(\mathbf{p}_t) \ \text{ where } R_t(\mathbf{p}_t) = - \langle \mathbf{q}_t, \mathbf{p}_t \rangle
\end{equation*}
We can alternatively maximize total profits instead of revenue by simply redefining $\mathbf{p}_t$ as the difference between the product-prices and the cost of each product-unit. 
In practice, the seller can only consider prices within some constraint set  $\mathcal{S} \subset \mathbb{R}^N$, which we assume is convex throughout.
To find the optimal prices, we introduce the following linear model of the aggregate demands, which is allowed to change over time in a nonstationary fashion:
\begin{equation}
\mathbf{q}_t = \mathbf{c}_t - \mathbf{B}_t \mathbf{p}_t + \bm{\epsilon}_t  
\label{eq:basemodel}
\end{equation}

Here, $\mathbf{c}_t \in \mathbb{R}^N$ denotes the baseline demand for each product in round  $t$.  $\mathbf{B}_t \in \mathbb{R}^{N \times N}$ is an \emph{asymmetric} matrix of demand elasticities which represents how changing the price of one product may affect the demand of not only this product, but also demand for other products as well.  By conventional economic wisdom, $\mathbf{B}_t$ will have the largest entries along its diagonal because demand for a product is primarily driven by its price rather than the price of other possibly unrelated products.  Since a price increase usually leads to falling demand, it is reasonable to assume all $\mathbf{B}_t \succeq 0$ are \emph{positive-definite} (but not necessarily Hermitian), which implies that at each round:  $R_t$ is a convex function of $\mathbf{p}_t$.  The observed aggregate demands over each time period  are additionally subject to random fluctuations driven by the noise term $\bm{\epsilon}_t \in \mathbb{R}^N$.  Throughout, we suppose the noise in each round $\bm{\epsilon}_t$ is sampled i.i.d.\  from some mean-zero distribution with finite variance.      The classic analysis of \citet{Houthakker70} established that historical demand data often nicely fit a linear relationship.  A wealth of past work on dynamic pricing has also posited linear demand models, although most prior research has not considered settings where the underlying model is changing over time \cite{Keskin14, Besbes15, Cohen16, Javanmard16,Javanmard17}.

Unlike standard statistical approaches to this problem which rely on stationarity, we suppose $\mathbf{c}_t, \mathbf{B}_t$ may change every round and are possibly chosen adversarially.   This consideration is particularly important in dynamic markets where the seller faces new competitors and consumers with ever-changing preferences who are actively seeking out the cheapest prices for products \cite{Witt86}.  
 Our goal is to select prices $\mathbf{p}_1,\dots, \mathbf{p}_T$ which minimize the expected \emph{regret}   ${\mathbb{E}[R(\mathbf{p}_1,\dots, \mathbf{p}_T) - R(\mathbf{p}^*, \dots, \mathbf{p}^*)]}$ compared to always selecting the single best configuration of prices $ {\mathbf{p}^* = \argmin_{\mathbf{p} \in \mathcal{S}} \mathbb{E} \sum_{t=1}^T R_t(\mathbf{p})}$ chosen in hindsight after the revenue functions $R_t$ have all been revealed.  
 
 Low regret algorithms ensure that in the case of a stationary underlying model, our chosen prices quickly converge to the optimal choice, and in nonstationary settings, our pricing procedure will naturally adapt to the intrinsic difficulty of the dynamic revenue-optimization problem 
  \cite{Shalev11}.  
 While low (i.e.\ $o(T)$) regret is achievable using algorithms for online convex optimization with bandit feedback, the regret of existing methods is bounded below by $\Omega(\sqrt{N})$, which is undesirable large when one is dealing with a vast number of products  \cite{Dani07, Shalev11, Flaxman05}.  To attain better bounds, we adopt a low-rank structural assumption that the variation in demands changes over time only due to $d \ll N$ underlying factors.  Under this setting, we develop algorithms whose regret depends only on $d$ rather than $N$ by combining existing bandit methods with low-dimensional projections selected via online singular value decomposition.   
As far as we are aware, our main result (Theorem \ref{thm:unknownspan}) is the first online bandit optimization algorithm whose regret provably does not scale with the action-space dimensionality. 

Appendix \ref{sec:notation} provides a glossary of notation used in this paper, and all proofs of our theorems are relegated to  Appendix \ref{sec:proofs}.  
Throughout, $C$ denotes a universal constant, whose value may change from line to line (but never depends on problem-specific constants such as $T, d, r$).  

\tocless\section{Related Work}
\label{sec:relatedwork}

While bandit optimization has been successfully applied to dynamic pricing, research in this area has been primarily restricted to stationary settings \cite{Kleinberg03, Besbes09, denBoer13, Keskin14, Cohen16, Misra17}.  Most similar to our work,  \citet{Javanmard17} recently developed a bandit pricing strategy that presumes demand depends linearly on prices and product-specific features. 
 High-dimensional dynamic pricing was also addressed by \citet{Javanmard16} using sparse maximum likelihood.
 However, due to their reliance on stationarity, these approaches are less robust under evolving/adversarial environments compared with online optimization \cite{Bubeck12}. 
 
Beyond pricing, existing algorithms that combine bandits with subspace estimation \cite{Gopalan16, Djolonga13, Sen17} are solely designed for stationary (\emph{stochastic}) settings rather than general online optimization (where the reward functions can vary adversarially over time).
While the field of online bandit optimization has seen many advances since the pioneering work of Flaxman et al.\  \cite{Flaxman05}, none of the recent improvements guarantees regret that is independent of the action-space dimension \cite{Hazan14, Bubeck16}.  To our knowledge, \citet{Hazan16} is the only prior work to present online optimization algorithms whose regret depends on an intrinsic low rank structure rather than the ambient dimensionality.  
However, their approach for online learning with experts is not suited for dynamic pricing since it is restricted to settings with: full-information (rather than bandit feedback), linear and noise-free (or stationary) reward functions, and actions that are specially constrained within the probability-simplex.

\tocless\section{Low Rank Demand Model}
\label{sec:lowrank}

We now introduce a special case of model (\ref{eq:basemodel}) in which both $\mathbf{c}_t$ and $\mathbf{B}_t$  display  low-rank changes over time.
In practice, each product $i$ may be described by some vector of features $\mathbf{u}_i \in \mathbb{R}^d$ (with $d \ll N$), which determine the similarity between products as well as their baseline demands.  
A natural method to gauge similarity between products $i$ and $j$ is via their inner product $\langle \mathbf{u}_i ,  \mathbf{u}_j \rangle_\mathbf{V} = \mathbf{u}_i^T \mathbf{V} \mathbf{u}_j$ under some linear transformation of the feature-space given by $\mathbf{V} \succeq 0$.  For example, $\mathbf{u}_i$ might be a binary vector indicating that product $i$ falls into certain product-categories (where the number of categories $d$ is far less than the number of products $N$), and $\mathbf{V}$ might be a diagonal matrix specifying the cross-elasticity of demand within each product category.  In this example, $\mathbf{u}_i^T \mathbf{V} \mathbf{u}_j \cdot p_j$ would thus be the marginal effect on the demand for product $i$ that results from selecting $p_j$ as the price for product $j$.  
Many recommender systems also assume products can be described using low-dimensional latent features that govern their desirability to consumers \citep{Zhao16, Sen17}.

By introducing time-varying metric transformations $\mathbf{V}_t$, our model allows these product-similarities to evolve over time. 
Encoding the features $\mathbf{u}_i$ that represent each product as rows in a matrix $\mathbf{U} \in \mathbb{R}^{N \times d}$, 
we assume the following demand model, in which the temporal variation naturally exhibits low-rank structure:
\begin{equation}
\mathbf{q}_t = \mathbf{U} \mathbf{z}_t - \mathbf{U} \mathbf{V}_t \mathbf{U}^T \mathbf{p}_t + \bm{\epsilon}_t
\label{eq:lowrank}
\end{equation}
Here,  
the $\bm{\epsilon}_t \in \mathbb{R}^N$ again reflect statistical noise in the observed demands, the 
$\mathbf{z}_t \in \mathbb{R}^d$ explain the variation in baseline demand over time, 
and the (asymmetric) matrices $\mathbf{V}_t \in \mathbb{R}^{d \times d}$ specify latent changes in the demand-price relationship over time.  Under this model, the aggregate demand for product $i$ at time $t$ is governed by the prices of all products, weighted by their current feature-similarity to product $i$.  To ensure our revenue-optimization remains convex, we restrict the adversary to choices that satisfy $\mathbf{V}_t \succeq 0$ for all $t$.  Note that while the structural variation in our model is  assumed to be low-rank, the noise in the observed demands may be intrinsically $N$-dimensional. 
In each round, $\mathbf{p}_t$ and  $\mathbf{q}_t$ are the only quantities observed, while $\bm{\epsilon}_t, \mathbf{z}_t, \mathbf{V}_t$ all remain unknown (and we consider both cases where the product features $\mathbf{U}$ are known or unknown). 
In \S\ref{sec:empiricalrank}, we verify that our low-rank assumption accurately describes real historical demand data.

\tocless\section{Methods}
\label{sec:methods}

Our basic dynamic pricing strategy is to employ the \emph{gradient-descent without a gradient} (\flaxmanabbrev{}) online bandit optimization technique of \citet{Flaxman05}.  While a naive application of this algorithm produces regret dependent on the number of products $N$, we ensure the updates of this method are only applied in the $d$-dimensional subspace spanned by $\mathbf{U}$, which leads to regret bounds that depend only on $d$ rather than $N$.  When $\mathbf{U}$ is unknown, this subspace is simultaneously estimated online, in a somewhat similar fashion to the approach of \citet{Hazan16} for online learning with low-rank experts.  If we define $\mathbf{x} = \mathbf{U}^T \mathbf{p}   \in \mathbb{R}^d$, then 
  under the low-rank model in (\ref{eq:lowrank}) with $\mathbb{E}[\bm{\epsilon}_t] = 0$, the expected value of our revenue-objective in round $t$ can be expressed as:
\begin{align*} \mathbb{E}_{\bm{\epsilon}}[R_t(\mathbf{p})]  = \mathbf{p}^T \mathbf{U} \mathbf{V}_t \mathbf{U}^T \mathbf{p} -  \mathbf{p}^T \mathbf{U} \mathbf{z}_t  = \mathbf{x}^T \mathbf{V}_t  \mathbf{x} - \mathbf{x}^T \mathbf{z}_t  := f_t(\mathbf{x}) \numberthis
\label{eq:lowdimensionalequivalence}
\end{align*} 
As this problem's intrinsic dimensionality is only $d$, we can maximize expected revenues by merely considering a  restricted set of $d$-dimensional actions $\mathbf{x}$  and functions $f_t$ over projected constraint set:
\begin{equation}
\mathbf{U}^T(\mathcal{S}) = \big\{ \mathbf{x} \in \mathbb{R}^d : \mathbf{x} = \mathbf{U}^T \mathbf{p} \ \text{for some }  \mathbf{p}  \in \mathcal{S} \big\}
\end{equation}

\tocless\subsection{Products with Known Features}
\label{sec:known} 

In certain markets, it is clear how to featurize products \cite{Cohen16}.  Under the low-rank model in (\ref{eq:lowrank}) when $\mathbf{U}$ is given, we can apply the \algfirstabbrev{} method (Algorithm \ref{alg:first}) to select prices.  This algorithm employs subroutines \Call{FindPrice}{} and \Call{Projection}{} which both solve convex optimization problems in order to compute certain projections.  Here, $ \mathcal{B}_d =  \text{Unif}(\{\mathbf{x} \in \mathbb{R}^d : ||\mathbf{x}||_2 = 1\})$ denotes a  uniform distribution over surface of the unit sphere in $\mathbb{R}^d$.  

Intuitively, our algorithm adapts \flaxmanabbrev{} to select low-dimensional actions $\mathbf{x}_t \in \mathbb{R}^d$ at each time point, and then seeks out a feasible price vector $\mathbf{p}_t$ corresponding to the chosen $\mathbf{x}_t$.  Note that when $d \ll N$, there are potentially many price-vectors $\mathbf{p} \in \mathbb{R}^N$ that map to the same low-dimensional vector $\mathbf{x} \in \mathbb{R}^d$ via $\mathbf{U}^T$.  Out of these, we select the one that is closest to our previously-chosen prices (via \Call{FindPrice}{}), ensuring additional stability in our dynamic pricing procedure.  
In practice, the initial prices $\mathbf{p}_0$ should be selected based on external knowledge or historical demand data. 

\begin{figure}[tb]
\begin{minipage}{0.56\textwidth}

\begin{algorithm}[H]
 \caption{ \ \algfirstabbrev{} \newline { (Online Pricing Optimization with Known Features) }} 
\label{alg:first}
\begin{algorithmic}[1]
\Require  $\eta, \delta, \alpha > 0$,  $\mathbf{U} \in \mathbb{R}^{N \times d}$, initial prices $\mathbf{p}_0 \in \mathcal{S}$
\item[] \vspace*{-3mm}
\Ensure Prices $\mathbf{p}_1, \dots, \mathbf{p}_T$ to maximize revenue 
\item[] \vspace*{-2mm}
\State Set prices to $\mathbf{p}_0 \in \mathcal{S}$ and observe $\mathbf{q}_0(\mathbf{p}_0), R_0(\mathbf{p}_0)$
\item[] \vspace*{-3mm}
\State Define $\mathbf{x}_1 =  \mathbf{U}^T \mathbf{p}_0$
\item[] \vspace*{-3mm}
\NoDo
\For{ $t =1, \dots, T$: } 
\item[] \vspace*{-3mm}
\State $\bm{\xi}_t \sim \text{Unif}(\{\mathbf{x} \in \mathbb{R}^d : ||\mathbf{x}||_2 = 1\})$
\item[] \vspace*{-3mm}
\State  $\widetilde{\mathbf{x}}_t  :=  \mathbf{x}_t  + \delta \bm{\xi}_t$
\item[] \vspace*{-3mm}
\State  Set prices: {$\mathbf{p}_t = $ \textproc{FindPrice}$(\widetilde{\mathbf{x}}_t, \mathbf{U}, \mathcal{S}, \mathbf{p}_{t-1})$} \item[ \hspace*{9.6mm} and observe $\mathbf{q}_t(\mathbf{p}_t), R_t(\mathbf{p}_t)$]
\item[] \vspace*{-3mm}
\State  $\mathbf{x}_{t+1} = $ \textproc{Projection}$(\mathbf{x}_t - \eta R_t(\mathbf{p}_t) \bm{\xi}_t$, $\alpha$, $\mathbf{U}$, $\mathcal{S})$
\item[] \vspace*{-3mm}
\EndFor
\end{algorithmic}
 \end{algorithm}

\end{minipage} \hspace*{0.015\textwidth}
\begin{minipage}{0.42\textwidth}

\begin{algorithm}[H]
\caption{ \ \textsc{FindPrice}($\mathbf{x}$; $\mathbf{U}, \mathcal{S}, \mathbf{p}_{t-1}$)}\label{alg:findprice}
\begin{algorithmic}[1]
\Require   $\mathbf{x} \in \mathbb{R}^d$, $\mathbf{U} \in \mathbb{R}^{N \times d}$, 
\item[ \hspace*{10mm} convex $\mathcal{S} \subset \mathbb{R}^N$, $\mathbf{p}_{t-1} \in  \mathbb{R}^N$ ]
\item[] \vspace*{-2.2mm} 
\Ensure $\displaystyle \argmin_{\mathbf{p} \in \mathcal{S}} || \mathbf{p} - \mathbf{p}_{t-1} ||_2$ 
\item[ \hspace*{12.4mm}  subject to:  \   $\mathbf{U}^T \mathbf{p} = \mathbf{x}$]
\end{algorithmic}
 \end{algorithm}
 
  \vspace*{-6.5mm}

\begin{algorithm}[H]
\caption{ \ \textsc{Projection}($\mathbf{x}$, $\alpha$, $\mathbf{U}$, $\mathcal{S}$)}\label{alg:projecttofeasible}
\begin{algorithmic}[1]
\Require $\mathbf{x} \in \mathbb{R}^d$, $\alpha > 0$, $\mathbf{U} \in \mathbb{R}^{N \times d}$, 
\item[\hspace*{10.2mm}  convex set $\mathcal{S} \subset \mathbb{R}^N$ ]
\item[] \vspace*{-2.2mm} 
\Ensure \ $(1-\alpha) \mathbf{U}^T \widehat{\mathbf{p}}$  
\item[with \ $\displaystyle \widehat{\mathbf{p}} := \argmin_{\mathbf{p} \in \mathcal{S}} \big|\big|(1-\alpha) \mathbf{U}^T \mathbf{p} - \mathbf{x}\big|\big|_2 $]
\end{algorithmic}
\end{algorithm}

\end{minipage}
\end{figure}

Under mild conditions, Theorem \ref{cor:knownu} below states that the \algfirstabbrev{} algorithm incurs $O(T^{3/4} \sqrt{d})$ regret when product features are a priori known.  This result is derived from Lemma \ref{lem:grad} which shows that Step 7 of our algorithm corresponds (in expectation) to online projected gradient descent on a smoothed version of our objective defined as:
\begin{equation}
\widehat{f}_t(\mathbf{x}) = \mathbb{E}_{\bm{\zeta}} \big[ f_t(\mathbf{x} + \bm{\zeta} ) \big]
\label{eq:smoothed}
\end{equation}
where $\bm{\zeta}$ is sampled uniformly from within the unit sphere in $\mathbb{R}^d$, and
$f_t$ is defined in (\ref{eq:lowdimensionalequivalence}).   
We bound the regret of our pricing algorithm under the following assumptions (which ensure the revenue functions are bounded/smooth and the set of feasible prices is bounded/well-scaled): 
\vspace*{-0.5em}
\begin{itemize}  \setlength\itemsep{-0.1em}
\refstepcounter{assumptioncount} \label{assbz}
\item [(A\arabic{assumptioncount})] $||\mathbf{z}_t||_2 \le b$ \ \ for $t = 1,\dots,T$  \ 
\refstepcounter{assumptioncount} \label{assbV}
\item [(A\arabic{assumptioncount})] $||\mathbf{V}_t||_\text{op} \le b$ \ for all $t$ \ ($|| \cdot ||_\text{op} $ denotes spectral norm) 
\refstepcounter{assumptioncount} \label{assTsec}
\item [(A\arabic{assumptioncount})] $T > \frac{9}{4}d^2$
\refstepcounter{assumptioncount} \label{assUorth} 
\item [(A\arabic{assumptioncount})] $\mathbf{U}$ is an \emph{orthogonal} matrix such that $\mathbf{U}^T \mathbf{U} = \mathbf{I}_{d \times d}$
\refstepcounter{assumptioncount} \label{assballS}
\item [(A\arabic{assumptioncount})] ${\mathcal{S} = \{\mathbf{p} \in \mathbb{R}^N : ||\mathbf{p}||_2 \le r \}}$   \ \ (with $r \ge 1$)
\end{itemize}
\vspace*{-0.5em}

Requiring that the columns of $\mathbf{U}$ form an orthonormal basis for $\mathbb{R}^d$, condition (A\ref{assUorth}) can be easily enforced (when ${d < N}$) by first orthonormalizing the product features.    Note that this orthogonality condition does not restrict the overall class of models specified in (\ref{eq:lowrank}), and describes the case where the features used to encode each product are uncorrelated between products (i.e.\ a minimally-redundant encoding) and have been normalized across all products.  
To see why (A\ref{assUorth}) does not limit the allowed price-demand relationships, consider that we can re-express any (non-orthogonal) $\mathbf{U}  = \mathbf{O} \mathbf{P}$ in terms of orthogonal $\mathbf{O} \in \mathbb{R}^{N \times d}$.  The demand model in (\ref{eq:lowrank}) can then be equivalently expressed in terms of $\mathbf{z}'_t = \mathbf{P} \mathbf{z}_t, 
\mathbf{V}'_t =  \mathbf{P} \mathbf{V}_t \mathbf{P}^T$ (after appropriately redefining the constant $b$ in (A\ref{assbz})-(A\ref{assbV})), 
since: 
$  \mathbf{U} \mathbf{z}_t - \mathbf{U} \mathbf{V}_t \mathbf{U}^T \mathbf{p}_t
= \mathbf{O} \mathbf{z}'_t - \mathbf{O} \mathbf{V}'_t \mathbf{O}^T \mathbf{p}_t$.  
To further simplify our analysis, we also from now adopt (A\ref{assballS}) presuming the constraint set of feasible product-prices is a centered Euclidean ball (implying our $\mathbf{p}_t, \mathbf{q}_t$ vectors now represent appropriately shifted/scaled prices and demands).

\begin{thm} \label{cor:knownu}
Under assumptions (A\ref{assbz})-(A\ref{assballS}), if we choose $\eta = \frac{1}{b(1+d)\sqrt{T}}$, $\delta = T^{-1/4}\sqrt{\frac{d r^2 (1+r)}{9 r + 6}}$, $\alpha = \frac{\delta}{r}$, then there exists $C > 0$ such that for any $\mathbf{p} \in \mathcal{S}$:
\begin{equation*}
\mathbb{E}_{\bm{\epsilon}, \bm{\xi}} \left[ \sum_{t=1}^T R_t(\mathbf{p}_t) -  \sum_{t=1}^T R_t(\mathbf{p})  \right] \le C  br (r+1)  T^{3/4}  d^{1/2}   
\end{equation*}
for the prices $\mathbf{p}_1, \dots, \mathbf{p}_T$ selected by the \algfirstabbrev{} algorithm.
\end{thm}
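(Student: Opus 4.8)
The strategy is to view \algfirstabbrev{} as the gradient-descent-without-a-gradient method of \citet{Flaxman05} run on the $d$-dimensional surrogate problem, so the first step is to reduce the $N$-dimensional regret to a regret over $\mathbb{R}^d$ and then apply the usual bandit-convex-optimization analysis. Using (A\ref{assUorth})--(A\ref{assballS}), the projected feasible set $\mathbf{U}^T(\mathcal{S})$ equals the ball $\mathcal{K}:=\{\mathbf{x}\in\mathbb{R}^d:\|\mathbf{x}\|_2\le r\}$, because $\|\mathbf{U}^T\mathbf{p}\|_2\le\|\mathbf{p}\|_2\le r$ and every $\mathbf{x}\in\mathcal{K}$ is $\mathbf{U}^T(\mathbf{U}\mathbf{x})$ with $\mathbf{U}\mathbf{x}\in\mathcal{S}$ ($\mathbf{U}$ orthogonal). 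Hence \textsc{FindPrice}$(\widetilde{\mathbf{x}}_t,\dots)$ always returns a feasible $\mathbf{p}_t\in\mathcal{S}$ with $\mathbf{U}^T\mathbf{p}_t=\widetilde{\mathbf{x}}_t$ whenever $\|\widetilde{\mathbf{x}}_t\|_2\le r$, while \textsc{Projection} keeps each $\mathbf{x}_t$ in the shrunken ball $(1-\alpha)\mathcal{K}$, so that $\widetilde{\mathbf{x}}_t=\mathbf{x}_t+\delta\bm{\xi}_t$ stays in $\mathcal{K}$ precisely because $\alpha r=\delta$. Taking expectation over $\bm{\epsilon}_t$ and using \eqref{eq:lowrank} together with \eqref{eq:lowdimensionalequivalence}, $R_t(\mathbf{p}_t)=f_t(\widetilde{\mathbf{x}}_t)-\langle\bm{\epsilon}_t,\mathbf{p}_t\rangle$, so $\mathbb{E}_{\bm{\epsilon}}[R_t(\mathbf{p}_t)]=f_t(\widetilde{\mathbf{x}}_t)$ and $\mathbb{E}_{\bm{\epsilon}}[R_t(\mathbf{p})]=f_t(\mathbf{x})$ for $\mathbf{x}:=\mathbf{U}^T\mathbf{p}\in\mathcal{K}$, and it suffices to bound $\mathbb{E}_{\bm{\xi}}\sum_{t}\bigl(f_t(\widetilde{\mathbf{x}}_t)-f_t(\mathbf{x})\bigr)$.

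Next I would split this quantity in the standard way into four parts: the \emph{exploration cost} $\sum_t\bigl(f_t(\widetilde{\mathbf{x}}_t)-f_t(\mathbf{x}_t)\bigr)$; the \emph{smoothing bias} $\sum_t\bigl(f_t(\mathbf{x}_t)-\widehat f_t(\mathbf{x}_t)\bigr)$, with $\widehat f_t$ the ball-smoothing of $f_t$ at the exploration radius $\delta$ (cf.\ \eqref{eq:smoothed}); the \emph{online-learning regret on the smoothed functions} $\sum_t\bigl(\widehat f_t(\mathbf{x}_t)-\widehat f_t((1-\alpha)\mathbf{x})\bigr)$; and the \emph{shrinkage cost} $\sum_t\bigl(\widehat f_t((1-\alpha)\mathbf{x})-\widehat f_t(\mathbf{x})\bigr)+\sum_t\bigl(\widehat f_t(\mathbf{x})-f_t(\mathbf{x})\bigr)$. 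Since $f_t(\mathbf{y})=\mathbf{y}^T\mathbf{V}_t\mathbf{y}-\mathbf{y}^T\mathbf{z}_t$ is quadratic and both the perturbation $\delta\bm{\xi}_t$ and the smoothing are spherically symmetric, the linear part of $f_t$ averages out, so the exploration-cost and smoothing-bias terms are $O(Tb\delta^2)$ (using $\|\mathbf{V}_t\|_{\text{op}}\le b$ from (A\ref{assbV})) rather than $O(Tbr\delta)$. The shrinkage cost involves the \emph{fixed} comparator $\mathbf{x}$, so there the linear part survives; by (A\ref{assbz})--(A\ref{assbV}), $\|\mathbf{x}\|_2\le r$, and $\alpha r=\delta$, it is $O\bigl(Tb(r+1)\delta\bigr)$. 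Convexity of $f_t$, hence of $\widehat f_t$, follows from $\mathbf{V}_t\succeq 0$ and is what lets the third term be handled by gradient descent.

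For that online-learning term I would invoke Lemma~\ref{lem:grad}: conditionally on the history, Step~7 of \algfirstabbrev{} performs an online projected gradient-descent step over $(1-\alpha)\mathcal{K}$ on $\widehat f_t$ with an estimate of $\nabla\widehat f_t(\mathbf{x}_t)$ proportional to $R_t(\mathbf{p}_t)\bm{\xi}_t$, so the standard OGD inequality plus convexity bounds this term by a multiple of $\tfrac{d}{\delta}\bigl(\tfrac{D^2}{2\eta}+\tfrac{\eta}{2}\sum_t\mathbb{E}[R_t(\mathbf{p}_t)^2]\bigr)$ with $D=O(r)$ the diameter of $(1-\alpha)\mathcal{K}$. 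The crucial observation---and the reason the final bound is free of $N$---is that under the low-rank model $R_t(\mathbf{p}_t)=f_t(\widetilde{\mathbf{x}}_t)-\langle\bm{\epsilon}_t,\mathbf{p}_t\rangle$, so $\mathbb{E}[R_t(\mathbf{p}_t)^2]\le 2f_t(\widetilde{\mathbf{x}}_t)^2+2\,\mathbf{p}_t^T\mathrm{Cov}(\bm{\epsilon}_t)\mathbf{p}_t\le O\bigl(b^2r^2(r+1)^2\bigr)+2r^2\|\mathrm{Cov}(\bm{\epsilon}_t)\|_{\text{op}}$, which depends only on $b,r$ and the noise variance---not on the number of products. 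Finally I would substitute the prescribed $\eta,\delta,\alpha$: here $\delta=\Theta(T^{-1/4})$ is the familiar bandit-convex-optimization scaling, calibrated (via the factor $dr^2(1+r)/(9r+6)$, i.e.\ $\delta=\Theta(r\sqrt d\,T^{-1/4})$) so that the shrinkage cost $O(Tb(r+1)\delta)$ becomes of order $br(r+1)T^{3/4}d^{1/2}$, the dominant term; the exploration/smoothing cost is then only $O(br^2 dT^{1/2})$, and assumption (A\ref{assTsec}), $T>\tfrac{9}{4}d^2$, is exactly what keeps both it and the step-size-induced part of the online-learning term below the dominant $T^{3/4}d^{1/2}$ order. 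Summing the four bounds yields the claimed $O\bigl(br(r+1)T^{3/4}d^{1/2}\bigr)$ regret.

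I expect the main difficulty to lie not in any single estimate but in the coupled bookkeeping: the exploration radius $\delta$ must be small enough that $\widetilde{\mathbf{x}}_t$ stays feasible for \textsc{FindPrice}, which is what forces $\alpha\ge\delta/r$; the smoothing radius implicit in \eqref{eq:smoothed} must match the perturbation radius so Lemma~\ref{lem:grad}'s gradient identity applies; and one must verify cleanly that $\mathbb{E}[R_t(\mathbf{p}_t)^2]$---and hence every term in the regret---does not scale with $N$. This last point is the real content of the theorem, and it is precisely where the low-rank structure \eqref{eq:lowrank} (collapsing $\langle\mathbf{q}_t,\mathbf{p}_t\rangle$ to a function of the $d$-dimensional $\widetilde{\mathbf{x}}_t$ plus an $N$-independent noise term) and the finite-variance noise assumption are used; the remainder---the OGD inequality, the Lipschitz and smoothing-bias estimates, and the parameter substitution---is routine.
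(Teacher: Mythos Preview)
Your proposal is correct, and the underlying logic matches the paper's: reduce to the $d$-dimensional problem over $\mathcal{K}=\mathbf{U}^T(\mathcal{S})$, then run the Flaxman analysis there. However, the paper's proof is much terser than yours. It proceeds by verifying that assumptions (A\ref{assbz})--(A\ref{assballS}) instantiate the hypotheses of the more general Theorem~\ref{thm:regretknownu} with the specific constants $r_{\smallup}=r_{\smalldown}=r$, $B=rb(1+r)$, and $L=(2r+1)b$; Lemma~\ref{thm:xball} supplies the geometry $\mathbf{U}^T(\mathcal{S})=\{\|\mathbf{x}\|_2\le r\}$, and the bounds on $|f_t|$ and $\|\nabla f_t\|$ follow immediately from (A\ref{assbz})--(A\ref{assbV}). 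Theorem~\ref{thm:regretknownu} in turn is a one-line reduction to Flaxman's black-box result (Theorem~\ref{thm:flax}). So the paper never opens up the four-term decomposition you describe; it simply checks boundedness and Lipschitzness and quotes \citet{Flaxman05}.

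What your route buys: by working from first principles you (i) exploit the quadratic structure of $f_t$ to get $O(Tb\delta^2)$ rather than $O(TL\delta)$ for the exploration and smoothing terms, which is a genuine refinement the paper does not use, and (ii) explicitly isolate the second-moment term $\mathbb{E}[R_t(\mathbf{p}_t)^2]$ and note its dependence on $\|\mathrm{Cov}(\bm{\epsilon}_t)\|_{\text{op}}$. The paper's black-box invocation glosses over this last point---it bounds only $|\mathbb{E}[R_t(\mathbf{p})]|\le B$, whereas Flaxman's variance argument really needs a bound on $|R_t(\mathbf{p}_t)|$ or its second moment---so your treatment is in fact more careful here. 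What the paper's route buys is brevity: once the constants $B,L,r_{\smallup},r_{\smalldown}$ are identified, the result is immediate, and all the coupled bookkeeping you flag (feasibility of $\widetilde{\mathbf{x}}_t$, matching smoothing and perturbation radii, the role of $\alpha=\delta/r$) is delegated to the Flaxman reference.
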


Theorem \ref{thm:regretknownu} shows the same $O(T^{3/4} \sqrt{d})$ regret bound holds for the \algfirstabbrev{} algorithm under relaxed conditions solely based on the revenue functions and feasible prices rather than the specific properties of our low-rank structure assumed in (A\ref{assbz})-(A\ref{assballS}). 

\tocless\subsection{Products with Latent Features}  
\label{sec:latent}

In many settings, it is not clear how to best represent products as feature-vectors.  Once again adopting the low-rank demand model in (\ref{eq:lowrank}), we now consider the case where $\mathbf{U}$ is unknown and must be estimated.  We presume the orthogonality condition (A\ref{assUorth}) holds throughout this section (recall this does not restrict the class of allowed models), which implies $\mathbf{U}$ is both an isometry as well as the right-inverse of $\mathbf{U}^T$.  Thus,  given any low-dimensional action ${\mathbf{x} \in \mathbf{U}^T(\mathcal{S})}$, we can set the corresponding prices as ${\mathbf{p} = \mathbf{U} \mathbf{x}}$ such that  $\mathbf{U}^T \mathbf{p} =\mathbf{x}$.  Lemma \ref{lem:contract} shows that this price selection-method is feasible and corresponds to changing Step 6 in the \algfirstabbrev{} algorithm to  ${\mathbf{p}_t =  \text{\textproc{FindPrice}}(\widetilde{\mathbf{x}}_t,  \mathbf{U}, \mathcal{S}, \mathbf{0})}$, where the next price is regularized toward the origin rather than the previous price $\mathbf{p}_{t-1}$.  Because prices $\mathbf{p}_t$ are multiplied by the noise term $\bm{\epsilon}_t$ within each revenue-function $R_t$, choosing minimum-norm prices can help reduce variance in the total revenue generated by our approach.  As $\mathbf{U}$ is unknown, we instead employ an estimate $\widehat{\mathbf{U}} \in \mathbb{R}^{N \times d}$, which is always restricted to be an orthogonal matrix.  

\begin{lem} 
For any orthogonal matrix $\widehat{\mathbf{U}}$ and any ${\mathbf{x} \in  \widehat{\mathbf{U}}^T (\mathcal{S})}$, define ${\widehat{\mathbf{p}} = \widehat{\mathbf{U}} \mathbf{x} \in \mathbb{R}^N}$.
Under (A\ref{assballS}): ${\widehat{\mathbf{p}} \in \mathcal{S}}$ and $\widehat{\mathbf{p}} = $ \textproc{FindPrice}(${\mathbf{x},  \widehat{\mathbf{U}}, \mathcal{S}, \mathbf{0}}$).   
\label{lem:contract}
\end{lem}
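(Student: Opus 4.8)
The plan is to check three things in sequence: that $\widehat{\mathbf{p}}$ satisfies the equality constraint $\widehat{\mathbf{U}}^T\mathbf{p}=\mathbf{x}$ appearing inside \textproc{FindPrice}, that $\widehat{\mathbf{p}}\in\mathcal{S}$, and that $\widehat{\mathbf{p}}$ has the smallest Euclidean norm among all $\mathbf{p}$ meeting the equality constraint (so it is exactly what \textproc{FindPrice} with $\mathbf{p}_{t-1}=\mathbf{0}$ returns).

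First, feasibility for the equality constraint is immediate from the orthogonality of $\widehat{\mathbf{U}}$: since $\widehat{\mathbf{U}}^T\widehat{\mathbf{U}}=\mathbf{I}_{d\times d}$, we have $\widehat{\mathbf{U}}^T\widehat{\mathbf{p}}=\widehat{\mathbf{U}}^T\widehat{\mathbf{U}}\mathbf{x}=\mathbf{x}$. Second, for membership in $\mathcal{S}$, use that $\mathbf{x}\in\widehat{\mathbf{U}}^T(\mathcal{S})$ to pick some $\mathbf{p}'\in\mathcal{S}$ with $\widehat{\mathbf{U}}^T\mathbf{p}'=\mathbf{x}$. Because the columns of $\widehat{\mathbf{U}}$ are orthonormal, $\widehat{\mathbf{U}}$ is an isometry and $\widehat{\mathbf{U}}^T$ is $1$-Lipschitz, so $\|\widehat{\mathbf{p}}\|_2=\|\widehat{\mathbf{U}}\mathbf{x}\|_2=\|\mathbf{x}\|_2=\|\widehat{\mathbf{U}}^T\mathbf{p}'\|_2\le\|\mathbf{p}'\|_2\le r$, where the last step is (A\ref{assballS}); hence $\widehat{\mathbf{p}}\in\mathcal{S}$.

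Third, and this is the only step with any content, I would decompose an arbitrary $\mathbf{p}$ satisfying $\widehat{\mathbf{U}}^T\mathbf{p}=\mathbf{x}$ along the column space of $\widehat{\mathbf{U}}$ and its orthogonal complement: write $\mathbf{p}=\widehat{\mathbf{U}}\mathbf{a}+\mathbf{w}$ with $\widehat{\mathbf{U}}^T\mathbf{w}=\mathbf{0}$. The equality constraint forces $\mathbf{a}=\widehat{\mathbf{U}}^T\mathbf{p}=\mathbf{x}$, so $\mathbf{p}=\widehat{\mathbf{p}}+\mathbf{w}$ with $\mathbf{w}\perp\widehat{\mathbf{p}}$, and therefore $\|\mathbf{p}\|_2^2=\|\widehat{\mathbf{p}}\|_2^2+\|\mathbf{w}\|_2^2\ge\|\widehat{\mathbf{p}}\|_2^2$, with equality iff $\mathbf{w}=\mathbf{0}$. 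Thus $\widehat{\mathbf{p}}$ is the unique minimum-norm solution of $\widehat{\mathbf{U}}^T\mathbf{p}=\mathbf{x}$ over all of $\mathbb{R}^N$; since by the previous paragraph it already lies in $\mathcal{S}$, it is a fortiori the unique minimizer of $\|\mathbf{p}-\mathbf{0}\|_2$ over $\{\mathbf{p}\in\mathcal{S}:\widehat{\mathbf{U}}^T\mathbf{p}=\mathbf{x}\}$, i.e.\ the output of \textproc{FindPrice}$(\mathbf{x},\widehat{\mathbf{U}},\mathcal{S},\mathbf{0})$.

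I do not expect a genuine obstacle here; the only point requiring slight care is that the constraint $\mathbf{p}\in\mathcal{S}$ inside \textproc{FindPrice} is not vacuous, but the second step shows the global (unconstrained) minimum-norm solution $\widehat{\mathbf{p}}$ already satisfies it, so intersecting with $\mathcal{S}$ does not move the argmin.
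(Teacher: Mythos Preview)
Your proposal is correct and follows essentially the same approach as the paper's proof. The paper's version is terser: it invokes the proof of Lemma~\ref{thm:xball} (which establishes $\|\widehat{\mathbf{U}}\widehat{\mathbf{U}}^T\mathbf{p}\|_2\le\|\mathbf{p}\|_2$ via an orthogonal extension of $\widehat{\mathbf{U}}$) to get both $\widehat{\mathbf{p}}\in\mathcal{S}$ and the minimum-norm property in one stroke, whereas you unpack the same orthogonal-projection fact directly via the decomposition $\mathbf{p}=\widehat{\mathbf{p}}+\mathbf{w}$ with $\mathbf{w}\perp\text{col}(\widehat{\mathbf{U}})$; the underlying content is identical.
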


\textbf{Product Features with Known Span.} \ 
In Theorem \ref{thm:modified},  we consider a minorly \emph{modified} \algfirstabbrev{} algorithm where price-selection in Step 6 is done using 
$\mathbf{p}_t = \widehat{\mathbf{U}} \widetilde{\mathbf{x}}_t$ rather than being regularized toward the previous price $\mathbf{p}_{t-1}$.  Even without knowing the true latent features, this result implies that the regret of our modified \algfirstabbrev{} algorithm may still be bounded independently of the number of products $N$, as long as  $\widehat{\mathbf{U}}$ accurately estimates the column span of $\mathbf{U}$.

\begin{thm}
Suppose $\text{span}(\widehat{\mathbf{U}}) = \text{span}(\mathbf{U})$, i.e.\ our orthogonal estimate has the same column-span as the underlying (rank $d$) latent  product-feature matrix.  Let  $\mathbf{p}_1, \dots, \mathbf{p}_T \in \mathcal{S}$ denote the prices selected by our modified \algfirstabbrev{} algorithm with $\widehat{\mathbf{U}}$ used in place of the underlying $\mathbf{U}$ and $\eta, \delta, \alpha$ chosen as in Theorem \ref{cor:knownu}.  Under conditions (A\ref{assbz})-(A\ref{assballS}), there exists $C > 0$ such that for any $\mathbf{p} \in \mathcal{S}$:
\begin{equation*} 
\mathbb{E}_{\bm{\epsilon}, \bm{\xi}} \left[ \sum_{t=1}^T R_t(\mathbf{p}_t) -  \sum_{t=1}^T R_t(\mathbf{p})  \right] 
\le   C  br (r+1)  T^{3/4}  d^{1/2} 
\end{equation*}
 \label{thm:modified}
\end{thm}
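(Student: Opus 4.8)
The plan is to reduce Theorem~\ref{thm:modified} to Theorem~\ref{cor:knownu} by exploiting the fact that any orthogonal $\widehat{\mathbf{U}}$ with the same column span as $\mathbf{U}$ is itself a valid latent-feature matrix for the same demand process, up to an orthogonal change of the $d$-dimensional latent coordinates. First I would observe that since $\widehat{\mathbf{U}}$ and $\mathbf{U}$ are both $N\times d$ with orthonormal columns and identical span, the matrix $\mathbf{W} := \mathbf{U}^T\widehat{\mathbf{U}} \in \mathbb{R}^{d\times d}$ is orthogonal and $\widehat{\mathbf{U}} = \mathbf{U}\mathbf{W}$. Setting $\mathbf{z}'_t := \mathbf{W}^T\mathbf{z}_t$ and $\mathbf{V}'_t := \mathbf{W}^T\mathbf{V}_t\mathbf{W}$, the identities $\mathbf{U}\mathbf{z}_t = \widehat{\mathbf{U}}\mathbf{z}'_t$ and $\mathbf{U}\mathbf{V}_t\mathbf{U}^T = \widehat{\mathbf{U}}\mathbf{V}'_t\widehat{\mathbf{U}}^T$ show that the demand model~(\ref{eq:lowrank}) with latent triple $(\mathbf{U},\mathbf{z}_t,\mathbf{V}_t)$ produces exactly the same $\mathbf{q}_t$ — for every price vector and every noise realization — as the low-rank model with triple $(\widehat{\mathbf{U}},\mathbf{z}'_t,\mathbf{V}'_t)$. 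Hence running the modified \algfirstabbrev{} algorithm with $\widehat{\mathbf{U}}$ on the true instance is literally an execution of the modified algorithm with ``its'' feature matrix $\widehat{\mathbf{U}}$ on the low-rank instance $(\widehat{\mathbf{U}},\mathbf{z}'_t,\mathbf{V}'_t)$.

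Next I would verify that the reparameterized triple still satisfies (A\ref{assbz})--(A\ref{assballS}): $\widehat{\mathbf{U}}$ is orthogonal by hypothesis, so (A\ref{assUorth}) holds; orthogonality of $\mathbf{W}$ gives $\|\mathbf{z}'_t\|_2 = \|\mathbf{z}_t\|_2 \le b$ and $\|\mathbf{V}'_t\|_\text{op} = \|\mathbf{V}_t\|_\text{op} \le b$, so (A\ref{assbz})--(A\ref{assbV}) hold with the same $b$; $\mathbf{x}^T\mathbf{V}'_t\mathbf{x} = (\mathbf{W}\mathbf{x})^T\mathbf{V}_t(\mathbf{W}\mathbf{x})\ge 0$, so $\mathbf{V}'_t\succeq 0$ and each $R_t$ remains convex; (A\ref{assTsec}) and (A\ref{assballS}) are untouched. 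Because $\mathcal{S}$ is the radius-$r$ ball and $\widehat{\mathbf{U}}$ is an isometry, $\widehat{\mathbf{U}}^T(\mathcal{S})$ equals the radius-$r$ ball in $\mathbb{R}^d$ — the same projected constraint set analyzed in Theorem~\ref{cor:knownu} — and $\mathbb{E}_{\bm{\epsilon}}[R_t(\mathbf{p})] = f'_t(\widehat{\mathbf{U}}^T\mathbf{p})$ with $f'_t(\mathbf{x}) := \mathbf{x}^T\mathbf{V}'_t\mathbf{x} - \mathbf{x}^T\mathbf{z}'_t$, exactly as in~(\ref{eq:lowdimensionalequivalence}).

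It then remains to check that the argument behind Theorem~\ref{cor:knownu} applies to the \emph{modified} algorithm. That argument uses Step~6 only through two structural facts about the chosen price: (i) $\widehat{\mathbf{U}}^T\mathbf{p}_t = \widetilde{\mathbf{x}}_t$, which is what Lemma~\ref{lem:grad} needs to identify Step~7 with online projected gradient descent on the smoothed version of $f'_t$ (cf.~(\ref{eq:smoothed})); and (ii) $\mathbf{p}_t\in\mathcal{S}$. For the modified rule $\mathbf{p}_t = \widehat{\mathbf{U}}\widetilde{\mathbf{x}}_t$, fact (i) holds because $\widehat{\mathbf{U}}^T\widehat{\mathbf{U}} = \mathbf{I}_{d\times d}$, and fact (ii) is precisely Lemma~\ref{lem:contract}, applicable because the shrinkage by $(1-\alpha)$ in \textsc{Projection} together with $\alpha = \delta/r$ keeps $\|\widetilde{\mathbf{x}}_t\|_2 \le r$, i.e.\ $\widetilde{\mathbf{x}}_t \in \widehat{\mathbf{U}}^T(\mathcal{S})$. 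Any comparator $\mathbf{p}\in\mathcal{S}$ maps to $\widehat{\mathbf{U}}^T\mathbf{p}$ with $\|\widehat{\mathbf{U}}^T\mathbf{p}\|_2 \le r$, so it is a legitimate $d$-dimensional comparator with per-round value $f'_t(\widehat{\mathbf{U}}^T\mathbf{p})$. Feeding $\mathbf{z}'_t,\mathbf{V}'_t$ and the prescribed $\eta,\delta,\alpha$ through Theorem~\ref{cor:knownu} (whose proof, by (i)--(ii), goes through verbatim for the modified algorithm) then yields the claimed bound $C\, br(r+1)\,T^{3/4}d^{1/2}$.

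I expect the only genuinely delicate point to be confirming that replacing \textsc{FindPrice}$(\cdot,\cdot,\cdot,\mathbf{p}_{t-1})$ by $\mathbf{p}_t=\widehat{\mathbf{U}}\widetilde{\mathbf{x}}_t$ leaves every step of the proof of Theorem~\ref{cor:knownu} intact — i.e.\ that the regularization toward $\mathbf{p}_{t-1}$ is never invoked beyond facts (i)--(ii) and the feasibility/boundedness of the iterates — which is exactly what Lemma~\ref{lem:contract} and the orthogonality of $\widehat{\mathbf{U}}$ are set up to guarantee. Everything else is the orthogonal-invariance bookkeeping above, and all constants ($b$, $r$, $d$, $T$) are preserved by the rotation $\mathbf{W}$, so no terms depending on $N$ can enter.
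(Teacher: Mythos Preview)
Your proposal is correct and follows essentially the same route as the paper: both reduce the modified \algfirstabbrev{} run with $\widehat{\mathbf{U}}$ to the $d$-dimensional Flaxman analysis underlying Theorem~\ref{cor:knownu}, using that $\widehat{\mathbf{U}}^T(\mathcal{S})$ is the radius-$r$ ball and that the induced low-dimensional objectives are convex, bounded by $rb(1+r)$, and $(2r+1)b$-Lipschitz. The only cosmetic difference is packaging: you make the orthogonal change of latent coordinates $\mathbf{W}=\mathbf{U}^T\widehat{\mathbf{U}}$ explicit and verify (A\ref{assbz})--(A\ref{assbV}) for the rotated pair $(\mathbf{z}'_t,\mathbf{V}'_t)$, whereas the paper leaves this implicit by directly defining $f_{t,\widehat{\mathbf{U}}}(\mathbf{x}) = \mathbf{x}^T\widehat{\mathbf{U}}^T\mathbf{U}\mathbf{V}_t\mathbf{U}^T\widehat{\mathbf{U}}\mathbf{x} - \mathbf{x}^T\widehat{\mathbf{U}}^T\mathbf{U}\mathbf{z}_t$ (your $f'_t$, since $\widehat{\mathbf{U}}^T\mathbf{U}=\mathbf{W}^T$) and handling the comparator via the projection $\mathbf{p}^* = \mathbf{U}\mathbf{U}^T\widebar{\mathbf{p}}$ rather than your direct identity $\mathbb{E}_{\bm\epsilon}[R_t(\mathbf{p})]=f'_t(\widehat{\mathbf{U}}^T\mathbf{p})$.
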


\textbf{Features with Unknown Span and Noise-free Demands.} \
\label{sec:nonoise}
In practice, span($\mathbf{U}$) may be entirely unknown.  If we assume the adversary is restricted to strictly positive-definite $\mathbf{V}_t \succ 0$  for all $t$ and there is no statistical noise in the observed demands (i.e.\ $\mathbf{q}_t = \mathbf{U}\mathbf{z}_t - \mathbf{U} \mathbf{V}_t \mathbf{U}^T \mathbf{p}_t$ in each round), then Lemma \ref{lem:zerochance} below shows we can ensure  span($\mathbf{U}$) is revealed within the first $d$ observed demand vectors by simply adding a minuscule random perturbation to all of our initial prices selected in the first $d$ rounds.
Thus, even without knowing the latent product feature subspace, an absence of noise in the observed demands enables us to realize a low regret pricing strategy via the same modified \algfirstabbrev{} algorithm (applied after the first $d$ rounds).

\begin{lem}
Suppose that for $t = 1, \dots, T$: $\bm{\epsilon}_t = 0$ and $\mathbf{V}_t \succ 0$.  If each $\mathbf{p}_t$ is independently uniformly distributed within some (uncentered) Euclidean ball of strictly positive radius, then span$(\mathbf{q}_1,\dots, \mathbf{q}_d) = \text{span}(\mathbf{U})$ almost surely.  
\label{lem:zerochance}
\end{lem}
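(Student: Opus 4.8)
The plan is to reduce everything to a genericity statement about $d$ independent random vectors in $\mathbb{R}^d$. By (A\ref{assUorth}), which holds throughout this section, $\mathbf{U}^T\mathbf{U}=\mathbf{I}_{d\times d}$, so I can write each observed demand as $\mathbf{q}_t=\mathbf{U}\mathbf{w}_t$ with $\mathbf{w}_t:=\mathbf{z}_t-\mathbf{V}_t\mathbf{U}^T\mathbf{p}_t\in\mathbb{R}^d$ (using $\bm{\epsilon}_t=0$). Each $\mathbf{q}_t$ then lies in $\text{span}(\mathbf{U})$, a $d$-dimensional space, so $\text{span}(\mathbf{q}_1,\dots,\mathbf{q}_d)\subseteq\text{span}(\mathbf{U})$ deterministically; and applying $\mathbf{U}^T$ (which left-inverts $\mathbf{U}$) shows $\mathbf{q}_1,\dots,\mathbf{q}_d$ are linearly independent whenever $\mathbf{w}_1,\dots,\mathbf{w}_d$ are. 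Hence it suffices to show $\mathbf{w}_1,\dots,\mathbf{w}_d$ are linearly independent almost surely, since then $\text{span}(\mathbf{q}_1,\dots,\mathbf{q}_d)$ is a $d$-dimensional subspace of $\text{span}(\mathbf{U})$ and the two must coincide.

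The one distributional fact I need is that $\mathbf{y}_t:=\mathbf{U}^T\mathbf{p}_t$ has a law on $\mathbb{R}^d$ that is absolutely continuous with respect to Lebesgue measure. To see this, decompose $\mathbb{R}^N=\text{span}(\mathbf{U})\oplus\text{span}(\mathbf{U})^\perp$; the linear map $\mathbf{U}^T$ annihilates $\text{span}(\mathbf{U})^\perp$ and restricts to a linear isomorphism of $\text{span}(\mathbf{U})$ onto $\mathbb{R}^d$, so for any Lebesgue-null $A\subseteq\mathbb{R}^d$ the preimage $(\mathbf{U}^T)^{-1}(A)$ is the product of a Lebesgue-null subset of $\text{span}(\mathbf{U})$ with $\text{span}(\mathbf{U})^\perp$, hence has $N$-dimensional Lebesgue measure zero. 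Since $\mathbf{p}_t$ is uniform on a Euclidean ball of strictly positive radius, its law has a bounded density on $\mathbb{R}^N$, so $\Pr[\mathbf{y}_t\in A]=0$.

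With these in hand I would run an induction on $k=1,\dots,d$, showing $\mathbf{w}_k\notin\text{span}(\mathbf{w}_1,\dots,\mathbf{w}_{k-1})$ almost surely. Condition on $\mathbf{p}_1,\dots,\mathbf{p}_{k-1}$: this freezes $\mathbf{w}_1,\dots,\mathbf{w}_{k-1}$ as well as $\mathbf{z}_k,\mathbf{V}_k$ (an adversary may choose the round-$k$ parameters only from information available before round $k$), while $\mathbf{p}_k$ — hence $\mathbf{y}_k$ — keeps its original law by independence. Writing $W:=\text{span}(\mathbf{w}_1,\dots,\mathbf{w}_{k-1})$, a subspace of dimension at most $k-1\le d-1$, and using that $\mathbf{V}_k\succ 0$ is invertible, the event $\{\mathbf{w}_k\in W\}$ is exactly $\{\mathbf{y}_k\in\mathbf{V}_k^{-1}(\mathbf{z}_k-W)\}$, and $\mathbf{V}_k^{-1}(\mathbf{z}_k-W)$ is an affine subspace of $\mathbb{R}^d$ of dimension at most $d-1$, hence Lebesgue-null. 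By the previous paragraph this conditional probability is $0$; integrating and then union-bounding over $k=1,\dots,d$ gives linear independence of $\mathbf{w}_1,\dots,\mathbf{w}_d$ with probability one, which completes the argument.

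The proof is short, so there is no single deep obstacle; the two places that need care are (i) checking that the pushforward $\mathbf{U}^T\mathbf{p}_t$ stays absolutely continuous even though $\mathbf{U}^T$ maps onto a strictly lower-dimensional space when $N>d$, and (ii) arranging the conditioning so that the possibly history-dependent $\mathbf{z}_k,\mathbf{V}_k$ are frozen before the randomness of $\mathbf{p}_k$ is revealed. It is also here that the hypothesis $\mathbf{V}_t\succ 0$ (rather than merely $\mathbf{V}_t\succeq 0$) is essential: positive-definiteness is precisely what makes $\mathbf{V}_k^{-1}(\mathbf{z}_k-W)$ a proper, measure-zero affine subspace, whereas a singular $\mathbf{V}_k$ could force $\mathbf{V}_k\mathbf{y}_k$ into $W$ with positive probability.
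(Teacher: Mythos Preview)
Your proposal is correct and follows essentially the same route as the paper: write $\mathbf{q}_t=\mathbf{U}\mathbf{s}_t$ with $\mathbf{s}_t=\mathbf{z}_t-\mathbf{V}_t\mathbf{U}^T\mathbf{p}_t$, reduce equality of spans to linear independence of $\mathbf{s}_1,\dots,\mathbf{s}_d$, and argue inductively that $\mathbf{s}_t$ avoids the span of its predecessors because the induced law on $\mathbb{R}^d$ assigns measure zero to proper affine subspaces. Your treatment is in fact slightly more careful than the paper's in two spots: you argue absolute continuity of the pushforward $\mathbf{U}^T\mathbf{p}_t$ directly (the paper asserts $\mathbf{V}_t\mathbf{U}^T\bm{\zeta}_t$ is uniform on an ellipsoid, which is only literally true when $N=d$, though the conclusion needed---nonzero density in every direction---still holds), and you make the conditioning on the adversary's history-dependent choices explicit.
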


\textbf{Features with Unknown Span and Noisy Demands.} \  
When the observed demands are noisy and span$(\mathbf{U})$ is unknown, we select prices using the \algsecondabbrev{} algorithm on the next page.  
The approach is similar to our previous \algfirstabbrev{} algorithm, except we now additionally maintain a changing estimate of the latent product features' span.  Our estimate is updated in an online fashion via an averaged singular value decomposition (SVD) of the previously observed demands.

\begin{algorithm}[tb]
 \caption{\algsecondabbrev{} (Online Pricing Optimization with Latent Features) }
 \label{alg:second}
\begin{algorithmic}[1]
 \Require $\eta, \delta, \alpha > 0$, rank $d \in [1, N]$, initial prices $\mathbf{p}_0 \in \mathcal{S}$ 
\Ensure  Prices $\mathbf{p}_1, \dots, \mathbf{p}_T$ to maximize overall revenue
\vspace*{2mm}
\State  Initialize $\widehat{\mathbf{Q}}$ as $N \times d$ matrix of zeros
\State Initialize $\widehat{\mathbf{U}}$ as random $N \times d$ orthogonal matrix 
\State Set prices to $\mathbf{p}_0 \in \mathcal{S}$ and observe $\mathbf{q}_0(\mathbf{p}_0), R_0(\mathbf{p}_0)$ 
\State Define $\mathbf{x}_1 = \widehat{\mathbf{U}}^T \mathbf{p}_0$
\NoDo
\For{$t =1, \dots, T$:} 
\State  $\widetilde{\mathbf{x}}_t  :=  \mathbf{x}_t  + \delta \bm{\xi}_t$, \ \ $\bm{\xi}_t \sim \text{Unif}(\{\mathbf{x} \in \mathbb{R}^d : ||\mathbf{x}||_2 = 1\})$
\State Set prices: $\mathbf{p}_t = \widehat{\mathbf{U}} \widetilde{\mathbf{x}}_t$  and observe  $\mathbf{q}_t(\mathbf{p}_t), R_t(\mathbf{p}_t)$
\State $\mathbf{x}_{t+1} = $ \textproc{Projection}$(\mathbf{x}_t - \eta R_t(\mathbf{p}_t) \bm{\xi}_t$, $\alpha$, $\widehat{\mathbf{U}}$, $\mathcal{S})$
\State With $j = 1 + [(t-1) \text{ mod } d]$, $k= \text{floor}(t/d)$, update: $\widehat{\mathbf{Q}}_{*,j} \leftarrow \frac{1}{k} \mathbf{q}_{t} +  \frac{k-1}{k}\widehat{\mathbf{Q}}_{*,j}$ 
\State Set columns of $\widehat{\mathbf{U}}$ as top $d$ left singular vectors of $\widehat{\mathbf{Q}}$  
\EndFor
\end{algorithmic}
\end{algorithm}

Step 9 in our \mbox{\algsecondabbrev{}} algorithm corresponds to online averaging of the currently observed demand vector $\mathbf{q}_t$ with the historical observations stored in the $j^\text{th}$ column of matrix $\widehat{\mathbf{Q}}$.
After computing the singular value decomposition of $\widehat{\mathbf{Q}} = \widetilde{\mathbf{U}} \widetilde{\mathbf{S}} \widetilde{\mathbf{V}}^T $, Step 10  is performed by setting $\widehat{\mathbf{U}}$ equal to the first $d$ columns of $\widetilde{\mathbf{U}}$ (presumed to be the indices corresponding to the largest singular values in $\widetilde{\mathbf{S}}$).  Since $\widehat{\mathbf{Q}}$ is only minorly changed within each round, the update operation in Step 10 can be computed more efficiently by leveraging existing fast SVD-update procedures \cite{Brand06, Stange08}.  Note that by their definition as singular vectors, the columns of $\widehat{\mathbf{U}}$ remain orthonormal throughout the execution of our algorithm.

To quantify the regret incurred by this algorithm, we assume the noise vectors $\bm{\epsilon}_t$ follow a sub-Gaussian distribution for each $t = 1,\dots, T$.  The assumption of sub-Gaussian noise is quite general, covering common settings where the noise is Gaussian, bounded, of strictly log-concave density, or any finite mixture of sub-Gaussian variables \cite{Mueller18}.  
Intuitively, the averaging in step 9 of our \mbox{\algsecondabbrev{}} algorithm ensures statistical concentration of the noise in our observed demands, such that the true column span of the underlying $\mathbf{U}$ may be better revealed.  More concretely, if we let $\mathbf{s}_t = \mathbf{z}_t - \mathbf{V}_t \mathbf{U}^T \mathbf{p}_t$ and $\mathbf{q}^*_t = \mathbf{U} \mathbf{s}_t$, then the observed demands can be written as: $\mathbf{q}_t = \mathbf{q}^*_t   + \bm{\epsilon}_t$, where $\mathbf{q}^*_t$ are the (unobserved) expected demands at our chosen prices.  
Thus, the $j^{\text{th}}$ column  of  $\widehat{\mathbf{Q}}$ at round $T$ is given by:   
\begin{align*}
& \hspace*{-2mm} \widehat{\mathbf{Q}}_{*,j}  = \widebar{\mathbf{Q}}^*_{*,j}+ \frac{1}{| \mathcal{I}_j |} \hspace*{-0.7mm} \sum_{i \in \mathcal{I}_j}  \hspace*{-0.1mm} \bm{\epsilon}_{i}, 
\text{ with }
 \widebar{\mathbf{Q}}^*_{*,j}   = \frac{1}{| \mathcal{I}_j |} \mathbf{U} \hspace*{-0.8mm}  \sum_{i\in  \mathcal{I}_j} \hspace*{-0.1mm}  \mathbf{s}_{i}
 \numberthis \label{eq:qbar} 
 \end{align*}  
 where we assume for notational simplicity that $T$ is divisible by $d$ and define $\mathcal{I}_j = \{ j + d(i-1) : i =1,\dots, \frac{T}{d} \}$ (so $|\mathcal{I}_j| = \frac{T}{d}$).  
 Because the average $ \frac{1}{|\mathcal{I}_j|} \sum_{i\in \mathcal{I}_j}  \bm{\epsilon}_{i}$ exhibits  concentration of measure, results from random matrix theory imply that the span-estimator obtained from the first $d$ singular vectors of  $\widehat{\mathbf{Q}}$ in Step 10 of our \mbox{\algsecondabbrev{}} algorithm will rapidly converge to the column span of $\widebar{\mathbf{Q}}^* \in \mathbb{R}^{N \times d}$, a matrix of averaged underlying expected demands.  This is useful since $\widebar{\mathbf{Q}}^*$ shares the same span as the underlying $\mathbf{U}$.

Theorem \ref{thm:unknownspan} below shows that our \algsecondabbrev{} algorithm achieves low-regret in the setting of unknown product features with noisy demands, and the regret again depends only on the intrinsic rank $d$ (rather than the number of products $N$).

\begin{thm} \label{thm:unknownspan} 
For unknown $\mathbf{U}$, let $\mathbf{p}_1, \dots, \mathbf{p}_T$ be the prices selected by the \algsecondabbrev{} algorithm with $\eta, \delta, \alpha$ set as in Theorem \ref{cor:knownu}.  Suppose $\bm{\epsilon}_t$ follows a sub-Gaussian$(\sigma^2 )$ distribution and has statistically i.i.d.\ dimensions for each $t$.   If  (A\ref{assbz})-(A\ref{assballS}) hold, then there exists ${C > 0}$ such that for any $\mathbf{p} \in \mathcal{S}$:
\begin{equation*}
\mathbb{E}_{\bm{\epsilon}, \bm{\xi}} \left[ \sum_{t=1}^T R_t(\mathbf{p}_t) -  \sum_{t=1}^T R_t(\mathbf{p})  \right] 
\le C Q rb (4r + 1) d T^{3/4} 
\end{equation*}
Here, $Q = \max\left\{1, \sigma^2 \left(\frac{2\sigma_1 + 1}{\sigma_d^2}\right) \right\}$ with $\sigma_1$ (and $\sigma_d$) defined as the largest (and smallest) nonzero singular values of the underlying rank $d$ matrix $\widebar{\mathbf{Q}}^*$ defined in (\ref{eq:qbar}).
\end{thm}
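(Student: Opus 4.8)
Here is the route I would take. Throughout, write $\mathbf{M}_t:=\mathbf{U}^{T}\widehat{\mathbf{U}}_t\in\mathbb{R}^{d\times d}$ (so $\|\mathbf{M}_t\|_{\mathrm{op}}\le1$ since $\widehat{\mathbf{U}}_t$ is orthonormal), and recall the equivalence $\mathbb{E}_{\bm{\epsilon}}[R_t(\mathbf{p})]=f_t(\mathbf{U}^{T}\mathbf{p})$ from (\ref{eq:lowdimensionalequivalence}). The idea is to view \algsecondabbrev{} as the \flaxmanabbrev{}-type procedure behind Theorem~\ref{cor:knownu}, but run on a \emph{rotated, slowly drifting} version of the low-dimensional objective, and then to pay separately for the error in the online subspace estimate via a $\sin\Theta$ perturbation analysis of the averaged SVD in Steps~9--10.

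\emph{Step 1: reduction to bandit gradient descent with a moving frame.} By Lemma~\ref{lem:contract}, at every round $\mathbf{p}_t=\widehat{\mathbf{U}}_t\widetilde{\mathbf{x}}_t\in\mathcal{S}$ with $\widehat{\mathbf{U}}_t^{T}\mathbf{p}_t=\widetilde{\mathbf{x}}_t$, so Steps~6--8 are exactly the \flaxmanabbrev{} updates applied to the (random, time-varying) functions $h_t(\mathbf{x}):=R_t(\widehat{\mathbf{U}}_t\mathbf{x})$ on the $d$-ball of radius $r$ (using (A\ref{assUorth}),(A\ref{assballS})). Conditioning on the history — so $\widehat{\mathbf{U}}_t$ and $\mathbf{p}_t$ are fixed before $\bm{\epsilon}_t$ is drawn — gives $\mathbb{E}_{\bm{\epsilon}_t}[h_t(\mathbf{x})]=f_t(\mathbf{M}_t\mathbf{x})$, and the only place the noise enters the one-point gradient estimate is through $\widehat{\mathbf{U}}_t^{T}\bm{\epsilon}_t\in\mathbb{R}^d$, whose Euclidean norm is sub-Gaussian of order $\sigma\sqrt d$ — crucially $N$-free because $\widehat{\mathbf{U}}_t$ is orthonormal. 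Hence the bias/variance bookkeeping behind Theorem~\ref{thm:regretknownu} (smoothing bias $O(\delta)$, shrinkage bias $O(\alpha r)$, second moment of the gradient estimate $O\!\big((\tfrac{d}{\delta})^2(rb+\sigma\sqrt d)^2r^2\big)$) goes through verbatim, and for the stated $\eta,\delta,\alpha$ yields, for every fixed $\mathbf{x}$ with $\|\mathbf{x}\|_2\le r$,
\[
\sum_{t=1}^{T}\mathbb{E}_{\bm{\epsilon},\bm{\xi}}[R_t(\mathbf{p}_t)]-\sum_{t=1}^{T}\mathbb{E}_{\bm{\epsilon},\bm{\xi}}\big[f_t(\mathbf{M}_t\mathbf{x})\big]\;\le\;C\,(rb+\sigma)\,r(r+1)\,T^{3/4}d^{1/2}.
\]

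\emph{Step 2: converting the moving-frame comparator into $\mathbf{p}^*$.} Since $\mathbb{E}_{\bm{\epsilon}}[R_t(\mathbf{p})]$ depends on $\mathbf{p}$ only through $\mathbf{U}^{T}\mathbf{p}$ and the minimum-norm preimage of $\mathbf{x}^*$ is $\mathbf{U}\mathbf{x}^*$, we may take the benchmark to be $\mathbf{p}^*=\mathbf{U}\mathbf{x}^*$ with $\|\mathbf{x}^*\|_2\le r$, so $\mathbb{E}_{\bm{\epsilon}}[R_t(\mathbf{p}^*)]=f_t(\mathbf{x}^*)$. It then suffices to exhibit one fixed $\mathbf{x}$, $\|\mathbf{x}\|_2\le r$, with $\sum_t f_t(\mathbf{M}_t\mathbf{x})\le\sum_t f_t(\mathbf{x}^*)+(\text{small})$. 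Writing $f_t(\mathbf{y})=\mathbf{y}^{T}\mathbf{V}_t\mathbf{y}-\mathbf{y}^{T}\mathbf{z}_t$ and using (A\ref{assbz})--(A\ref{assbV}) together with $\|\mathbf{M}_t\|_{\mathrm{op}}\le1$ gives the Lipschitz-type estimate $|f_t(\mathbf{M}_t\mathbf{x})-f_t(\mathbf{R}\mathbf{x})|\le rb(2r+1)\,\|\mathbf{M}_t-\mathbf{R}\|_{\mathrm{op}}$ for any contraction $\mathbf{R}$. Choosing $\mathbf{R}$ as the alignment rotation between $\widehat{\mathbf{U}}_t$ and $\mathbf{U}$ and $\mathbf{x}=\mathbf{R}^{T}\mathbf{x}^*$ reduces the excess over $f_t(\mathbf{x}^*)$ to $rb(2r+1)\sum_t\mathrm{dist}_t$, where $\mathrm{dist}_t$ simultaneously captures the principal angle $\|\sin\Theta(\mathrm{span}\,\widehat{\mathbf{U}}_t,\mathrm{span}\,\mathbf{U})\|$ and the one-step drift $\|\widehat{\mathbf{U}}_{t+1}-\widehat{\mathbf{U}}_t\|_{\mathrm{op}}$ of the online SVD (the drift is also what corrupts the telescoping OGD step of Step~1 when the frame moves between iterations, so it is folded into the same sum).

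\emph{Step 3: $\sin\Theta$ bound for the averaged SVD, and assembly.} Decompose $\widehat{\mathbf{Q}}^{(t)}=\widebar{\mathbf{Q}}^{*(t)}+\mathbf{E}^{(t)}$ as in (\ref{eq:qbar}): $\widebar{\mathbf{Q}}^{*(t)}=\mathbf{U}\widebar{\mathbf{S}}^{(t)}$ has rank $d$ with column span $\mathrm{span}(\mathbf{U})$, while the columns of $\mathbf{E}^{(t)}$ are averages of $k=\lfloor t/d\rfloor$ i.i.d.\ sub-Gaussian$(\sigma^2)$ vectors over disjoint time blocks, hence independent with entrywise variance proxy $\sigma^2/k$. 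Davis--Kahan/Wedin gives $\|\sin\Theta_t\|\le \|\mathbf{E}^{(t)}\|_{\mathrm{op}}/(\sigma_d^{(t)}-\|\mathbf{E}^{(t)}\|_{\mathrm{op}})$, sub-Gaussian random-matrix concentration controls $\|\mathbf{E}^{(t)}\|_{\mathrm{op}}$ and the one-column increments driving $\mathrm{dist}_t$ by a quantity decaying like $\sigma/\sqrt k$, and the singular values $\sigma_1^{(t)},\sigma_d^{(t)}$ of $\widebar{\mathbf{Q}}^{*(t)}$ are those of the $d\times d$ matrix $\widebar{\mathbf{S}}^{(t)}$ and are related to the final $\sigma_1,\sigma_d$. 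Summing over $t$ — using $\sum_t k^{-1/2}\asymp\sqrt{dT}$, $\sum_t k^{-1}\asymp d\log T$, and treating the first $O(d)$ rounds (where $\widehat{\mathbf{U}}_t$ is essentially the random initialization) by the trivial per-round bound $O(rb(1+r))$ — bounds $\sum_t\mathrm{dist}_t$; plugging into Step~2, adding the $T^{3/4}d^{1/2}$ term from Step~1, and noting $\mathbb{E}_{\bm{\epsilon}}[\sum_tR_t(\mathbf{p}_t)]=\mathbb{E}_{\bm{\epsilon}}[\sum_tf_t(\mathbf{U}^{T}\mathbf{p}_t)]$ (and likewise for $\mathbf{p}^*$) collapses everything into $C\,Q\,rb(4r+1)\,d\,T^{3/4}$ with $Q=\max\{1,\sigma^2(2\sigma_1+1)/\sigma_d^2\}$.

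\emph{Main obstacle.} The crux is Steps~2--3: quantifying how the online-SVD estimate, which moves every round and aligns with $\mathrm{span}(\mathbf{U})$ only at the $1/\sqrt k$ rate dictated by noise averaging, interacts with bandit-gradient iterates whose coordinate frame \emph{is} that estimate — i.e.\ bounding $\sum_t\|\sin\Theta_t\|$ against a possibly ill-conditioned, adversarial signal $\widebar{\mathbf{S}}^{(t)}$ (which is where the $\sigma_d^{-2}$ in $Q$ enters) while also controlling the cumulative drift $\sum_t\|\widehat{\mathbf{U}}_{t+1}-\widehat{\mathbf{U}}_t\|$, and showing both stay $o(T^{3/4})$ after the problem-dependent factor $Q$. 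The potential non-uniqueness of singular vectors under a near-degenerate spectrum must be finessed here, e.g.\ by working with subspace projectors and the alignment rotation rather than with $\widehat{\mathbf{U}}_t$ itself.
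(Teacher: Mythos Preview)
Your decomposition is genuinely different from the paper's, and the difference matters. The paper does \emph{not} attempt to sum $\sum_t\mathrm{dist}_t$ or to control the one-step drift $\|\widehat{\mathbf{U}}_{t+1}-\widehat{\mathbf{U}}_t\|$ at all. Instead it uses a much coarser three-way split: (i) eat the first $T^{3/4}$ rounds at the trivial per-round cost $rb(1+r)$; (ii) for $t\ge T^{3/4}$, apply the Flaxman bound directly to the sequence $f_{t,\widehat{\mathbf{U}}_t}$ against the comparator $\widetilde{\mathbf{x}}:=\arg\min_{\mathbf{x}}\mathbb{E}\sum_{t\ge T^{3/4}} f_{t,\widehat{\mathbf{U}}_t}(\mathbf{x})$ (this works because $\widehat{\mathbf{U}}_t^T(\mathcal{S})$ is the same $d$-ball for every $t$, so the projection step is frame-independent and the Flaxman analysis sees only a sequence of bounded convex functions on a fixed domain); (iii) bound $\sum_{t\ge T^{3/4}}[f_{t,\widehat{\mathbf{U}}_t}(\widetilde{\mathbf{x}})-f_{t,\mathbf{U}}(\mathbf{x}^*)]$ by $(T-T^{3/4})$ times the \emph{worst} single-round discrepancy, which for any $t\ge T^{3/4}$ is controlled via Lemma~A.3 by $C\,rb(4r+1)\,(T^{3/4})^{-1/2}\,d\sigma^2(2\sigma_1+1)/\sigma_d^2$. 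Piece~(iii) therefore contributes only $O(T^{5/8})$ and is dominated.

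The gap in your route is precisely the obstacle you flag. Your Step~1 yields a bound against a \emph{single fixed} $\mathbf{x}$, and in Step~2 you propose $\mathbf{x}=\mathbf{R}^T\mathbf{x}^*$ with $\mathbf{R}$ ``the alignment rotation between $\widehat{\mathbf{U}}_t$ and $\mathbf{U}$'' --- but that rotation is $t$-dependent, and there is no single $\mathbf{R}$ for which $\|\mathbf{M}_t-\mathbf{R}\|$ is small uniformly in $t$ unless you separately prove that the alignment rotations $\widehat{\mathbf{O}}_t$ stabilize. They need not: even when $\mathrm{span}(\widehat{\mathbf{U}}_t)$ is close to $\mathrm{span}(\mathbf{U})$, the SVD of $\widehat{\mathbf{Q}}$ can rotate within that span (or flip signs) under arbitrarily small column updates whenever two singular values of $\widehat{\mathbf{Q}}$ are close, so $\|\widehat{\mathbf{U}}_{t+1}-\widehat{\mathbf{U}}_t\|$ is \emph{not} bounded by the perturbation size divided by $\sigma_d$ --- it is governed by the internal spectral gaps of $\widehat{\mathbf{Q}}$, over which you have no control. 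Working with projectors $\widehat{\mathbf{U}}_t\widehat{\mathbf{U}}_t^T$ fixes the subspace-distance part but does not give you a fixed $\mathbf{x}$ to feed back into Step~1, so the two steps no longer compose. The paper's burn-in device sidesteps this entirely: by passing to the argmin comparator $\widetilde{\mathbf{x}}$ in piece~(ii), it never needs a single alignment that works for all $t$, and by taking a worst-case $t\ge T^{3/4}$ in piece~(iii), it invokes Davis--Kahan (via the Yu--Wang--Samworth variant, which is where the $\sigma_d^{-2}$ in $Q$ comes from) only once, with the alignment $\widehat{\mathbf{O}}$ chosen for that particular $t$.
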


Our proof of this result relies on standard random matrix concentration inequalities \cite{Vershynin12} and Theorem \ref{thm:yu}, a useful variant of the Davis-Kahan theory introduced by \citet{Yu15}.  Intuitively, we show that span($\mathbf{U}$) can be estimated to sufficient accuracy within sufficiently few rounds, and then follow similar reasoning to the proof of Theorem \ref{thm:modified}. 
Note that the regret in Theorem \ref{thm:unknownspan} depends on the constant $Q$ whose value is determined by the noise-level $\sigma$ and the extreme singular values of $\widebar{\mathbf{Q}}^*$ defined in (\ref{eq:qbar}).  In general, these quantities thus measure just how adversarial of an environment the seller is faced with. 
For example, when the underlying low-rank variation is of much smaller magnitude than the noise in our observations, it will be difficult to accurately estimate the span of the latent product features.  
In control theory, a signal-to-noise expression similar to $Q$ has also been recently proposed to quantify the intrinsic difficulty of system identification for the linear quadratic regulator \cite{Dean17}.  A basic setting in which $Q$ can be explicitly bounded is illustrated in Appendix \ref{sec:imprecise}, where we suppose the underlying demand model parameters can only be imprecisely controlled by an adversary over time.

\tocless\section{Experiments}
\label{sec:results}

We evaluate the performance of our methodology in settings where noisy demands are generated according to equation (\ref{eq:lowrank}), and the underlying structural parameters of the demand curves are randomly sampled from Gaussian distributions (details in Appendix \ref{sec:expdetails}).  Throughout, $\mathbf{p}_t$ and $\mathbf{q}_t$ represent rescaled rather than absolute prices/demands, such that the feasible set $\mathcal{S}$ can be simply fixed as a centered sphere of radius $r = 20$.  Noise in the (rescaled) demands for each individual product is always sampled as: $\mathbf{\epsilon}_t \sim N(0, 10 )$. 
 
Our proposed algorithms are compared against the GDG online bandit algorithm of \citet{Flaxman05}, as well as a simple explore-then-exploit ($\mathrm{Explo}^\mathrm{re}_\mathrm{it}$) technique.  The latter method randomly samples $\mathbf{p}_t$ during the first $T^{3/4}$ rounds (uniformly over $\mathcal{S}$) and for all remaining rounds, $\mathbf{p}_t$  is fixed at the best price vector found during exploration.  $\mathrm{Explo}^\mathrm{re}_\mathrm{it}$ reflects a standard pricing technique: initially experiment with prices and eventually settle on those that previously produced the most  profit.

\begin{figure*}[tb] \centering
\vspace*{-0.1mm} 
\begin{minipage}{0.49\textwidth}
\begin{overpic}[width=\textwidth]{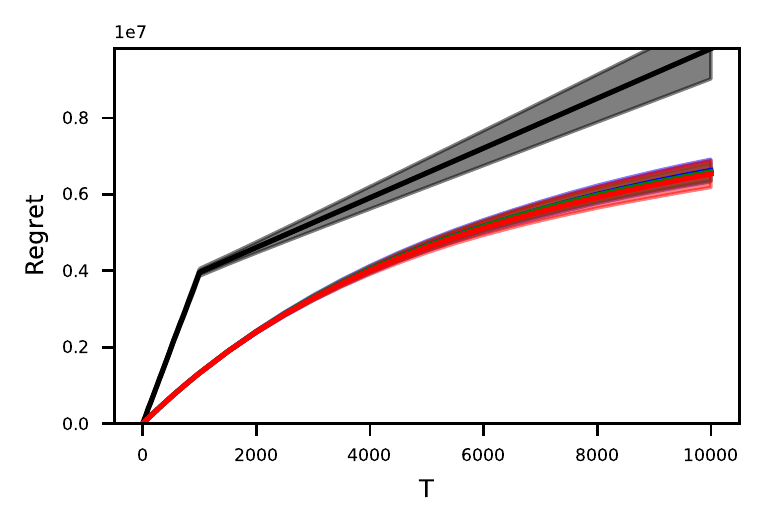}
 \put (25,64){{\footnotesize \textbf{(A)} \ $N= 10, d = 10$, Model = \S\ref{sec:stationary}}}
 \end{overpic}
 \end{minipage} \hspace*{0.005\textwidth}
\begin{minipage}{0.49\textwidth}
\begin{overpic}[width=\textwidth]{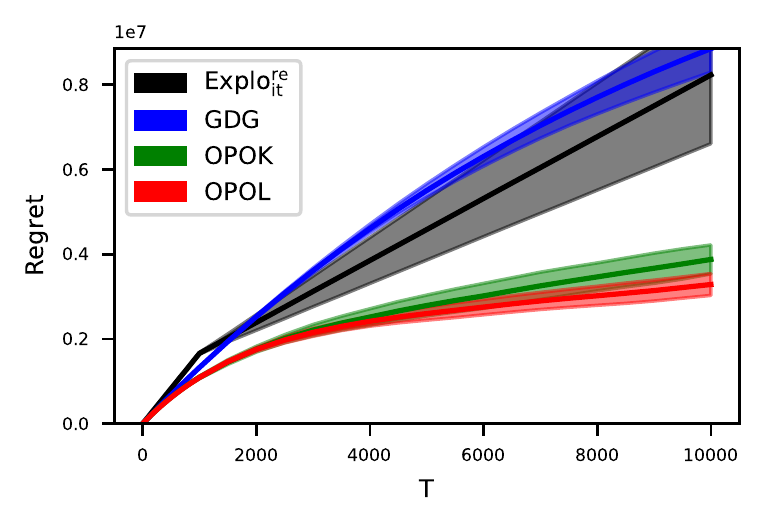} 
 \put (25, 64){\footnotesize \textbf{(B)} \ $N= 100, d = 10$, Model = \S\ref{sec:stationary}} 
\end{overpic} 
 \end{minipage}
\\[-0.4em] 

\begin{minipage}{0.49\textwidth}
\begin{overpic}[width=\textwidth]{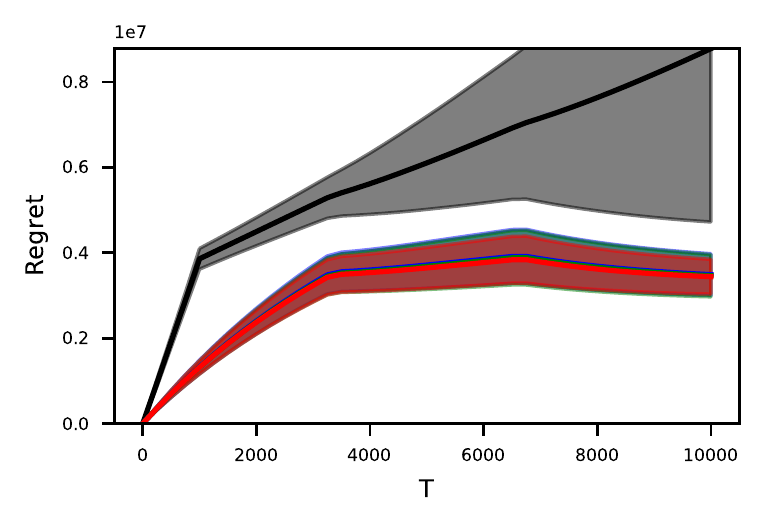}
 \put (25,64){{\footnotesize \textbf{(C)} \ $N= 10, d = 10$, Model = \S\ref{sec:shock}}}
 \end{overpic}
  \end{minipage} \hspace*{0.005\textwidth}
  \begin{minipage}{0.49\textwidth}
\begin{overpic}[width=\textwidth]{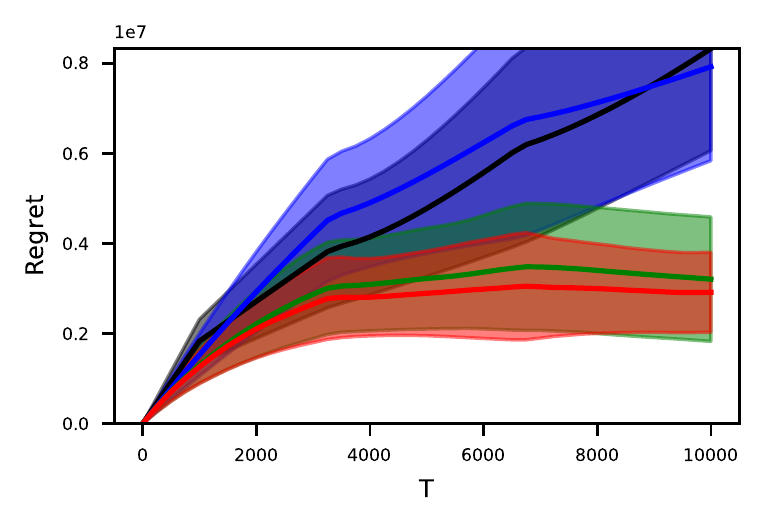} 
 \put (25, 64){\footnotesize \textbf{(D)} \ $N= 100, d = 10$, Model = \S\ref{sec:shock}} 
\end{overpic} 
 \end{minipage}
\\[-0.4em] 
 \hspace*{-6mm}
 \begin{minipage}{0.47\textwidth}
\begin{overpic}[width=1.1\textwidth]{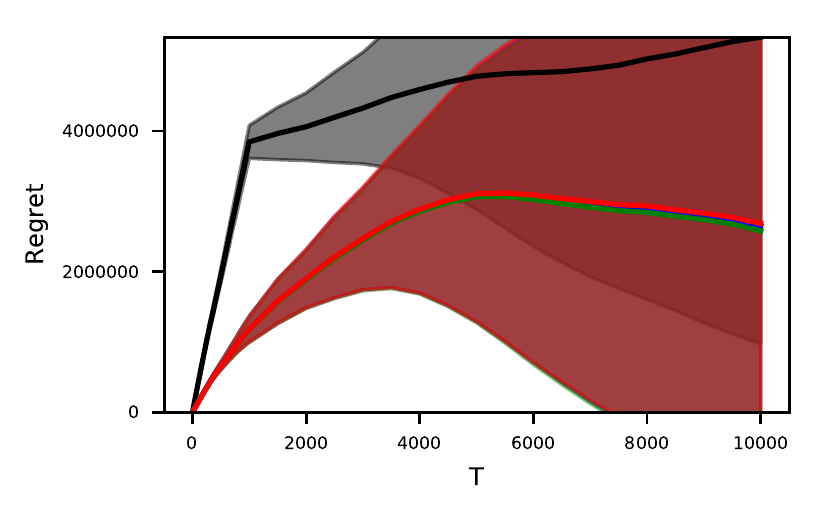}
 \put (26,60){{\footnotesize \textbf{(E)} \ $N= 10, d = 10$, Model = \S\ref{sec:drifting}}}
 \end{overpic}
  \end{minipage}  \hspace*{0.035\textwidth}
  \begin{minipage}{0.47\textwidth}
 \hspace*{-3.5mm}
\begin{overpic}[width=1.1\textwidth]{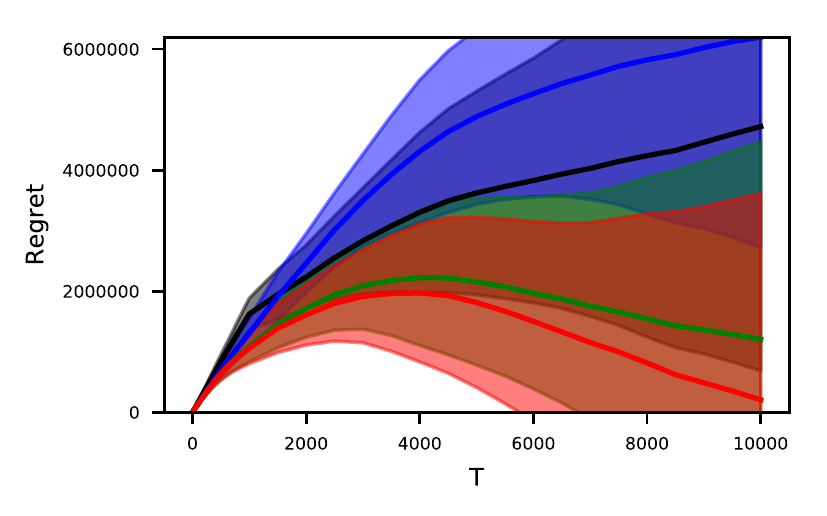} 
 \put (27.2, 61.4){\footnotesize \textbf{(F)} \ $N= 100, d = 10$, Model = \S\ref{sec:drifting}} 
\end{overpic} 
 \end{minipage}
\vspace*{-3mm}
\caption{Average cumulative regret (over 10 repetitions with standard-deviations shaded) of various pricing strategies when underlying demand model is: \textbf{(A)-(B)} stationary over time, \textbf{(C)-(D)}: altered by structural shocks at times $T/3$ and $2T/3$, \textbf{(E)-(F)}: drifting over time.}
\label{fig:regrets}
\vspace*{-1.8mm}
\end{figure*}

\tocless\subsection{Stationary Demand Model}
\label{sec:stationary}
First, we consider a stationary setting where underlying structural parameters ${\mathbf{z}_t, = \mathbf{z}}$, ${\mathbf{V}_t = \mathbf{V}}$ remain fixed.  Before each experiment, we sample the entries of ${\mathbf{z}, \mathbf{V}}$ independently as ${\mathbf{z}_{ij} \sim N(100,20)}$, ${\mathbf{V}_{ij} \sim N(0,2)}$,  and $\mathbf{U}$ is fixed as a random sparse binary matrix that reflects which of $d$ possible categories each product belongs to.  Subsequently, we orthogonalize the columns of $\mathbf{U}$ and project $\mathbf{V}$ into $\mathcal{V} = \{\mathbf{V} :  \mathbf{V}^T + \mathbf{V} \succeq \lambda \mathbb{I} \}$ with $\lambda = 10$ to ensure positive definite  cross-product price elasticities.  
Here, $ \mathbf{z}, \mathbf{V}, \lambda$ are chosen to reflect properties of real-world demand curves: different products' baseline demands and elasticities should be highly diverse (wide range of $z$), and prices should significantly influence demands such that price-increases severely decrease demand and affect
demand for the same product more than other products (large value of $\lambda$, which in turn induces large values for certain entries of $\mathbf{V}$). 
We find the optimal price vector does not lie near the boundary of $\mathcal{S}$ ($||\mathbf{p}^*||_2 \approx 8$ rather than 20), which shows that prices strongly influence demands under our setup.

Figures \ref{fig:regrets}A and \ref{fig:regrets}B show that our \algfirstabbrev{} and \algsecondabbrev{}  algorithms are greatly superior to GDG when the dimensionality $N$ exceeds the intrinsic rank $d$.  When $N = d$ (no low-rank structure to exploit), our \algfirstabbrev{}/\algsecondabbrev{} algorithms closely match GDG (blue, green, and red curves overlap).  Note that in this case: GDG and OPOK are nearly mathematically equivalent (same regret bound applies to both, but their empirical performance slightly differs in this case due to the internal stochasticity of each bandit algorithm), as are OPOL and OPOK (since $d= N$ implies $\widehat{\mathbf{U}}$ is an orthogonal $N \times N$ matrix and hence invertible).  
For small $N$, all online bandit optimization techniques outperform $\mathrm{Explo}^\mathrm{re}_\mathrm{it}$, but GDG  scales poorly to large $N$ unlike our methods.  
 Interestingly, \algsecondabbrev{} (which must infer latent product features alongside the pricing strategy) performs slightly better than the \algfirstabbrev{} approach, which has access to the ground-truth  features.  This is because in the presence of noise, our SVD-computed features can more robustly represent the subspace where projected pricing variation can maximally impact the overall observed demands.  In contrast, the dimensionality-reduction in \algfirstabbrev{} does not lead to any denoising.

\tocless\subsection{Model with Demand Shocks}
\label{sec:shock}
Next, we study a non-stationary setting where the underlying demand model changes drastically at times $T/3$ and $2T/3$.  At the start of each period $[0, T/3]$, $[T/3, 2T/3]$, $[2T/3, T]$: we simply redraw the underlying structural parameters $\mathbf{z}_t, \mathbf{V}_t$ from the same Gaussian distributions used for the stationary setting.  Figures \ref{fig:regrets}C and \ref{fig:regrets}D show that our bandit techniques quickly adapt to the changes in the underlying demand curves. 
The regret of the bandit algorithms decreases over time, indicating they begin to outperform the optimal fixed price chosen in hindsight  (recall that our bandits may vary price over time, whereas regret is measured against the best fixed price-configuration which may fare much worse than a dynamic schedule in nonstationary environments).  
Once again, our low-rank methods achieve low regret for a large number of products unlike the existing approaches, while retaining the same strong performance as the GDG algorithm in the absence of low-rank structure.  

\tocless\subsection{Drifting Demand Model}
\label{sec:drifting}
Finally, we consider another non-stationary setting where underlying demand curves slowly change over time.  Here, the underlying structural parameters  $\mathbf{z}_t, \mathbf{V}_t$ are initially drawn from the same previously used Gaussian distributions at $t=0$, but then begin to stochastically drift over time according to:
$\mathbf{z}_{t+1} = \mathbf{z}_t + \mathbf{w}, \ \ \mathbf{V}_{t+1} = \Pi_\mathcal{V}(\mathbf{V}_t + \mathbf{W})
$.  Here, the entries of $\mathbf{w}$ and $\mathbf{W}$ are i.i.d.\ samples from $N(0,1)$ and $N(0, 0.1)$ distributions, respectively, and $\Pi_\mathcal{V} $ denotes the projection of a matrix into the strongly positive-definite set $\mathcal{V}$ we previously defined.  Figures \ref{fig:regrets}E and \ref{fig:regrets}F illustrate how our bandit pricing approach can adapt to ever-changing demand curves.  Again, our low-rank methods exhibit much stronger performance than GDG and $\mathrm{Explo}^\mathrm{re}_\mathrm{it}$  in the settings with many products.

\tocless\subsection{Misspecified Demand Model}
\label{sec:wrongmodel}

Appendix \ref{sec:misresults} investigates the robustness of our algorithms in misspecified settings with full-rank or log-linear demands, where the assumptions of our demand model are explicitly violated.    Even in the absence of explicit low-rank structure, running the \algsecondabbrev{}  algorithm with low values of $d$ substantially outperforms other pricing strategies (Figure \ref{fig:highrankregrets}).  These empirical results suggest that our \algsecondabbrev{} algorithm is practically useful for various high-dimensional pricing problems, beyond those that exactly satisfy the low-rank/linearity assumptions in (\ref{eq:lowrank}).  

 \begin{figure*}[tb] \centering
\begin{overpic}[width=0.49\textwidth]{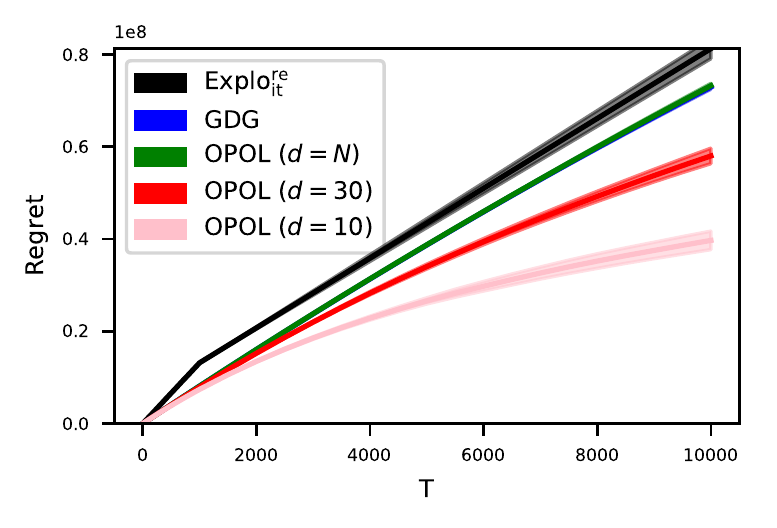}
 \put (15,68){{\footnotesize \textbf{(A)} \ Model (\ref{eq:basemodel})} without temporal change}
 \end{overpic} \hspace*{0mm}
\begin{overpic}[width=0.49\textwidth]{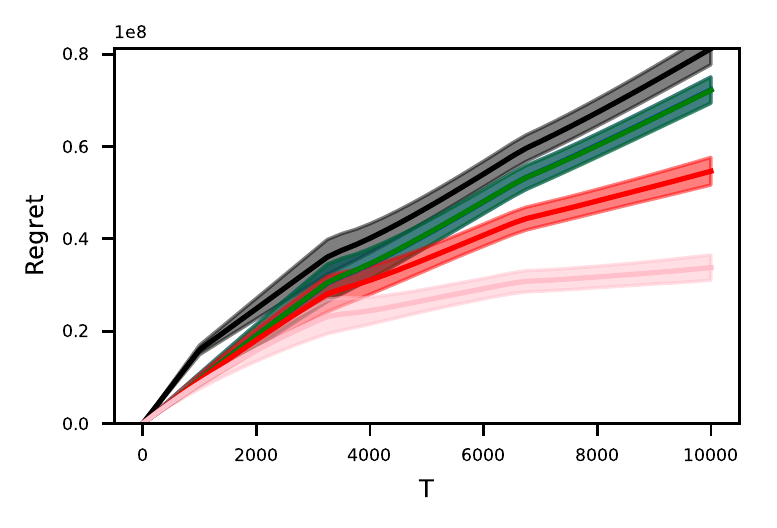} 
 \put (20, 68){\footnotesize \textbf{(B)} \ Model (\ref{eq:basemodel}) with demand shocks} 
\end{overpic}  \hspace*{1mm}
\vspace*{-4mm}
\caption{Regret of pricing strategies (for $N= 100$) when underlying demand model has no low-rank structure (see Appendix \ref{sec:misresults}) and is:  \textbf{(A)}  stationary, \textbf{(B)} altered by shocks at $T/3$ and $2T/3$ as in \S\ref{sec:shock}.  } 
\label{fig:highrankregrets}
\end{figure*}

\tocless\subsection{Rank of Historical Demand Data}
\label{sec:empiricalrank}

While the aforementioned robustness analysis indicates our approach works well even when key assumptions are violated, 
it remains of interest whether our assumptions accurately describe actual demand variation for real products.  
One key implication of our assumptions in (\ref{eq:lowrank}) is that the $N \times T$ matrix $\mathbf{Q} = [\mathbf{q}_1; \mathbf{q}_2;  \dots;\mathbf{q}_T]$, whose columns contain the observed demands in each round, should be approximately low-rank when there is limited noise in the demand-price relationship.
This is because under our assumptions, $\mathbf{q}_1,\dots, \mathbf{q}_T$ only span a $d$-dimensional subspace in the absence of noise (see proof of  Lemma \ref{lem:zerochance}).

Here, we study historical demand data\footnote{Historical demand data obtained from: \url{www.kaggle.com/c/grupo-bimbo-inventory-demand/}} for 1,340 products sold at various prices over 7 weeks by the baking company Grupo Bimbo.   
Using this data, we form a matrix $\mathbf{Q}$ whose columns contain the total weekly demands for each product across all stores.
The SVD of $\mathbf{Q}$ reveals the following percentages of variation in the observed demands are captured within the top $k$ singular vectors: $k = 1$: $ 97.1\%$, $k = 2$: $99.1\%$, $k = 3$: $99.9\%$.  
This empirical analysis thus suggests that our low-rank assumption on the expected demand variation remains reasonable in practice.

\tocless\section{Discussion}
\label{sec:discussion}

By exploiting a low-rank structural condition that naturally emerges in dynamic pricing problems, this work introduces an online bandit optimization algorithm whose regret provably depends only on the intrinsic rank of the problem rather than the ambient dimensionality of the action space.  Our low-rank bandit approach to dynamic pricing scales to a large number of products with intercorrelated demand curves, even if the underlying demand model varies over time in an adversarial fashion.  When applied to various high-dimensional dynamic pricing systems involving stationary, fluctuating, and misspecified demand curves, our approach empirically outperforms standard bandit methods.  
Future extensions of this work could include adaptations for predictable sequences in which future demands can be partially forecasted \cite{Rakhlin13}, or generalizing our convex formulation and linear demand model to more general subspace structures \cite{Hazan16b}.  

\clearpage

\nocite{Rigollet15}
\nocite{Rudelson08}
\bibliography{banditPricing}
\bibliographystyle{plainnat}

\clearpage \newpage
\beginsupplement

\appendix

\setcounter{lem}{0}
\setcounter{thm}{0}
\setcounter{cor}{0}
\renewcommand{\thelem}{\Alph{section}.\arabic{lem}}
\renewcommand{\thethm}{\Alph{section}.\arabic{thm}}
\renewcommand{\thecor}{\Alph{section}.\arabic{cor}}

\setcounter{page}{1}
\pagenumbering{arabic}
\setlength{\footskip}{20pt}  

\renewcommand\thefigure{\thesection.\arabic{figure}}    
\setcounter{figure}{0}

\renewcommand\thetable{\thesection.\arabic{table}}    
\setcounter{table}{0}

\newcommand{\toptitlebar}{ 
  \hrule height 4pt
  \vspace{0.25in}
  \vspace{-\parskip}
}

\newcommand{\bottomtitlebar}{
  \vspace{0.29in}
  \vspace{-\parskip}
  \hrule height 1pt
  \vspace{0.09in}
}

\vbox{%
    \vspace{0.1in}
    \toptitlebar
    \centering
    {\LARGE\bf Supplementary Material: \ Low-Rank Bandit Methods for High-Dimensional Dynamic Pricing
     \par}
    \bottomtitlebar
    \vspace{0.3in}
  }

 \tableofcontents

\clearpage

\section{Proofs and Auxiliary Theoretical Results} 
\label{sec:proofs}

 \begin{lem}  For $\mathbf{p} \in \mathbb{R}^N$ with $\mathbf{U}^T \mathbf{p} = \mathbf{x} + \delta \bm{\xi} \in \mathbb{R}^d$, $\bm{\xi}  \sim \text{Unif}(\{x \in \mathbb{R}^d : ||\mathbf{x}||_2 = 1\}):$
 $$
  \frac{ \partial \widehat{f}_t }{ \partial \mathbf{x} } = \frac{d}{\delta} \cdot \mathbb{E}_{\bm{\epsilon},\bm{\xi}} \big[ {R_t(\mathbf{p}) \bm{\xi}} \big]
$$
 \label{lem:grad} 
  \addcontentsline{toc}{subsubsection}{Lemma \ref{lem:grad}}
 \end{lem}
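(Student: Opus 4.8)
The plan is to reduce the statement to the classical ``gradient without a gradient'' identity of \citet{Flaxman05}, now applied in the projected $d$-dimensional action space. First I would eliminate the noise. Since the vector $\mathbf{p}$ satisfies $\mathbf{U}^T\mathbf{p} = \mathbf{x} + \delta\bm{\xi}$ and the perturbation $\bm{\xi}$ (hence $\mathbf{p}$) is drawn independently of the round-$t$ noise $\bm{\epsilon}_t$, equation (\ref{eq:lowdimensionalequivalence}) gives $\mathbb{E}_{\bm{\epsilon}}\big[R_t(\mathbf{p}) \,\big|\, \bm{\xi}\big] = f_t(\mathbf{U}^T\mathbf{p}) = f_t(\mathbf{x}+\delta\bm{\xi})$. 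Crucially, this conditional expectation depends on $\mathbf{p}$ only through $\mathbf{U}^T\mathbf{p}$, so the particular preimage returned by \textproc{FindPrice} is irrelevant. By the tower property it then suffices to show
\begin{equation*}
\frac{\partial \widehat{f}_t}{\partial \mathbf{x}}(\mathbf{x}) \;=\; \frac{d}{\delta}\,\mathbb{E}_{\bm{\xi}\sim\mathcal{B}_d}\big[ f_t(\mathbf{x}+\delta\bm{\xi})\,\bm{\xi}\big].
\end{equation*}

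Next I would write the smoothed objective as a normalized volume integral over the ball of radius $\delta$. Denoting by $\mathbb{B}_d$ the unit ball, by $\mathbb{S}^{d-1}$ the unit sphere, and changing variables $\bm{v}=\delta\bm{\zeta}$ in (\ref{eq:smoothed}),
\begin{equation*}
\widehat{f}_t(\mathbf{x}) \;=\; \frac{1}{\mathrm{vol}(\delta\mathbb{B}_d)}\int_{\delta\mathbb{B}_d} f_t(\mathbf{x}+\bm{v})\,d\bm{v}.
\end{equation*}
Because $f_t$ is a fixed smooth (quadratic) function, differentiation under the integral sign is justified and $\nabla_{\mathbf{x}} f_t(\mathbf{x}+\bm{v}) = \nabla_{\bm{v}} f_t(\mathbf{x}+\bm{v})$. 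Applying the divergence theorem coordinate-by-coordinate to the vector fields $\bm{v}\mapsto f_t(\mathbf{x}+\bm{v})\,\mathbf{e}_i$ on $\delta\mathbb{B}_d$ converts the volume integral of the gradient into a surface integral over the sphere of radius $\delta$ with outward unit normal $\bm{v}/\|\bm{v}\|_2$:
\begin{equation*}
\int_{\delta\mathbb{B}_d} \nabla_{\bm{v}} f_t(\mathbf{x}+\bm{v})\,d\bm{v} \;=\; \int_{\delta\mathbb{S}^{d-1}} f_t(\mathbf{x}+\bm{v})\,\frac{\bm{v}}{\|\bm{v}\|_2}\,dS(\bm{v}).
\end{equation*}

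Finally I would rewrite this surface integral as an expectation over the uniform distribution on the sphere of radius $\delta$, use the elementary identity $\mathrm{area}(\delta\mathbb{S}^{d-1})/\mathrm{vol}(\delta\mathbb{B}_d) = d/\delta$, and re-parametrize $\bm{v}=\delta\bm{\xi}$ with $\bm{\xi}\sim\mathcal{B}_d$; this produces exactly the factor $\tfrac{d}{\delta}$ and the right-hand side above, which combined with the first paragraph yields the claim. The step I expect to require the most care is this last geometric bookkeeping: pinning down the constant $d/\delta$ from the ratio of the sphere's surface area to the ball's volume, and keeping the smoothing-radius convention in (\ref{eq:smoothed}) consistent (radius $\delta$ versus radius $1$) so the constant matches; the remaining ingredients — interchanging expectation and differentiation, and the divergence theorem — are routine since $f_t$ is quadratic. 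I would also remark that the argument uses nothing about $\mathbf{U}$ beyond the identity $\mathbf{U}^T\mathbf{p}=\mathbf{x}+\delta\bm{\xi}$ guaranteed by \textproc{FindPrice}.
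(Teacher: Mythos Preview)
Your proposal is correct and follows essentially the same route as the paper: first take the expectation over $\bm{\epsilon}$ to reduce $R_t(\mathbf{p})$ to $f_t(\mathbf{x}+\delta\bm{\xi})$ via (\ref{eq:lowdimensionalequivalence}), and then invoke the spherical gradient-estimator identity of \citet{Flaxman05}. The only difference is that the paper simply cites Lemma~2.1 of \citet{Flaxman05} as a black box, whereas you reprove that identity from scratch via the divergence theorem; your observation about reconciling the smoothing-radius convention in (\ref{eq:smoothed}) with the factor $d/\delta$ is well taken but does not change the argument.
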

 
 \begin{proof}
 Since we have: $\mathbb{E}_{\bm{\epsilon}} [R_t(\mathbf{p})] = f_t(\mathbf{x} + \delta  \bm{\xi})$,
 this result follows directly from Lemma 2.1 in \citet{Flaxman05}.  
\end{proof}

\begin{thm}[\citeauthor{Flaxman05}, \citeyear{Flaxman05}]  Suppose for $t = 1,\dots, T$, each $f_t \in [-B, B]$ is a convex, $L$-Lipschitz function of $\mathbf{x} \in \mathbb{R}^d$, and the set of feasible actions $\mathcal{U} \subset \mathbb{R}^d$ is convex, with Euclidean balls of radius $r_{\smallup}$ and $r_{\smalldown}$ containing and contained-within $\mathcal{U}$, respectively.  Let $\mathbf{x}_1, \dots, \mathbf{x}_T \in \mathbb{R}^d$  denote the iterates of the GDG algorithm applied to $f_1,\dots, f_T$ (i.e.\ online projected stochastic gradient descent applied to the $\widehat{f}_t$ as defined in (\ref{eq:smoothed})).
If we choose $\eta, \delta, \alpha$ as in Theorem \ref{thm:regretknownu}, then:
\begin{equation*}
\mathbb{E} \left[ \sum_{t=1}^T f_t(\mathbf{x}_t)  - \min_{\mathbf{x} \in \mathcal{U}} \sum_{t=1}^T f_t(\mathbf{x})  \right]  
\le 2 T^{3/4} \sqrt{3 B r_{\smallup} \bigg(L + \frac{B}{r_{\smalldown}}\bigg) d} 
\end{equation*}   
\label{thm:flax}  
 \addcontentsline{toc}{subsubsection}{Theorem \ref{thm:flax}}
\end{thm}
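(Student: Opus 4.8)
The plan is to reproduce the ``gradient descent without a gradient'' analysis of \citet{Flaxman05}, recast in the present notation. The central object is the one-point gradient estimate $\mathbf{g}_t := \frac{d}{\delta}\,f_t(\mathbf{x}_t + \delta\bm{\xi}_t)\,\bm{\xi}_t$, where $\bm{\xi}_t$ is the uniform unit-sphere sample drawn in round $t$. By Lemma~\ref{lem:grad} applied at $\mathbf{x} = \mathbf{x}_t$, $\mathbf{g}_t$ is a \emph{conditionally} unbiased estimate of $\nabla\widehat f_t(\mathbf{x}_t)$, the gradient of the $\delta$-smoothing $\widehat f_t$ from (\ref{eq:smoothed}); moreover $\|\mathbf{g}_t\|_2 \le dB/\delta$ since $\|\bm{\xi}_t\|_2 = 1$ and $|f_t| \le B$. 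The iterates $\mathbf{x}_t$ of \flaxmanabbrev{} are, by construction, exactly those of projected stochastic gradient descent $\mathbf{x}_{t+1} = \Pi_{(1-\alpha)\mathcal{U}}\big(\mathbf{x}_t - \eta\,\mathbf{g}_t\big)$ run on $\widehat f_1,\dots,\widehat f_T$ over the shrunken feasible set $(1-\alpha)\mathcal{U}$.

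First I would bound the regret of this descent on the \emph{smoothed} functions. Running the Zinkevich potential argument with the telescoping carried out inside the expectation (so that $\mathbb{E}[\mathbf{g}_t \mid \mathbf{x}_t] = \nabla\widehat f_t(\mathbf{x}_t)$ may be combined with convexity of $\widehat f_t$), for every fixed $\mathbf{y} \in (1-\alpha)\mathcal{U}$,
$$\mathbb{E}\!\left[\sum_{t=1}^T \widehat f_t(\mathbf{x}_t) - \sum_{t=1}^T \widehat f_t(\mathbf{y})\right] \;\le\; \frac{\|\mathbf{x}_1 - \mathbf{y}\|_2^2}{2\eta} + \frac{\eta}{2}\sum_{t=1}^T \mathbb{E}\|\mathbf{g}_t\|_2^2 \;\le\; \frac{2 r_{\smallup}^2}{\eta} + \frac{\eta T d^2 B^2}{2\delta^2},$$
using $\|\mathbf{x}_1 - \mathbf{y}\|_2 \le 2 r_{\smallup}$ and the gradient bound above. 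I would then pay the four $O(1)$-per-round costs separating this from the claim (taking, w.l.o.g., the inscribed ball centered at the origin, and writing $\mathbf{x}^\star$ for the minimizer over $\mathcal{U}$): (i) playing the perturbed point costs $f_t(\widetilde{\mathbf{x}}_t) - f_t(\mathbf{x}_t) \le L\delta$; (ii) smoothing costs $|\widehat f_t - f_t| \le L\delta$ uniformly; (iii) choosing the comparator $\mathbf{y} = (1-\alpha)\mathbf{x}^\star \in (1-\alpha)\mathcal{U}$, convexity and $|f_t|\le B$ give $f_t((1-\alpha)\mathbf{x}^\star) \le f_t(\mathbf{x}^\star) + 2\alpha B$; and (iv) the perturbed iterates are feasible, since $\mathbf{x}_t \in (1-\alpha)\mathcal{U}$ together with $\delta \le \alpha r_{\smalldown}$ forces $\widetilde{\mathbf{x}}_t \in \mathcal{U}$.

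Combining (i)--(iii) with the displayed bound and using $\alpha = \delta/r_{\smalldown}$ gives, for every $\mathbf{x}^\star \in \mathcal{U}$,
$$\mathbb{E}\!\left[\sum_{t=1}^T f_t(\widetilde{\mathbf{x}}_t) - \sum_{t=1}^T f_t(\mathbf{x}^\star)\right] \;\le\; \frac{2 r_{\smallup}^2}{\eta} + \frac{\eta T d^2 B^2}{2\delta^2} + C\,\delta T\Big(L + \tfrac{B}{r_{\smalldown}}\Big).$$
Substituting the values of $\eta,\delta,\alpha$ from Theorem~\ref{thm:regretknownu} --- which are exactly the choices that balance the $1/\eta$-, $\eta/\delta^2$-, and $\delta$-scaled groups above, so that $\eta \asymp 1/\sqrt T$ and $\delta \asymp T^{-1/4}\sqrt{B r_{\smallup}/(L+B/r_{\smalldown})}$ up to $d$-factors --- collapses the right-hand side to $2\,T^{3/4}\sqrt{3\,Br_{\smallup}\big(L + B/r_{\smalldown}\big)\,d}$.

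The steps needing the most care are the second and third. In the second, one must keep in mind that $\mathbf{g}_t$ is only conditionally unbiased, so the usual deterministic OGD telescoping has to be executed inside an expectation, peeling off rounds via the tower property over the filtration generated by $\bm{\xi}_1,\dots,\bm{\xi}_{t-1}$ (and, for adversarial $f_t$, over whatever randomness the adversary uses, which must be independent of $\bm{\xi}_t$). In the third, the delicate point is the double role of $(\delta,\alpha)$: feasibility of $\widetilde{\mathbf{x}}_t$ demands $\delta \le \alpha r_{\smalldown}$, pushing $\alpha$ up, while the comparator-shrink error $2\alpha B$ per round pushes it down; resolving this with $\alpha = \delta/r_{\smalldown}$ is precisely what converts that error into the $B/r_{\smalldown}$ term sitting alongside $L$ in the final bound, and tracking the exact numerical constants through this trade-off (and through the choice of $\mathbf{x}_1$) is what produces the leading factor $2\sqrt 3$.
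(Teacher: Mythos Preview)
The paper does not supply its own proof of this statement: Theorem~\ref{thm:flax} is simply a restatement of Theorem~3.3 from \citet{Flaxman05}, quoted as a black box and then invoked in the proofs of Theorems~\ref{thm:regretknownu} and~\ref{thm:modified}. Your proposal therefore goes further than the paper does, reconstructing the original Flaxman--Kalai--McMahan argument; the decomposition you give (Zinkevich OGD on the smoothed $\widehat f_t$ over $(1-\alpha)\mathcal{U}$, plus the three per-round surcharges for perturbation, smoothing, and comparator shrinkage, with feasibility enforced via $\delta \le \alpha r_{\smalldown}$) is exactly their proof.

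One bookkeeping slip is worth flagging. You set $\mathbf{g}_t = \tfrac{d}{\delta}f_t(\mathbf{x}_t+\delta\bm{\xi}_t)\bm{\xi}_t$ and then write the update as $\mathbf{x}_{t+1} = \Pi_{(1-\alpha)\mathcal{U}}(\mathbf{x}_t - \eta\,\mathbf{g}_t)$, but the \flaxmanabbrev{}/\algfirstabbrev{} update in this paper (Step~7 of Algorithm~\ref{alg:first}) is $\mathbf{x}_{t+1} = \Pi(\mathbf{x}_t - \eta\,R_t(\mathbf{p}_t)\,\bm{\xi}_t)$, \emph{without} the $d/\delta$ factor. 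Hence the effective OGD step size on your $\mathbf{g}_t$ is $\eta\delta/d$, not $\eta$. If you literally plug the paper's $\eta = r_{\smallup}/(B\sqrt T)$ into your displayed term $\tfrac{\eta T d^2 B^2}{2\delta^2}$, you get a contribution of order $T$, not $T^{3/4}$. Replacing $\eta$ by $\eta\delta/d$ in your OGD inequality (or, equivalently, running Zinkevich directly on $h_t = f_t(\widetilde{\mathbf x}_t)\bm{\xi}_t$ with $\|h_t\|_2\le B$ and $\mathbb{E}[h_t\mid\mathbf{x}_t] = \tfrac{\delta}{d}\nabla\widehat f_t(\mathbf{x}_t)$, then multiplying through by $d/\delta$) restores the $T^{3/4}$ scaling and the stated constant. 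The remainder of your argument is sound.
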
  

\subsection{Alternative OPOK Regret Bound}

We provide another bound on the regret of our pricing algorithm that is similar to Theorem \ref{cor:knownu}, but only relies on direct properties of the prices and revenue functions rather than properties of our assumed low-rank structure.

The following assumptions are adopted (revenue functions are bounded/smooth, and the set of feasible prices is bounded/well-scaled): 
\begin{itemize}  \setlength\itemsep{-0.1em}
\refstepcounter{assumptioncount} \label{assBalls}
 \item [(A\arabic{assumptioncount})] $\mathbf{U}^T(\mathcal{S})$ contains a Euclidean ball of radius $r_{\smalldown}$ and is contained within a ball of radius ${r_{\smallup} \ge r_{\smalldown}}$
\refstepcounter{assumptioncount} \label{assTfirst}
\item [(A\arabic{assumptioncount})] $T > \big(\frac{3dr_{\smallup}}{2r_{\smalldown}}\big)^2$  \hspace*{1.5mm} (the number of  pricing rounds is  large)
\refstepcounter{assumptioncount} \label{assbRobj}
\item [(A\arabic{assumptioncount})] $ \big| \mathbb{E} [R_t(\mathbf{p})] \big| \le B$ for all $\mathbf{p} \in \mathcal{S}$, $t = 1,\dots,T$ 
\refstepcounter{assumptioncount} \label{assLip}
\item [(A\arabic{assumptioncount})] $f_t (\mathbf{x}) $ is $L$-Lipschitz over $\mathbf{x} \in \mathbf{U}^T(\mathcal{S})$   for $t = 1,\dots,T$
\end{itemize}

\begin{thm}  \label{thm:regretknownu}
If conditions (A\ref{assBalls})-(A\ref{assLip}) are met and we choose $\eta = \frac{r_{\smallup}}{B\sqrt{T}}$, ${\delta = T^{-1/4}\sqrt{\frac{B d r_{\smallup} r_{\smalldown}}{3(Lr_{\smalldown} + B)}}}$, $\alpha = \frac{\delta}{r_{\smalldown}}$, then there exists $C > 0$ such that for any $\mathbf{p} \in \mathcal{S}$:
\begin{equation*} 
\mathbb{E}_{\bm{\epsilon}, \bm{\xi}} \hspace*{-2pt} \left[ \sum_{t=1}^T R_t(\mathbf{p}_t) - \hspace*{-1pt}  \sum_{t=1}^T R_t(\mathbf{p}) \hspace*{-1pt}  \right] \hspace*{-2pt} \le C T^{3/4} \hspace*{-1pt} \sqrt{B d r_{\smallup} \hspace*{-1pt} \bigg( \hspace*{-1pt} L + \frac{B}{r_{\smalldown}} \bigg) \hspace*{-2pt}}
\end{equation*}
for the prices $\mathbf{p}_1, \dots, \mathbf{p}_T$ selected by the \algfirstabbrev{} algorithm.
 \addcontentsline{toc}{subsubsection}{Theorem \ref{thm:regretknownu}}
\end{thm}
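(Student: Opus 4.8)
The plan is to show that, once the demand noise $\bm{\epsilon}_t$ is integrated out, the price-selection rule of the \algfirstabbrev{} algorithm coincides with the \flaxmanabbrev{} algorithm of Theorem~\ref{thm:flax} run on the $d$-dimensional function sequence $f_1,\dots,f_T$ from (\ref{eq:lowdimensionalequivalence}) over the projected constraint set $\mathcal{U}:=\mathbf{U}^T(\mathcal{S})$, and then to translate the resulting $d$-dimensional regret bound back into pricing regret.

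First I would check that $(f_t)_{t\le T}$ and $\mathcal{U}$ satisfy the hypotheses of Theorem~\ref{thm:flax}. Each $f_t$ is convex (the model keeps $\mathbf{V}_t\succeq0$), is $L$-Lipschitz on $\mathcal{U}$ by (A\ref{assLip}), and satisfies $|f_t(\mathbf{x})|=|\mathbb{E}_{\bm{\epsilon}}[R_t(\mathbf{p})]|\le B$ whenever $\mathbf{x}=\mathbf{U}^T\mathbf{p}$ with $\mathbf{p}\in\mathcal{S}$ by (A\ref{assbRobj}); and $\mathcal{U}$ is convex (a linear image of $\mathcal{S}$) and by (A\ref{assBalls}) is sandwiched between Euclidean balls of radii $r_{\smalldown}\le r_{\smallup}$, while (A\ref{assTfirst}) makes $T$ large enough that the prescribed $\eta,\delta,\alpha$ are admissible.

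Next I would identify the internal iterates $\mathbf{x}_1,\dots,\mathbf{x}_T$ of \algfirstabbrev{} with the \flaxmanabbrev{} iterates of Theorem~\ref{thm:flax}. The perturbation $\widetilde{\mathbf{x}}_t=\mathbf{x}_t+\delta\bm{\xi}_t$ is \flaxmanabbrev{}'s sampling step; \textproc{FindPrice}$(\widetilde{\mathbf{x}}_t,\mathbf{U},\mathcal{S},\mathbf{p}_{t-1})$ returns (once feasibility is established, see below) a $\mathbf{p}_t\in\mathcal{S}$ with $\mathbf{U}^T\mathbf{p}_t=\widetilde{\mathbf{x}}_t$, so Lemma~\ref{lem:grad} shows that the vector $R_t(\mathbf{p}_t)\bm{\xi}_t$ used in Step~7 is, conditionally on the history, an unbiased estimate of $\nabla\widehat f_t(\mathbf{x}_t)$ up to the scalar $\delta/d$ (which is folded into the step size), the extra term $-\langle\bm{\epsilon}_t,\mathbf{p}_t\rangle\bm{\xi}_t$ being conditionally mean-zero because $\mathbf{p}_t$ does not depend on $\bm{\epsilon}_t$; and \textproc{Projection}$(\mathbf{y},\alpha,\mathbf{U},\mathcal{S})$ returns $\arg\min_{\mathbf{z}\in(1-\alpha)\mathcal{U}}\|\mathbf{z}-\mathbf{y}\|_2$, i.e.\ the Euclidean projection of $\mathbf{y}$ onto the $\alpha$-contracted set, exactly as in \flaxmanabbrev{}. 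Hence Theorem~\ref{thm:flax} applies with the stated parameters and gives $\mathbb{E}_{\bm{\epsilon},\bm{\xi}}[\sum_t f_t(\mathbf{x}_t)]-\min_{\mathbf{x}\in\mathcal{U}}\sum_t f_t(\mathbf{x})\le 2T^{3/4}\sqrt{3Br_{\smallup}(L+B/r_{\smalldown})d}$. Integrating out $\bm{\epsilon}$, the pricing regret equals $\mathbb{E}_{\bm{\xi}}[\sum_t f_t(\widetilde{\mathbf{x}}_t)]-\sum_t f_t(\mathbf{U}^T\mathbf{p})$, which since $\mathbf{U}^T\mathbf{p}\in\mathcal{U}$ and, by (A\ref{assLip}), $f_t(\widetilde{\mathbf{x}}_t)\le f_t(\mathbf{x}_t)+L\delta$, is at most $2T^{3/4}\sqrt{3Br_{\smallup}(L+B/r_{\smalldown})d}+TL\delta$; substituting $\delta=T^{-1/4}\sqrt{Bdr_{\smallup}r_{\smalldown}/(3(Lr_{\smalldown}+B))}$ bounds $TL\delta$ by $T^{3/4}\sqrt{Bdr_{\smallup}L/3}\le T^{3/4}\sqrt{Bdr_{\smallup}(L+B/r_{\smalldown})}$, so the whole expression collapses to $C\,T^{3/4}\sqrt{Bdr_{\smallup}(L+B/r_{\smalldown})}$.

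The step I expect to be the main obstacle is the identification in the previous paragraph, specifically verifying that \textproc{FindPrice} and \textproc{Projection} genuinely realize, respectively, a feasible query point achieving exactly $\mathbf{U}^T\mathbf{p}_t=\widetilde{\mathbf{x}}_t$ and the Euclidean projection onto $(1-\alpha)\mathcal{U}$ — and in particular that the linear constraint $\mathbf{U}^T\mathbf{p}=\widetilde{\mathbf{x}}_t$ in \textproc{FindPrice} is always satisfiable. This is where (A\ref{assTfirst}) enters: it forces $\alpha=\delta/r_{\smalldown}$ small enough that the output of \textproc{Projection} lies in $(1-\alpha)\mathcal{U}$ and the perturbation $\delta\bm{\xi}_t$ (of norm $\delta=\alpha r_{\smalldown}$) keeps $\widetilde{\mathbf{x}}_t$ inside $\mathcal{U}$, by convexity of $\mathcal{U}$ together with the inner-ball property of (A\ref{assBalls}). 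One also has to confirm that the $\bm{\epsilon}_t$-term entering the gradient estimate is conditionally mean-zero with second moment controlled by $B$ and the noise variance, so that it is legitimately absorbed by the stochastic-gradient analysis behind Theorem~\ref{thm:flax}.
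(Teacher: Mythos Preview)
Your proposal is correct and follows essentially the same route as the paper: verify that the low-dimensional functions $f_t$ and the projected constraint set $\mathcal{U}=\mathbf{U}^T(\mathcal{S})$ satisfy the hypotheses of Theorem~\ref{thm:flax}, identify the \algfirstabbrev{} iterates with those of \flaxmanabbrev{} in $\mathbb{R}^d$, and read off the bound. The paper's proof is terser---it simply invokes ``Theorem~3.3 in \citet{Flaxman05}'' and notes feasibility---whereas you spell out the unbiasedness of the gradient estimate under the demand noise, the precise role of \textproc{Projection} as Euclidean projection onto $(1-\alpha)\mathcal{U}$, and an explicit $TL\delta$ correction for passing from $\mathbf{x}_t$ to $\widetilde{\mathbf{x}}_t$; this last step is in fact already absorbed in Flaxman et al.'s own analysis (their bound is for the played points), so your extra term is harmless but not strictly needed.
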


\begin{proof} 
Condition (A\ref{assbRobj}) implies the range of $f_t$ bounded by $B$ over $\mathbf{x} \in \mathbf{U}^T(\mathcal{S})$.  Recall that each $f_t$ is a convex function of $\mathbf{x}$ (as we required  each $\mathbf{V}_t \succeq 0$) and for any $\mathbf{p} \in \mathcal{S}$, we can define $\mathbf{x} = \mathbf{U}^T \mathbf{p} \in \mathbf{U}^T (\mathcal{S})$ such that: $\mathbb{E}_{\bm{\epsilon}}[R_t(\mathbf{p})] = f_t(\mathbf{x})$.  Since convexity of $\mathcal{S}$ implies $\mathbf{U}^T (\mathcal{S})$ is also convex, the proof of our result immediately follows from Theorem 3.3 in \citet{Flaxman05}, which is also restated here as Theorem  \ref{thm:flax}.  Finally, we note that since both $\mathcal{S}$ and $\mathbf{U}^T(\mathcal{S})$ are  convex, our choice of $\eta, \delta, \alpha$ ensures $\widetilde{\mathbf{x}}_t \in \mathbf{U}^T(\mathcal{S})$ and hence $\mathbf{p}_t \in \mathcal{S}$ for all $t$.
\end{proof}

\subsection{Proof of Theorem \ref{cor:knownu}}

\begin{customthm}{\ref{cor:knownu}}
Under assumptions (A\ref{assbz})-(A\ref{assballS}), if we choose $\eta = \frac{1}{b(1+d)\sqrt{T}}$, $\delta = T^{-1/4}\sqrt{\frac{d r^2 (1+r)}{9 r + 6}}$, $\alpha = \frac{\delta}{r}$, then there exists $C > 0$ such that for any $\mathbf{p} \in \mathcal{S}$:
\begin{equation*}
\mathbb{E}_{\bm{\epsilon}, \bm{\xi}} \left[ \sum_{t=1}^T R_t(\mathbf{p}_t) -  \sum_{t=1}^T R_t(\mathbf{p})  \right] \le C  br (r+1)  T^{3/4}  d^{1/2}   
\end{equation*}
for the prices $\mathbf{p}_1, \dots, \mathbf{p}_T$ selected by the \algfirstabbrev{} algorithm.
\end{customthm}

\begin{proof}
We show that  (A\ref{assbz})-(A\ref{assballS}) imply Theorem \ref{thm:regretknownu} holds with  ${r_{\smallup} =  r_{\smalldown} = r}$,  $B = rb(1 + r)$, and $L = (2r+1)b$.  
Bounding and simplifying the inequality then produces the desired result.
Note that (A\ref{assbRobj}) holds since:
\begin{align*}
f_t(\mathbf{x}) & 
\le ||\mathbf{x}||^2_2 || \mathbf{V}_t ||_\text{op}  + || \mathbf{z}_t ||_2 || \mathbf{x} ||_2 \le r^2 b + r b
\end{align*}
We also have Lipschitz continuity as required in (A\ref{assLip}), since for all $\mathbf{x} \in \mathbf{U}^T (\mathcal{S})$:
\begin{align*} 
 || \nabla_\mathbf{x} f_t(\mathbf{x}) ||_2  = ||( \mathbf{V}_t^T  +\mathbf{V}_t) \mathbf{x} - \mathbf{z}_t ||_2 
\le \ 2|| \mathbf{V}_t||_\text{op} || \mathbf{x} ||_2 + || \mathbf{z}_t ||_2 \le 2 b r + b \hspace*{10mm}
\end{align*}
Finally, Lemma \ref{thm:xball} below implies (A\ref{assBalls}) holds with $r_{\smallup} = r_{\smalldown} = r$.
\end{proof}

\bigskip

\begin{lem} \label{thm:xball}
For any orthogonal $N \times d$ matrix $\mathbf{U}$ and $\mathbf{p} \in \mathcal{S}$, condition (A\ref{assballS}) implies: 
$$\mathbf{U}^T ( \mathcal{S}) = \{\mathbf{x} \in \mathbb{R}^d : ||\mathbf{x}||_2 \le r \} \ \text{ and } \ \mathbf{U} \mathbf{U}^T (\mathbf{p}) \in \mathcal{S}$$
 \addcontentsline{toc}{subsubsection}{Lemma \ref{thm:xball} }
\end{lem}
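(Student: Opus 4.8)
The whole argument rests on two elementary consequences of the orthogonality condition (A\ref{assUorth}), namely $\mathbf{U}^T\mathbf{U} = \mathbf{I}_{d\times d}$: first, $\mathbf{U}$ acts as an isometry on $\mathbb{R}^d$, so $\|\mathbf{U}\mathbf{x}\|_2 = \|\mathbf{x}\|_2$ for every $\mathbf{x}\in\mathbb{R}^d$; second, $\mathbf{U}\mathbf{U}^T$ is the orthogonal projection onto $\mathrm{span}(\mathbf{U})$, hence $\mathbf{U}\mathbf{U}^T \preceq \mathbf{I}_{N\times N}$ and consequently $\|\mathbf{U}^T\mathbf{p}\|_2^2 = \mathbf{p}^T\mathbf{U}\mathbf{U}^T\mathbf{p} \le \|\mathbf{p}\|_2^2$ for every $\mathbf{p}\in\mathbb{R}^N$. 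I would state these two facts at the outset.

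For the set identity, I would prove the two inclusions separately. For ``$\subseteq$'': any element of $\mathbf{U}^T(\mathcal{S})$ has the form $\mathbf{U}^T\mathbf{p}$ with $\mathbf{p}\in\mathcal{S}$, i.e.\ $\|\mathbf{p}\|_2 \le r$ by (A\ref{assballS}); the contraction estimate above then gives $\|\mathbf{U}^T\mathbf{p}\|_2 \le r$. For ``$\supseteq$'': given any $\mathbf{x}\in\mathbb{R}^d$ with $\|\mathbf{x}\|_2 \le r$, set $\mathbf{p} := \mathbf{U}\mathbf{x}$; then $\|\mathbf{p}\|_2 = \|\mathbf{x}\|_2 \le r$ so $\mathbf{p}\in\mathcal{S}$, and $\mathbf{U}^T\mathbf{p} = \mathbf{U}^T\mathbf{U}\mathbf{x} = \mathbf{x}$, exhibiting $\mathbf{x}$ as a member of $\mathbf{U}^T(\mathcal{S})$. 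This establishes $\mathbf{U}^T(\mathcal{S}) = \{\mathbf{x}\in\mathbb{R}^d : \|\mathbf{x}\|_2 \le r\}$.

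For the second claim, fix $\mathbf{p}\in\mathcal{S}$ and apply the isometry property with $\mathbf{x} = \mathbf{U}^T\mathbf{p}$ to get $\|\mathbf{U}\mathbf{U}^T\mathbf{p}\|_2 = \|\mathbf{U}^T\mathbf{p}\|_2$, which by the contraction estimate is at most $\|\mathbf{p}\|_2 \le r$; hence $\mathbf{U}\mathbf{U}^T\mathbf{p}\in\mathcal{S}$ by (A\ref{assballS}).

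There is no real obstacle here; the only point requiring a (trivial) idea rather than a direct estimate is the ``$\supseteq$'' inclusion, where one must produce a preimage, and the natural choice $\mathbf{p} = \mathbf{U}\mathbf{x}$ works precisely because $\mathbf{U}$ is a right inverse of $\mathbf{U}^T$ on $\mathbb{R}^d$ and simultaneously norm-preserving. The remaining steps are one-line norm inequalities.
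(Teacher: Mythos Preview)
Your proof is correct and follows essentially the same route as the paper: establish the contraction $\|\mathbf{U}^T\mathbf{p}\|_2 \le \|\mathbf{p}\|_2$, use it for the ``$\subseteq$'' inclusion and for $\mathbf{U}\mathbf{U}^T\mathbf{p}\in\mathcal{S}$, and exhibit the preimage $\mathbf{p}=\mathbf{U}\mathbf{x}$ for ``$\supseteq$''. The only cosmetic difference is that you justify the contraction via $\mathbf{U}\mathbf{U}^T \preceq \mathbf{I}$ directly, whereas the paper obtains it by extending $\mathbf{U}$ to a square orthogonal matrix $\mathbf{W}=[\mathbf{U},\widetilde{\mathbf{U}}]$ and using $\|\mathbf{U}^T\mathbf{p}\|_2 \le \|\mathbf{W}^T\mathbf{p}\|_2 = \|\mathbf{p}\|_2$.
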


\begin{proof}
Consider the orthogonal extension of $\mathbf{U}$, a matrix $\mathbf{W} = [\mathbf{U}, \widetilde{\mathbf{U}}] \in \mathbb{R}^{N \times N}$ formed by appending $N - d$ additional orthonormal columns to $\mathbf{U}$ that are also orthogonal to the columns of $\mathbf{U}$.  For any $\mathbf{p} \in \mathbb{R}^N$, we have: 
\begin{align*} 
 ||\mathbf{U} \mathbf{U}^T \mathbf{p} ||_2 & = || \mathbf{U}^T \mathbf{p} ||_2 \tag*{since orthogonality implies $\mathbf{U}$  is an isometry}  
 \\
 & \le || \mathbf{W}^T \mathbf{p} ||_2  \tag*{because $|| \mathbf{W}^T \mathbf{p} ||_2^2 = ||\mathbf{U}^T \mathbf{p} ||_2^2 + ||\widetilde{\mathbf{U}}^T \mathbf{p} ||_2^2$}
 \\
 & = || \mathbf{W} \mathbf{W}^T \mathbf{p} ||_2 \tag*{since $\mathbf{W}$ is also an isometry}
 \\
 & = || \mathbf{p} ||_2 \tag*{due to the fact that $\mathbf{W}^T = \mathbf{W}^{-1}$ as $\mathbf{W}$ is square and orthogonal}
\end{align*} 
Combined with (A\ref{assballS}), this implies $\mathbf{U} \mathbf{U}^T (\mathbf{p}) \in \mathcal{S}$ and $||\mathbf{x}||_2 \le r$  for any $\mathbf{x} \in \mathbf{U}^T ( \mathcal{S})$.  
Now fix arbitrary $\mathbf{x} \in \mathbb{R}^d$ which satisfies $||\mathbf{x}||_2 \le r$.  By orthogonality of $\mathbf{U}$: \\
\[||\mathbf{U} \mathbf{x}||_2 = ||\mathbf{x}||_2  \le r \implies \mathbf{U} \mathbf{x} \in \mathcal{S}, \text{ and } \mathbf{U}^T \mathbf{U} \mathbf{x} = \mathbf{x} \implies \mathbf{x} \in \mathbf{U}^T ( \mathcal{S}) 
\qedhere
\] 
\end{proof}

\subsection{Proof of Lemma \ref{lem:contract}}

\begin{customlem}{\ref{lem:contract}}
For any orthogonal matrix $\widehat{\mathbf{U}}$ and any ${\mathbf{x} \in  \widehat{\mathbf{U}}^T (\mathcal{S})}$, define ${\widehat{\mathbf{p}} = \widehat{\mathbf{U}} \mathbf{x} \in \mathbb{R}^N}$.
\\
Under condition (A\ref{assballS}): ${\widehat{\mathbf{p}} \in \mathcal{S}}$ and $\widehat{\mathbf{p}} = $ \textproc{FindPrice}(${\mathbf{x},  \widehat{\mathbf{U}}, \mathcal{S}, \mathbf{0}}$).   
\end{customlem}

\begin{proof}
Given $\mathbf{x} \in  \widehat{\mathbf{U}}^T (\mathcal{S})$, there exists $\mathbf{p} \in \mathcal{S}$ with $\widehat{\mathbf{U}}^T \mathbf{p} = \mathbf{x}$.
The proof of Lemma \ref{thm:xball} implies  ${||\widehat{\mathbf{p}}||_2 \le ||\mathbf{p}||_2}$ and  $\widehat{\mathbf{p}} = \widehat{\mathbf{U}} \mathbf{x} =  \widehat{\mathbf{U}} \widehat{\mathbf{U}}^T \mathbf{p} \in \mathcal{S}$ when this set is a centered Euclidean ball. Finally, we note that $\widehat{\mathbf{U}}^T \widehat{\mathbf{p}} = \mathbf{x}$ since $\widehat{\mathbf{U}}^T \widehat{\mathbf{U}} = \mathbf{I}_{d \times d}$, so $\widehat{\mathbf{p}}$ is the minimum-norm vector in $\mathcal{S}$ which is mapped to $\mathbf{x}$ by $\widehat{\mathbf{U}}^T$.
\end{proof}

\subsection{Proof of Theorem \ref{thm:modified}}

\begin{customthm}{\ref{thm:modified}}
Suppose $\text{span}(\widehat{\mathbf{U}}) = \text{span}(\mathbf{U})$, i.e.\ our orthogonal estimate has the same column-span as the underlying (rank $d$) latent  product-feature matrix.  Let  $\mathbf{p}_1, \dots, \mathbf{p}_T \in \mathcal{S}$ denote the prices selected by our modified \algfirstabbrev{} algorithm with $\widehat{\mathbf{U}}$ used in place of the underlying $\mathbf{U}$ and $\eta, \delta, \alpha$ chosen as in Theorem \ref{cor:knownu}.  Under conditions (A\ref{assbz})-(A\ref{assballS}), there exists $C > 0$ such that for any $\mathbf{p} \in \mathcal{S}$:
\begin{equation*} 
\mathbb{E}_{\bm{\epsilon}, \bm{\xi}} \left[ \sum_{t=1}^T R_t(\mathbf{p}_t) -  \sum_{t=1}^T R_t(\mathbf{p})  \right] 
\le   C  br (r+1)  T^{3/4}  d^{1/2} 
\end{equation*}
\end{customthm}

\begin{proof}
Define $\displaystyle \widebar{\mathbf{p}} = \argmin_{\mathbf{p} \in \mathcal{S}} \mathbb{E}_{\bm{\epsilon}} \sum_{t=1}^T R_t(\mathbf{p})$, ${\mathbf{p}^*} = \mathbf{U} \mathbf{U}^T \widebar{\mathbf{p}}$.  
Note that  $\mathbb{E}_{\bm{\epsilon}} \left[ \sum_{t=1}^T R_t({\mathbf{p}^*}) \right] =\mathbb{E}_{\bm{\epsilon}}\left[ \sum_{t=1}^T R_t(\widebar{\mathbf{p}}) \right]$ and ${\mathbf{p}^* \in \mathcal{S}}$ by Lemma \ref{thm:xball}, so ${\mathbf{p}^*}$ is an equivalently optimal setting of the product prices.
  Since $\mathbf{U}$ and $\widehat{\mathbf{U}}$ share the same column-span, there exists low-dimensional action $\mathbf{x}^* \in \mathbb{R}^k$ such that ${{\mathbf{p}^*} = \widehat{\mathbf{U}} {\mathbf{x}^*}}$.
By orthogonality of $\widehat{\mathbf{U}}$: $\widehat{\mathbf{U}}^T \widetilde{\mathbf{p}} =  \widehat{\mathbf{U}}^T  \widehat{\mathbf{U}} \mathbf{x}^* = \mathbf{x}^*$, so $\mathbf{x}^* \in \widehat{\mathbf{U}}^T (\mathcal{S})$ is a feasible solution to our modified \algfirstabbrev{} algorithm.
For $\mathbf{x} \in \mathbb{R}^d$ and $\mathbf{p} = \widehat{\mathbf{U}} \mathbf{x} \in \mathbb{R}^N$, we re-express the expected revenue at this price vector by introducing $f_{t,\widehat{\mathbf{U}}}$ as a function of $\mathbf{x}$ parameterized by $\widehat{\mathbf{U}}$, as similarly done in (\ref{eq:lowdimensionalequivalence}):  
\begin{align*} 
f_{t,\widehat{\mathbf{U}}}(\mathbf{x})  = \mathbb{E}_{\bm{\epsilon}}[R_t(\mathbf{p})] 
=  \mathbf{x}^T \widehat{\mathbf{U}}^T \mathbf{U}  \mathbf{V}_t \mathbf{U}^T  \widehat{\mathbf{U}}
  \mathbf{x} - \mathbf{x}^T \widehat{\mathbf{U}}^T \mathbf{U} \mathbf{z}_t
\numberthis \label{eq:fy}
\end{align*}
Convexity of $R_t$ in $\mathbf{p}$ implies $f_{t,\widehat{\mathbf{U}}}$ is convex in $\mathbf{x}$ for any $\widehat{\mathbf{U}}$.  Note that our modified \algfirstabbrev{} algorithm is (in expectation) running online projected gradient descent on a smoothed version of each $f_{t,\widehat{\mathbf{U}}}$, defined similarly as in (\ref{eq:smoothed}).  Via the same argument employed in the previous section (based on Theorem \ref{thm:flax} and the proof of Theorem \ref{cor:knownu}), we can show that for  $\mathbf{x}^* \in \widehat{\mathbf{U}}^T (\mathcal{S})$:
\begin{equation*}
\mathbb{E}_{\bm{\xi}} \left[ \sum_{t=1}^T f_{t,\widehat{\mathbf{U}}}(\widetilde{\mathbf{x}}_t) - \sum_{t=1}^T  f_{t,\widehat{\mathbf{U}}}(\mathbf{x}^*) \right]   \le C  br (r+1)  T^{3/4}  d^{1/2} 
\end{equation*}
where $\widetilde{\mathbf{x}}_t$ are the low-dimensional actions chosen in Step 5 of our modified \algfirstabbrev{} algorithm, such that $\mathbf{p}_t = \widehat{\mathbf{U}} \widetilde{\mathbf{x}}_t$ for the prices output by this method.  To conclude the proof, we recall that for the \algfirstabbrev{}-selected $\mathbf{p}_t$:
\begin{align*}
& \hspace*{-2mm} \mathbb{E} \hspace*{-0.5mm} \sum_{t=1}^T \hspace*{-0.5mm}  R_t(\mathbf{p}_t)  \hspace*{-0.5mm}  = \hspace*{-0.5mm}   \mathbb{E} \hspace*{-0.5mm}   \sum_{t=1}^T \hspace*{-0.5mm}  f_{\hspace*{-0.5mm}  t,\widehat{\mathbf{U}}}\hspace*{-0.5mm}  (\widetilde{\mathbf{x}}_t), \ 
 \mathbb{E} \hspace*{-0.5mm}  \sum_{t=1}^T \hspace*{-0.5mm}  R_t(\mathbf{p}^*) \hspace*{-0.5mm}  = \hspace*{-0.5mm} \sum_{t=1}^T \hspace*{-0.5mm}  f_{\hspace*{-0.5mm} t,\widehat{\mathbf{U}}} \hspace*{-0.5mm} (\mathbf{x}^*)  \hspace*{1mm} 
\qedhere
\end{align*}

\end{proof}

\subsection{Proof of Lemma \ref{lem:zerochance}}

\begin{customlem}{\ref{lem:zerochance}}
Suppose that for $t = 1, \dots, T$: $\bm{\epsilon}_t = 0$ and $\mathbf{V}_t \succ 0$.  If each $\mathbf{p}_t$ is independently uniformly distributed within some (uncentered) Euclidean ball of strictly positive radius, then span$(\mathbf{q}_1,\dots, \mathbf{q}_d) = \text{span}(\mathbf{U})$ almost surely.  
\end{customlem}

\begin{proof}
In Lemma \ref{lem:zerochance}, we suppose that each $\mathbf{p}_t = \widetilde{\mathbf{p}}_t + \bm{\zeta}_t$, where each $\bm{\zeta}_t$ is uniformly drawn from a centered Euclidean ball of nonzero radius in $\mathbb{R}^N$ and $\mathbf{z}_t, \mathbf{V}_t, \widetilde{\mathbf{p}}_t$ are fixed independently of the randomness in $\bm{\zeta}_t$.  Note that each ${\mathbf{q}_t = \mathbf{U} \mathbf{s}_t}$ where ${\mathbf{s}_t = \mathbf{z}_t - \mathbf{V}_t \mathbf{U}^T \mathbf{p}_t \in \mathbb{R}^d}$.
 Thus, ${\text{span}(\mathbf{q}_1,\dots, \mathbf{q}_d) \subseteq \text{span}(\mathbf{U})}$ and the two spans must be equal if $\mathbf{s}_1, \dots, \mathbf{s}_d$ are linearly independent.  
 
 To show linear independence holds almost surely, we proceed inductively by proving ${\Pr (\mathbf{s}_{t} \in \text{span}( \mathbf{s}_1, \dots, \mathbf{s}_{t-1}) ) = 0}$ for any $1 < t \le d$.  We first note that $\mathbf{s}_{t} = \mathbf{z}_t - \mathbf{V}_t \mathbf{U}^T \widetilde{\mathbf{p}}_t  - \mathbf{V}_t \mathbf{U}^T \bm{\zeta}_t$.  Since $\mathbf{V}_t \succ 0$ is invertible and $\mathbf{U}$ is orthogonal, $\mathbf{V}_t \mathbf{U}^T \bm{\zeta}_t$ is uniformly distributed over a nondegenerate ellipsoid $\mathcal{E} \subset \mathbb{R}^d$ with nonzero variance under any projection in $\mathbb{R}^d$.  Since this includes directions orthogonal to the   $(t-1)$-dimensional subspace spanned by  $\mathbf{s}_{1} + \mathbf{V}_1 \mathbf{U}^T \widetilde{\mathbf{p}}_1  -  \mathbf{z}_1, \dots, \mathbf{s}_{t-1} + \mathbf{V}_{t-1} \mathbf{U}^T \widetilde{\mathbf{p}}_{t-1}  -  \mathbf{z}_{t-1}$, this subspace has measure zero under the uniform distribution over $\mathcal{E}$ (for $t \le d$).
\end{proof}

\begin{thm}[\citeauthor{Yu15}, \citeyear{Yu15}] \label{thm:yu}
Let ${\sigma_1 > \dots > \sigma_d > 0}$ denote the nonzero singular values of rank $d$ matrix ${\mathbf{Q} \in \mathbb{R}^{N \times d}}$, whose left singular vectors are represented as columns in matrix $\mathbf{U} \in \mathbb{R}^{N \times d}$ (such that $\mathbf{Q}$ has SVD: $\mathbf{U} \mathbf{\Sigma} \mathbf{V}^T$).  If $\widehat{\mathbf{U}} \in \mathbb{R}^{N \times d}$ similarly contains the left singular vectors of some other $N \times d$ matrix $\widehat{\mathbf{Q}}$, then there exists orthogonal matrix $\widehat{\mathbf{O}} \in \mathbb{R}^{d \times d} $ such that 
\[
|| \widehat{\mathbf{U}} \widehat{\mathbf{O}}  - \mathbf{U} ||_F \le \frac{2\sqrt{2d}}{\sigma^2_d} \big(2 \sigma_1 + || \widehat{\mathbf{Q}} - \mathbf{Q} ||_{\textup{op}} \big) || \widehat{\mathbf{Q}} - \mathbf{Q} ||_{\textup{op}}
\]
 \addcontentsline{toc}{subsubsection}{Theorem \ref{thm:yu}}
\end{thm}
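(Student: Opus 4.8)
The idea is to reduce this singular-vector statement to a symmetric eigenvector-perturbation problem and then apply the Frobenius-norm Davis--Kahan $\sin\Theta$ theorem with the spectral gap measured on the \emph{unperturbed} matrix. First I would pass to the Gram matrices $\mathbf{M} := \mathbf{Q}\mathbf{Q}^T$ and $\widehat{\mathbf{M}} := \widehat{\mathbf{Q}}\widehat{\mathbf{Q}}^T$, which are symmetric positive semidefinite. The columns of $\mathbf{U}$ are exactly the eigenvectors of $\mathbf{M}$ for its top $d$ eigenvalues $\sigma_1^2 \ge \dots \ge \sigma_d^2$, and those of $\widehat{\mathbf{U}}$ are the top-$d$ eigenvectors of $\widehat{\mathbf{M}}$. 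Because $\mathbf{Q}$ has rank exactly $d$, every remaining eigenvalue of $\mathbf{M}$ vanishes, so $\mathrm{span}(\mathbf{U})$ is separated from its orthogonal complement by a gap of exactly $\sigma_d^2 > 0$ (only $\sigma_d > 0$ is used; the strict ordering of the individual singular values is not needed for the subspace bound). The decisive feature of the statistician's variant of Davis--Kahan is that this gap is controlled on $\mathbf{M}$ alone, so I never have to reason about the spectrum of the perturbed $\widehat{\mathbf{M}}$.

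Next I would convert the matrix perturbation into a Gram-matrix perturbation. Writing $\widehat{\mathbf{M}} - \mathbf{M} = (\widehat{\mathbf{Q}} - \mathbf{Q})\widehat{\mathbf{Q}}^T + \mathbf{Q}(\widehat{\mathbf{Q}} - \mathbf{Q})^T$ and applying submultiplicativity of the operator norm together with $\|\mathbf{Q}\|_{\text{op}} = \sigma_1$ and the triangle bound $\|\widehat{\mathbf{Q}}\|_{\text{op}} \le \sigma_1 + \|\widehat{\mathbf{Q}} - \mathbf{Q}\|_{\text{op}}$ yields
\[
\|\widehat{\mathbf{M}} - \mathbf{M}\|_{\text{op}} \le \big(2\sigma_1 + \|\widehat{\mathbf{Q}} - \mathbf{Q}\|_{\text{op}}\big)\,\|\widehat{\mathbf{Q}} - \mathbf{Q}\|_{\text{op}}.
\]
This is precisely the numerator of the claimed bound, so the task reduces to showing $\|\widehat{\mathbf{U}}\widehat{\mathbf{O}} - \mathbf{U}\|_F \le \tfrac{2\sqrt{2d}}{\sigma_d^2}\,\|\widehat{\mathbf{M}} - \mathbf{M}\|_{\text{op}}$ for a suitable orthogonal $\widehat{\mathbf{O}}$.

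For the last step I would invoke the symmetric Frobenius $\sin\Theta$ bound for the top-$d$ eigenspaces of $\mathbf{M}$ and $\widehat{\mathbf{M}}$, which under gap $\sigma_d^2$ gives $\|\sin\Theta(\widehat{\mathbf{U}},\mathbf{U})\|_F \le 2\sqrt{d}\,\|\widehat{\mathbf{M}} - \mathbf{M}\|_{\text{op}}/\sigma_d^2$; the factor $\sqrt{d}$ is the dimension of the tracked subspace and comes from the $\min(\sqrt{d}\,\|\cdot\|_{\text{op}},\,\|\cdot\|_F)$ form of the estimate. To convert this canonical-angle bound into an alignment bound, I would take $\widehat{\mathbf{O}}$ to be the orthogonal polar factor of $\widehat{\mathbf{U}}^T\mathbf{U}$ and use the elementary inequality $1 - \cos\theta_i \le \sin^2\theta_i$ on each principal angle $\theta_i \in [0,\pi/2]$, giving $\inf_{\mathbf{O}} \|\widehat{\mathbf{U}}\mathbf{O} - \mathbf{U}\|_F^2 = 2\sum_i(1 - \cos\theta_i) \le 2\,\|\sin\Theta(\widehat{\mathbf{U}},\mathbf{U})\|_F^2$, i.e. an extra factor $\sqrt{2}$. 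Chaining the three estimates reproduces the constant $2\sqrt{2}\cdot\sqrt{d} = 2\sqrt{2d}$ over the denominator $\sigma_d^2$.

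\textbf{Main obstacle.} The entire difficulty is constant bookkeeping: the factor $2$ from the $\sin\Theta$ theorem, the factor $\sqrt{d}$ from the subspace dimension, and the factor $\sqrt{2}$ from the angle-to-alignment passage must combine to exactly $2\sqrt{2d}$, while the Gram perturbation must land on exactly $(2\sigma_1 + \|\widehat{\mathbf{Q}} - \mathbf{Q}\|_{\text{op}})\|\widehat{\mathbf{Q}} - \mathbf{Q}\|_{\text{op}}$ rather than a looser surrogate. The most economical presentation is simply to cite the packaged singular-vector $\sin\Theta$ result of \citet{Yu15} specialized to the window $r = 1$, $s = d$: there the relevant lower gap is $\sigma_d^2 - \sigma_{d+1}^2 = \sigma_d^2$ and the upper-gap condition is vacuous, after which only the operator-norm algebra displayed above remains.
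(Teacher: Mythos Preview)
Your proposal is correct, but there is no comparison to make: the paper does not prove this statement. Theorem~\ref{thm:yu} is stated in the appendix purely as a quotation of the result of \citet{Yu15} and is then used as a black box inside the proof of Lemma~\ref{thm:uestimate}; no argument for it is given anywhere in the paper.

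Your outline is essentially the proof that appears in \citet{Yu15} itself: pass to the symmetric Gram matrices so that the left singular vectors become eigenvectors, apply the Frobenius $\sin\Theta$ bound with the gap $\sigma_d^2-\sigma_{d+1}^2=\sigma_d^2$ taken on the unperturbed matrix, convert $\|\sin\Theta\|_F$ to an alignment bound $\|\widehat{\mathbf{U}}\widehat{\mathbf{O}}-\mathbf{U}\|_F$ via the orthogonal Procrustes solution (picking up the factor $\sqrt{2}$), and control $\|\widehat{\mathbf{Q}}\widehat{\mathbf{Q}}^T-\mathbf{Q}\mathbf{Q}^T\|_{\text{op}}$ by the displayed telescoping identity. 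Your bookkeeping of the constants $2\cdot\sqrt{d}\cdot\sqrt{2}=2\sqrt{2d}$ and of the numerator $(2\sigma_1+\|\widehat{\mathbf{Q}}-\mathbf{Q}\|_{\text{op}})\|\widehat{\mathbf{Q}}-\mathbf{Q}\|_{\text{op}}$ is accurate. The only minor remark is that in the final paragraph you could simply invoke Theorem~3 of \citet{Yu15} directly (as you note), since that result is already phrased for singular vectors and already absorbs the Gram-matrix algebra; the preceding paragraphs are then a reconstruction of their proof rather than an alternative route.
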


\subsection{Proof of Theorem \ref{thm:unknownspan}}
\label{sec:proof:unknownspan}

\begin{customthm}{\ref{thm:unknownspan}}
For unknown $\mathbf{U}$, let $\mathbf{p}_1, \dots, \mathbf{p}_T$ be the prices selected by the \algsecondabbrev{} algorithm with $\eta, \delta, \alpha$ set as in Theorem \ref{cor:knownu}.  Suppose $\bm{\epsilon}_t$ follows a sub-Gaussian$(\sigma^2 )$ distribution and has statistically independent dimensions for each $t$.   If  (A\ref{assbz})-(A\ref{assballS}) hold, then there exists ${C > 0}$ such that for any $\mathbf{p} \in \mathcal{S}$:
\begin{equation*}
\mathbb{E}_{\bm{\epsilon}, \bm{\xi}} \left[ \sum_{t=1}^T R_t(\mathbf{p}_t) -  \sum_{t=1}^T R_t(\mathbf{p})  \right] 
\le C Q rb (4r + 1) d T^{3/4} 
\end{equation*}
Here, $Q = \max\left\{1, \sigma^2 \left(\frac{2\sigma_1 + 1}{\sigma_d^2}\right) \right\}$ with $\sigma_1$ (and $\sigma_d$) defined as the largest (and smallest) nonzero singular values of the underlying rank $d$ matrix $\widebar{\mathbf{Q}}^*$ defined in (\ref{eq:qbar}).
\end{customthm}

\begin{proof}
For notational convenience, suppose that $T$ is divisible by $d$, $T^{3/4} \ge d \ge 3$, and the noise-variation parameter $\sigma \ge 1$ throughout our proof.  Throughout, the unknown $\mathbf{U}$ is orthogonal and rank $d$, and we let $ \mathbf{p}^* = \argmin_{\mathbf{p} \in \mathcal{S}} \mathbb{E} \left[\sum_{t=1}^T R_t(\mathbf{p})\right]$ denote the optimal product pricing.

Recall from the proof of Theorem \ref{thm:modified} that under our low-rank demand model, we can redefine $\mathbf{p}^* \leftarrow \mathbf{U} \mathbf{U}^T \mathbf{p}^* \in \mathcal{S}$ and still ensure $ \mathbf{p}^* = \argmin_{\mathbf{p} \in \mathcal{S}} \mathbb{E}\left[ \sum_{t=1}^T  R_t(\mathbf{p}) \right]$.  Thus, we suppose without loss of generality that the optimal prices can be expressed as $\mathbf{p}^* = \mathbf{U} \mathbf{x}^*$
for some corresponding low-dimensional action $\mathbf{x}^* \in \mathbf{U}^T (\mathcal{S})$.

For additional clarity, we use $\widehat{\mathbf{U}}_t$ to denote the current $N \times d$ estimate of the underlying product features obtained in Step 10 of our \algsecondabbrev{} algorithm at round $t$.  Note that the $\widehat{\mathbf{U}}_t$ are random variables which are determined by both the noise in the observed demands and the randomness employed within our pricing algorithm.  Letting $\mathbf{p}_t = \widehat{\mathbf{U}}_t \mathbf{x}_t$ denote the prices chosen by the \algsecondabbrev{} algorithm in each round (and $\mathbf{x}_t \in \widehat{\mathbf{U}}_t^T (\mathcal{S}) $ the corresponding low-dimensional actions), we have: 
\begin{align*}
& \mathbb{E} \hspace*{-0.1mm}  \sum_{t=1}^T \hspace*{-0.4mm}  \left[ R_t( \mathbf{p}_t) - R_t( \mathbf{p}^*) \hspace*{-0.5mm}  \right] \hspace*{-0.5mm}  =   \numberthis \label{eq:regretunk}
\\
& \mathbb{E} \hspace*{-0.1mm}  \sum_{t=1}^{T^{3/4}} \hspace*{-0.4mm}  \left[   f_{t,\widehat{\mathbf{U}}_t}( \mathbf{x}_t) - f_{t,\mathbf{U}}( \mathbf{x}^*) \hspace*{-0.5mm} \right] 
\hspace*{-0.5mm}  + 
 \mathbb{E} \hspace*{-2mm}  \sum_{t=T^{3/4}}^{T} \hspace*{-1mm} \left[    f_{t,\widehat{\mathbf{U}}_t}( \mathbf{x}_t)  - f_{t,\widehat{\mathbf{U}}_t}( \widetilde{\mathbf{x}}) \hspace*{-0.5mm}  \right]
 \hspace*{-0.5mm}  + 
 \mathbb{E} \hspace*{-2mm}   \sum_{t=T^{3/4}}^{T} \hspace*{-1mm}  \left[  f_{t,\widehat{\mathbf{U}}_t}( \widetilde{\mathbf{x}})  - f_{t,\mathbf{U}}(\mathbf{x}^*) \hspace*{-0.5mm}  \right]
\\ 
& \text{ where $f_{t,\mathbf{U}}$ is defined as in (\ref{eq:fy}) and we define }  \widetilde{\mathbf{x}} = \argmin_{\mathbf{x} \in \mathbf{U}^T (\mathcal{S})} \ \mathbb{E}\left[ \sum_{t=T^{3/4}}^T  f_{t,\widehat{\mathbf{U}}_t}( \mathbf{x}) \right].
\end{align*}
The proof of Theorem \ref{cor:knownu} ensures both $|f_{t,\mathbf{U}}|$ and $|f_{t,\widehat{\mathbf{U}}_t}|$  (for any orthogonal $\widehat{\mathbf{U}}_t$) are bounded by $rb(1+r)$ over all $\mathbf{x} \in \mathbf{U}^T (\mathcal{S})$, so we can trivially bound the first summand in (\ref{eq:regretunk}): 
\[
 \sum_{t=1}^{T^{3/4}} \left[ f_{t,\widehat{\mathbf{U}}_t}( \mathbf{x}_t) - f_{t,\mathbf{U}}( \mathbf{x}^*) \right] \le  rb(1+r) \cdot T^{3/4}
\]
To bound the second summand in (\ref{eq:regretunk}), we first point out that $\mathbf{U}^T (\mathcal{S}) = \widehat{\mathbf{U}}_t^T (\mathcal{S})$ by Lemma \ref{thm:xball} (since all $\widehat{\mathbf{U}}_t$ are restricted to be orthogonal).  Thus, Algorithm \ref{alg:second} is essentially running the classic gradient-free bandit method of 
\cite{Flaxman05} to optimize the functions $f_{t,\widehat{\mathbf{U}}_t}$ over the low-dimensional action-space $\mathbf{U}^T (\mathcal{S})$, and the second term is exactly the regret of this method stated in Theorem \ref{cor:knownu}:
\[   \mathbb{E} \hspace*{-1mm} \sum_{t=T^{3/4}}^{T} \left[  f_{t,\widehat{\mathbf{U}}_t}( \mathbf{x}_t)  - f_{t,\widehat{\mathbf{U}}_t}( \widetilde{\mathbf{x}})  \right] \le C  br (r+1)  \left[T - T^{3/4}\right]^{3/4}  d^{1/2} 
\]
Finally, we complete the proof by bounding the third summand in (\ref{eq:regretunk}).  Defining ${\mathcal{O} \subset \mathbb{R}^{d \times d}}$ as the set of orthogonal $d \times d$ matrices, we have: 
\begin{align*}
& \mathbb{E} \hspace*{-1mm}  \sum_{t=T^{3/4}}^{T} \left[  f_{t,\widehat{\mathbf{U}}_t}( \widetilde{\mathbf{x}})  - f_{t,\mathbf{U}}(\mathbf{x}^*)  \right]
 \le
\inf_{\mathbf{O} \in \mathcal{O}} \ \mathbb{E}   \sum_{t=T^{3/4}}^{T} \left[  f_{t,\widehat{\mathbf{U}}_t}(  \mathbf{O} \mathbf{x}^*) - f_{t,\mathbf{U}}(\mathbf{x}^*)  \right]
\\
& \tag*{since $\mathbf{x}^* \in \mathbf{U}^T (\mathcal{S}) \implies \mathbf{O} \mathbf{x}^* \in \mathbf{U}^T (\mathcal{S})$ by Lemma \ref{thm:xball}, and $\widetilde{\mathbf{x}}$ is an argmin over $ \mathbf{U}^T (\mathcal{S})$} 
\\
\le & \inf_{\mathbf{O} \in \mathcal{O}} \ (T-T^{3/4}) \cdot \mathbb{E}  \left[  f_{t,\widehat{\mathbf{U}}_t}( \mathbf{O} \mathbf{x}^*)  - f_{t,\mathbf{U}}(\mathbf{x}^*) \right] 
\\
& \tag*{where we've fixed $\displaystyle t = \argmax_{t' \in [T^{3/4},T]} \ \inf_{\mathbf{O} \in \mathcal{O}} \ \mathbb{E}  \left[  f_{t',\widehat{\mathbf{U}}_{t'}}( \mathbf{O} \mathbf{x}^*)  - f_{t',\mathbf{U}}(\mathbf{x}^*) \right]$}
\\
\le  & 
(T-T^{3/4}) \cdot \mathbb{E}  \left[  f_{t,\widehat{\mathbf{U}}_t}( \widehat{\mathbf{O}} \mathbf{x}^*)  - f_{t,\mathbf{U}}(\mathbf{x}^*) \right] 
\end{align*}
where now choose $\widehat{\mathbf{O}} \in \mathcal{O}$ as the orthogonal matrix such that  $\mathbb{E}  || \widehat{\mathbf{U}}_t \widehat{\mathbf{O}}  - \mathbf{U} ||_F$ satisfies the bound of Lemma \ref{thm:uestimate} for the $t \ge T^{3/4}$  fixed above.  
Defining ${\bm{\Delta} = \mathbf{U} \mathbf{x}^* - \widehat{\mathbf{U}}_t   \widehat{\mathbf{O}} \mathbf{x}^* \in \mathbb{R}^d}$, we plug in the definition of $f_{t,\widehat{\mathbf{U}}}$ from (\ref{eq:fy}) and simplify to obtain the following bound:
\begin{align*}
& \mathbb{E}  \left[  f_{t,\widehat{\mathbf{U}}_t}( \widehat{\mathbf{O}} \mathbf{x}^*)  - f_{t,\mathbf{U}}(\mathbf{x}^*) \right]
\\  
\le &  \mathbb{E}  \left[ ||\bm{\Delta}||_2^2 || \mathbf{U} ||_\text{op} || \mathbf{V}_t ||_\text{op} || \mathbf{U}^T ||_\text{op}  + 2 ||\bm{\Delta}||_2 ||\mathbf{x}^* ||_2 ||\mathbf{V}_t ||_\text{op} || \mathbf{U}^T ||_\text{op}  + ||\bm{\Delta}||_2  || \mathbf{z}_t ||_2  || \mathbf{U} ||_\text{op}  \right]
\\ 
\le &  \mathbb{E}  \left[ ||\bm{\Delta}||_2^2 || \mathbf{V}_t ||_\text{op} + 2 ||\bm{\Delta}||_2 ||\mathbf{x}^* ||_2 ||\mathbf{V}_t  ||_\text{op}  + ||\bm{\Delta}||_2  || \mathbf{z}_t ||_2  \right]
\\
\le &  \left( 4 ||\mathbf{x}^* ||_2 \mathbf{V}_t ||_\text{op}  +  || \mathbf{z}_t ||_2  \right)  \cdot  \mathbb{E}  \left[ ||\bm{\Delta}||_2 \right]   
\\
\tag*{since  $||\bm{\Delta}||_2  \le (|| \mathbf{U} ||_\text{op} +|| \widehat{\mathbf{U}}_t   \widehat{\mathbf{O}}  ||_\text{op}) || \mathbf{x}^* ||_2 \le 2  || \mathbf{x}^* ||_2  $ by orthogonality of $\widehat{\mathbf{O}}, \widehat{\mathbf{U}}_t, \mathbf{U}$}
\\
\le & C rb \left( 4 r  +  1 \right)  \left[T^{3/4} \right]^{-1/2}   d \sigma^2 \left( \frac{2 \sigma_1 + 1}{\sigma^2_d} \right)    \tag*{under (A\ref{assbz})-(A\ref{assbV})}
\\
& \text{since }  \mathbb{E}  \left[ ||\bm{\Delta}||_2 \right] \le || \mathbf{x}^* ||_2  \cdot \mathbb{E}  \left[ || \widehat{\mathbf{U}}_t \widehat{\mathbf{O}}  - \mathbf{U} ||_F \right]
\le 
C \left[T^{3/4}\right]^{-1/2}   d \sigma^2 \left( \frac{2 \sigma_1 + 1}{\sigma^2_d} \right) || \mathbf{x}^* ||_2 
\\
& \text{by Lemma \ref{thm:uestimate} (recalling that we fixed $t \ge T^{3/4}$).} 
\\
\end{align*}  
Combining our bounds for each of the three summands in (\ref{eq:regretunk}) yields the following upper bound for the left-hand side, from which the inequality presented in Theorem \ref{thm:unknownspan} can be derived:
\[
C rb \left[(1 + r) T^{3/4} + (1 +r ) d^{1/2} \left(T - T^{3/4}\right)^{3/4}  + (4r + 1)d \sigma^2 \left(\frac{2\sigma_1 + 1}{\sigma_d^2}\right) \left(T^{5/8} - T^{3/8} \right) \right]   
\qedhere
\]
\end{proof}

\begin{lem}  \label{thm:uestimate}  For the $\widehat{\mathbf{U}}$ produced in Step 10 of the \algsecondabbrev{} algorithm after $T$ rounds and any feasible low-dimensional action $\mathbf{x} \in \widehat{\mathbf{U}}^T (\mathcal{S})$, there exists orthogonal $d \times d$ matrix $\widehat{\mathbf{O}}$ and universal constant $C$ such that:

\[ \mathbb{E} \left[ || \widehat{\mathbf{U}} \widehat{\mathbf{O}}  - \mathbf{U} ||_F \right] 
\le  C T^{-1/2} d \sigma^2 \left( \frac{2 \sigma_1 + 1}{\sigma^2_d} \right)
\]
where $\sigma_1$ and $\sigma_d$ denote the largest and smallest singular values of the underlying matrix $\widebar{\mathbf{Q}}^*$ defined in (\ref{eq:qbar}).

\begin{proof}
Our proof relies on standard random matrix concentration results presented in Lemma \ref{thm:subgaus} and the variant of the Davis-Kahan  theory proposed by  \citetsi{Yu15}, which is restated here as Theorem \ref{thm:yu}.

\begin{lem}[variant of Lemma 4.2 in \citetsi{Rigollet15}] \label{thm:subgaus}
Let $\mathbf{E}$ be a $N \times d$ matrix (with $N \ge d$) of i.i.d.\ entries drawn from a sub-Gaussian$(\sigma^2 )$ distribution.
Then, with probability $1 - \delta$:
\[ ||\mathbf{E}||_\textup{op} \le 2\sigma \left[2\sqrt{N \log(12)} + \sqrt{2 \log(1/\delta) } \right]
\]
\end{lem}

Recall that random variable $X$ follows sub-Gaussian($\sigma^2$) distribution if ${ \mathbb{E}[X] = 0 }$  and 
${\Pr(|X| > x) \le  2 \exp(- \frac{x^2}{2 \sigma^2})}$ for all $x > 0$, and random vector $\mathbf{w} \sim $ sub-Gaussian($\sigma^2$) if ${ \mathbb{E}[\mathbf{w}] = 0 }$ and $\mathbf{u}^T \mathbf{w}$ is a sub-Gaussian($\sigma^2$) random variable for any unit vector $\mathbf{u}$.  Since the components of $\bm{\epsilon}_t$ are presumed statistically i.i.d., each value in $\widebar{\mathbf{E}} = \widehat{\mathbf{Q}} - \widebar{\mathbf{Q}}^*$ must be the mean of $T/d$ sub-Gaussian$(\sigma^2 / N)$ samples as a result of the averaging performed in Step 9 of our \algsecondabbrev{} algorithm.  Thus, the entries of $\widebar{\mathbf{E}}$ are distributed as sub-Gaussian$\left(\frac{\sigma^2d}{NT}\right)$.
Lemma \ref{thm:subgaus} implies:
\begin{align*}
 \mathbb{E} ||\widebar{\mathbf{E}}||_\text{op} & = \int_{x=0}^\infty \Pr( ||\widebar{\mathbf{E}}||_\text{op} > x) \ \mathrm{d}x \\
& \le  \int_{x=0}^\infty \exp \left( - \frac{1}{2} \left(\sqrt{\frac{T}{d}}\frac{x}{2\sigma} - 2\sqrt{\log 12} \right)^2 \right)  \ \mathrm{d}x     \\
& = 2 \sigma \sqrt{\frac{\pi d}{2 T}} \left[1 + \text{erf}\left(\sqrt{2 \log 12}\right) \right]  
 \le 4 \sigma \sqrt{\frac{\pi d}{2T}} 
\\ 
 \mathbb{E}||\widebar{\mathbf{E}}||_\text{op}^2 & = 2  \int_{x=0}^\infty x \cdot \Pr( ||\widebar{\mathbf{E}}||_\text{op} > x) \ \mathrm{d}x \\
 & \le 2 \int_{x=0}^\infty x \cdot \exp \left( - \frac{1}{2} \left(\sqrt{\frac{T}{d}} \frac{x}{2\sigma} - 2\sqrt{\log 12} \right)^2 \right)  \ \mathrm{d}x     \\
 & = \frac{8 \sigma^2d}{T} \left[ \sqrt{2\pi \log 12} +  \sqrt{2 \pi \log 12} \cdot \text{erf}(\sqrt{2 \log 12}) + \frac{1}{144}   \right]
 \\ & \le \frac{24 \sigma^2 d}{T} \sqrt{2\pi \log 12}
\end{align*}
When $T \ge d, \sigma \ge 1$, both $ \mathbb{E} ||\widebar{\mathbf{E}}||_\text{op}$ and $\mathbb{E}||\widebar{\mathbf{E}}||_\text{op}^2$ are upper-bounded by $24  \sigma^2 \sqrt{6 \pi d/T}$.  Combining Theorem \ref{thm:yu}  with these concentration bounds implies that there exists $d \times d$ orthogonal matrix $\widehat{\mathbf{O}}$ such that:
\begin{align*}
\mathbb{E} \left[ || \widehat{\mathbf{U}} \widehat{\mathbf{O}}  - \mathbf{U} ||_F \right] & 
\le \frac{2\sqrt{2d}}{\sigma^2_d} \left(2 \sigma_1  \mathbb{E} \left[ || \widehat{\mathbf{Q}} - \widebar{\mathbf{Q}}^* ||_{\text{op}} \right] + \mathbb{E} \left[ || \widehat{\mathbf{Q}} - \widebar{\mathbf{Q}}^* ||_{\text{op}}^2 \right] \right)
\\
& \le  96 \sqrt{\frac{3 \pi}{T}} d \sigma^2 \left( \frac{2 \sigma_1 + 1}{\sigma^2_d} \right) \qedhere
\end{align*}
\end{proof}
\end{lem}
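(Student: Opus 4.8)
To bound $\mathbb{E}\|\widehat{\mathbf{U}}\widehat{\mathbf{O}}-\mathbf{U}\|_F$ the plan is to turn a random-matrix perturbation estimate on the averaged demand matrix $\widehat{\mathbf{Q}}$ maintained in Step~9 of the \algsecondabbrev{} algorithm into a subspace-distance bound via the Davis--Kahan variant in Theorem~\ref{thm:yu}. First I would use the decomposition behind~(\ref{eq:qbar}): writing $\widebar{\mathbf{E}} = \widehat{\mathbf{Q}}-\widebar{\mathbf{Q}}^*$, the $j$th column of $\widebar{\mathbf{E}}$ is exactly the empirical mean $\frac{d}{T}\sum_{i\in\mathcal{I}_j}\bm{\epsilon}_i$ of $T/d$ independent noise vectors, while $\widebar{\mathbf{Q}}^* = \mathbf{U}\mathbf{S}$ with $\mathbf{S}$ the $d\times d$ matrix of correspondingly averaged $\mathbf{s}_i$'s, so $\widebar{\mathbf{Q}}^*$ is rank $d$ and its left singular vectors form an orthonormal basis of $\text{span}(\mathbf{U})$ --- this is where $\sigma_1,\sigma_d$ enter, and where the hypothesis that $\widebar{\mathbf{Q}}^*$ is genuinely rank $d$ is used; the orthogonal change of basis between that basis and $\mathbf{U}$ itself gets absorbed into the free matrix $\widehat{\mathbf{O}}$ by right-invariance of the Frobenius norm. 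The qualifier on the action $\mathbf{x}$ in the statement plays no role in the bound --- this is a pure concentration estimate on $\widehat{\mathbf{U}}$, with $\mathbf{x}$ only relevant for invoking the lemma inside the proof of Theorem~\ref{thm:unknownspan}.

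Second, I would estimate $\mathbb{E}\|\widebar{\mathbf{E}}\|_{\textup{op}}$ and $\mathbb{E}\|\widebar{\mathbf{E}}\|_{\textup{op}}^2$. Because $\bm{\epsilon}_t$ is sub-Gaussian$(\sigma^2)$ with i.i.d.\ coordinates, each coordinate carries a sub-Gaussian parameter of order $\sigma^2/N$ (the vector's sub-Gaussian budget is split across its $N$ i.i.d.\ dimensions), so after averaging over the $T/d$ indices in $\mathcal{I}_j$ every entry of $\widebar{\mathbf{E}}$ is sub-Gaussian of parameter of order $\sigma^2 d/(NT)$; thus $\widebar{\mathbf{E}}$ is an $N\times d$ matrix of i.i.d.\ small sub-Gaussian entries and Lemma~\ref{thm:subgaus} applies. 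That covering-net operator-norm bound gives, with probability $1-\delta$, an operator norm of order $\sigma\sqrt{d/(NT)}\,(\sqrt N+\sqrt{\log(1/\delta)})$; the decisive point is that the $\sqrt N$ produced by the net cancels the $1/\sqrt N$ in the entrywise scale, leaving a dimension-free tail concentrated at order $\sigma\sqrt{d/T}$. Integrating the tail --- $\mathbb{E}\|\widebar{\mathbf{E}}\|_{\textup{op}} = \int_0^\infty \Pr(\|\widebar{\mathbf{E}}\|_{\textup{op}}>x)\,\mathrm{d}x$ and $\mathbb{E}\|\widebar{\mathbf{E}}\|_{\textup{op}}^2 = \int_0^\infty 2x\,\Pr(\|\widebar{\mathbf{E}}\|_{\textup{op}}>x)\,\mathrm{d}x$, which are elementary Gaussian-type integrals --- yields $\mathbb{E}\|\widebar{\mathbf{E}}\|_{\textup{op}}\le C\sigma\sqrt{d/T}$ and $\mathbb{E}\|\widebar{\mathbf{E}}\|_{\textup{op}}^2\le C\sigma^2 d/T$, and under $T\ge d$, $\sigma\ge 1$ both are at most $C\sigma^2\sqrt{d/T}$.

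Third, I would plug these moments into Theorem~\ref{thm:yu} with $\mathbf{Q}=\widebar{\mathbf{Q}}^*$ and the observed $\widehat{\mathbf{Q}}$: it supplies an orthogonal $d\times d$ matrix $\widehat{\mathbf{O}}$ with $\|\widehat{\mathbf{U}}\widehat{\mathbf{O}}-\mathbf{U}\|_F\le \frac{2\sqrt{2d}}{\sigma_d^2}\,(2\sigma_1+\|\widebar{\mathbf{E}}\|_{\textup{op}})\,\|\widebar{\mathbf{E}}\|_{\textup{op}}$. Taking expectations, bounding $\mathbb{E}[(2\sigma_1+\|\widebar{\mathbf{E}}\|_{\textup{op}})\|\widebar{\mathbf{E}}\|_{\textup{op}}]\le 2\sigma_1\mathbb{E}\|\widebar{\mathbf{E}}\|_{\textup{op}}+\mathbb{E}\|\widebar{\mathbf{E}}\|_{\textup{op}}^2$, substituting the second-step estimates, and collecting constants with $\sigma\ge 1$ and $d\le T$ then gives $\mathbb{E}\|\widehat{\mathbf{U}}\widehat{\mathbf{O}}-\mathbf{U}\|_F\le C\,\frac{\sqrt d}{\sigma_d^2}\,\sigma^2\sqrt{d/T}\,(2\sigma_1+1)=C\,T^{-1/2}d\,\sigma^2\,\frac{2\sigma_1+1}{\sigma_d^2}$, which is the claim.

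The main obstacle is the accounting in the second step: pinning down how the vector-level sub-Gaussian budget $\sigma^2$ distributes over the $N$ i.i.d.\ coordinates so that after the $(T/d)$-fold averaging the per-entry sub-Gaussian parameter is of order $\sigma^2 d/(NT)$, and then checking that the $\sqrt N$ in Lemma~\ref{thm:subgaus} exactly neutralizes the $1/\sqrt N$ from that parameter. This cancellation is exactly what makes the estimate --- and hence the regret bound of Theorem~\ref{thm:unknownspan} --- independent of the number of products $N$; the remaining ingredients (the column decomposition, the moment integrals, and the invocation of Theorem~\ref{thm:yu}) are routine.
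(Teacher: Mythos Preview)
Your proposal is correct and follows essentially the same route as the paper: decompose $\widehat{\mathbf{Q}}=\widebar{\mathbf{Q}}^*+\widebar{\mathbf{E}}$, identify the entries of $\widebar{\mathbf{E}}$ as sub-Gaussian$(\sigma^2 d/(NT))$ after the $(T/d)$-fold averaging, apply the covering-net operator-norm bound (Lemma~\ref{thm:subgaus}) and integrate the tail to control $\mathbb{E}\|\widebar{\mathbf{E}}\|_{\textup{op}}$ and $\mathbb{E}\|\widebar{\mathbf{E}}\|_{\textup{op}}^2$, then feed these into the Davis--Kahan variant (Theorem~\ref{thm:yu}). Your side remarks---that the left singular vectors of $\widebar{\mathbf{Q}}^*$ differ from $\mathbf{U}$ only by a right orthogonal factor absorbed into $\widehat{\mathbf{O}}$, that the $\sqrt N$ from the net cancels the $1/\sqrt N$ in the entrywise scale, and that the hypothesis on $\mathbf{x}$ is inert here---are all accurate and make explicit points the paper leaves implicit.
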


\clearpage
\section{Pricing against an Imprecise Adversary}
\label{sec:imprecise}

Theorem \ref{thm:qbound} below  illustrates a basic scenario under which an explicit high-probability bound for the constant $Q$ from Theorem \ref{thm:unknownspan} can be obtained.  Throughout our subsequent discussion, the largest and smallest nonzero singular values of a rank-$d$ matrix $\mathbf{A}$ will be denoted as $\sigma_1(\mathbf{A})$ and $\sigma_d(\mathbf{A})$, respectively.  We now assume that the adversary can only coarsely control the underlying baseline demand parameters $\mathbf{z}_t$ in (\ref{eq:lowrank}).  More specifically, we suppose that in each round:  $\mathbf{z}_t = \mathbf{z}_t' + \bm{\gamma}_t$, 
where only $\mathbf{z}_t'$ (and $\mathbf{V}_t$) may be adversarially selected and the $\bm{\gamma}_t$ are purely stochastic terms outside of the adversary's control.  In this scenario, we presume  a random $d \times d$  noise matrix $\mathbf{\Gamma}$ is drawn before the initial round such that:
\begin{itemize}  \setlength\itemsep{0em}
\refstepcounter{assumptioncount} \label{assgamma}
{\setlength\itemindent{5pt} \item[(A\arabic{assumptioncount})] 
Each entry $\mathbf{\Gamma}_{i,j}$ is independently sampled with mean zero and magnitude  bounded almost surely by $b/2$ \  (i.e. $\mathbb{E}[\mathbf{\Gamma}_{i,j}] = 0, \  |\mathbf{\Gamma}_{i,j}| \le b/2$ for all $i,j$). 
} 
\end{itemize}
  
Recall that the constant $b > 0$ upper bounds the magnitude of each $\mathbf{z}_t$ as specified in (A\ref{assbz}).  
Once the values of $\mathbf{\Gamma}$ have been sampled, we suppose that in round $t$: $\bm{\gamma}_t =  \mathbf{\Gamma}_{*,j}$ is simply taken to be the $j$th column of this matrix with $j = 1 +  (t-1) \mod d$ (traversing the columns of $\mathbf{\Gamma}$ in order).    
\noindent Since boundedness of the values in $\mathbf{\Gamma}$ implies these entries follow a sub-Gaussian$(b^2/4)$ distribution, the following result applies:

\begin{lem}[variant of Theorem 1.2 in \citetsi{Rudelson08}]  With probability at least $1 - C_b \epsilon - {c_b}^d$:
$$  \sigma_d(\mathbf{\Gamma}) \ge \epsilon  / \sqrt{d}
$$
where $C_b > 0$ and $c_b \in (0,1)$ are constants that depend (polynomially) only on $b$. 
\label{lem:rudelson}
\end{lem}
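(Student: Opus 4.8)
The plan is to derive Lemma~\ref{lem:rudelson} from the quantitative invertibility bound of \citetsi{Rudelson08} (their Theorem~1.2) after a routine normalization. First I would check that the hypotheses of that theorem are met. By assumption (A\ref{assgamma}), the entries $\mathbf{\Gamma}_{i,j}$ are i.i.d.\ and centered, so it only remains to verify a subgaussian tail bound. Since $|\mathbf{\Gamma}_{i,j}| \le b/2$ almost surely, Hoeffding's lemma gives $\mathbb{E}[e^{\lambda \mathbf{\Gamma}_{i,j}}] \le e^{\lambda^2 b^2/8}$ for every $\lambda$, so each entry is subgaussian with parameter at most $b^2/4$; in particular $\mathbb{E}[\mathbf{\Gamma}_{i,j}^2] \le b^2/4$.

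The only real wrinkle is the normalization: \citetsi{Rudelson08} state their bound for matrices whose entries have \emph{unit} variance, whereas here $\mathrm{Var}(\mathbf{\Gamma}_{1,1}) = \tau^2$ may be any value in $(0, b^2/4]$ determined by the (fixed) law of the entries. I would therefore set $\widetilde{\mathbf{\Gamma}} = \mathbf{\Gamma}/\tau$, which has i.i.d.\ centered entries of unit variance and subgaussian parameter at most $b^2/(4\tau^2)$. Because the entry-distribution is fixed in advance, $\tau$ is itself a positive function of $b$, so any constant depending on the ratio $b/\tau$ is a constant depending only on $b$. This bookkeeping is where I expect the most care is needed: (A\ref{assgamma}) as stated controls the entries only from above, and one must either invoke the nondegeneracy of the entry law or instead appeal to a version of the invertibility theorem requiring only subgaussian tails together with an anti-concentration hypothesis.

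With the normalization in hand, applying Theorem~1.2 of \citetsi{Rudelson08} to the $d \times d$ matrix $\widetilde{\mathbf{\Gamma}}$ yields constants $C_1 > 0$ and $c_1 \in (0,1)$, depending only on the subgaussian parameter (hence only on $b$), such that $\mathbb{P}\big(\sigma_d(\widetilde{\mathbf{\Gamma}}) \le \epsilon' / \sqrt{d}\big) \le C_1 \epsilon' + c_1^{\,d}$ for every $\epsilon' \ge 0$. Finally I would translate this back: since $\sigma_d(\mathbf{\Gamma}) = \tau\, \sigma_d(\widetilde{\mathbf{\Gamma}})$, substituting $\epsilon' = \epsilon/\tau$ gives $\mathbb{P}\big(\sigma_d(\mathbf{\Gamma}) \le \epsilon/\sqrt{d}\big) \le (C_1/\tau)\epsilon + c_1^{\,d}$, i.e.\ with probability at least $1 - C_b \epsilon - c_b^{\,d}$ one has $\sigma_d(\mathbf{\Gamma}) \ge \epsilon/\sqrt{d}$, where $C_b = C_1/\tau$ and $c_b = c_1$ depend polynomially on $b$ as claimed. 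The main obstacle is thus not the argument itself but this constant-tracking step --- making precise in what sense (A\ref{assgamma}) pins down $\tau$ (equivalently, ruling out near-degenerate entry laws) so that the final constants genuinely depend on $b$ alone.
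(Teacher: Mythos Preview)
The paper does not actually prove this lemma; it is simply stated as a direct restatement (``variant'') of Theorem~1.2 in Rudelson--Vershynin, with no argument beyond the citation. So there is no ``paper's own proof'' to compare against: your reduction is precisely the standard way one would justify invoking that theorem under assumption (A\ref{assgamma}), and the normalization step you outline is the natural one.

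Your identification of the one genuine gap is on point. Assumption (A\ref{assgamma}) bounds the entries only from above and does not fix their law, so the variance $\tau^2$ is not determined by $b$ alone; without a lower bound on $\tau$ (or an explicit anti-concentration hypothesis), the constant $C_b = C_1/\tau$ cannot be expressed purely in terms of $b$. The paper glosses over this entirely. In the context of Appendix~\ref{sec:imprecise} the entry law is tacitly treated as fixed and nondegenerate, so the intended reading is that $\tau$ is a problem constant and the dependence ``only on $b$'' is informal. Your write-up is more careful than the paper on this point; if you want to close the gap cleanly, simply add to (A\ref{assgamma}) a lower bound $\mathrm{Var}(\mathbf{\Gamma}_{i,j}) \ge \tau_0^2 > 0$ and let the constants depend on $b$ and $\tau_0$.
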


\noindent In selecting $\mathbf{z}_t', \mathbf{V}_t$, we assume the imprecise adversary is additionally restricted to ensure:  
\begin{itemize}  \setlength\itemsep{0em}
\refstepcounter{assumptioncount} \label{assZVbound}
{\setlength\itemindent{5pt} \item [(A\arabic{assumptioncount})] 
There exists  $\displaystyle s < \frac{1 - {c_b}^d}{C_b d b} < 1$ such that for all $t$:   \  $ 
\displaystyle ||\mathbf{z}_t'||_2 + r \cdot ||\mathbf{V}_t||_{op} \le s  \cdot \min_{1 \le j \le d} ||\mathbf{\Gamma}_{*,j}||_2$.
}
\end{itemize}
where constants $c_b, C_b$ are given by Lemma \ref{lem:rudelson} (see \citetsi{Rudelson08} for details),  and $r \ge 1$ is still used to denote the radius of the set of feasible prices $\mathcal{S}$.
Note that these additional assumptions do not conflict with condition (A\ref{assbz}) required in Theorem 4, since (A\ref{assgamma}), (A\ref{assZVbound}) together ensure that $|| \mathbf{z}_t ||_2 \le b$ for $\mathbf{z}_t = \mathbf{z}_t' + \bm{\gamma}_t$.  With these assumptions in place, we now provide an explicit bound for the constant $Q$ defined in Theorem \ref{thm:unknownspan}.

\begin{thm}
Under this setting of an imprecise adversary where conditions (A10) and (A11) are met,  for any ${\tau \in (\frac{1}{2}  C_b s  b  d + {c_b}^d , \ 1)}$, 
Theorem 4 holds with:
$$Q  \le \frac{ 2 \sigma d C_b   (2b + 1)}{ 2(\tau - {c_b}^d) - C_b s b d } $$
 with probability $\ge  1 - \tau$ (over the initial random sampling of $\mathbf{\Gamma}$).
\label{thm:qbound}
 \addcontentsline{toc}{subsubsection}{Theorem \ref{thm:qbound}}
\end{thm}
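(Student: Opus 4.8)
The plan is to read off $Q$ from the extreme nonzero singular values of a $d\times d$ random matrix, and to control those singular values by combining Weyl's perturbation inequality with the invertibility estimate of Lemma~\ref{lem:rudelson}. First I would factor the matrix in~(\ref{eq:qbar}). Writing $\mathbf{s}_i = \mathbf{z}_i - \mathbf{V}_i\mathbf{U}^T\mathbf{p}_i$ as in \S\ref{sec:latent}, one has $\widebar{\mathbf{Q}}^* = \mathbf{U}\widebar{\mathbf{S}}$, where $\widebar{\mathbf{S}}\in\mathbb{R}^{d\times d}$ has $j$-th column $\widebar{\mathbf{S}}_{*,j} = \tfrac{1}{|\mathcal{I}_j|}\sum_{i\in\mathcal{I}_j}\mathbf{s}_i$. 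Since (A\ref{assUorth}) gives $\mathbf{U}^T\mathbf{U}=\mathbf{I}_d$, the Gram matrices of $\widebar{\mathbf{Q}}^*$ and $\widebar{\mathbf{S}}$ coincide, so $\sigma_1$ and $\sigma_d$ in Theorem~\ref{thm:unknownspan} equal $\sigma_1(\widebar{\mathbf{S}})$ and $\sigma_d(\widebar{\mathbf{S}})$; it then suffices to bound those and substitute into $Q = \max\{1,\sigma^2(2\sigma_1+1)/\sigma_d^2\}$.

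Next I would split $\widebar{\mathbf{S}}$ into a random ``signal'' plus an adversarial perturbation. Because the stochastic offsets $\bm{\gamma}_t = \mathbf{\Gamma}_{*,j}$ are \emph{constant} over $t\in\mathcal{I}_j$ (the column-cycling rule defining $\bm{\gamma}_t$ picks out exactly the index set $\mathcal{I}_j$), we get $\widebar{\mathbf{S}} = \mathbf{\Gamma} + \mathbf{E}'$ with $\mathbf{E}'_{*,j} = \tfrac{1}{|\mathcal{I}_j|}\sum_{i\in\mathcal{I}_j}(\mathbf{z}_i' - \mathbf{V}_i\mathbf{U}^T\mathbf{p}_i)$. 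By the triangle inequality, $\|\mathbf{p}_i\|_2\le r$ (as $\mathbf{p}_i\in\mathcal{S}$ under (A\ref{assballS})), $\|\mathbf{U}^T\|_{\mathrm{op}}=1$, and assumption (A\ref{assZVbound}), every column obeys $\|\mathbf{E}'_{*,j}\|_2 \le \max_i(\|\mathbf{z}_i'\|_2 + r\|\mathbf{V}_i\|_{\mathrm{op}}) \le s\min_{j'}\|\mathbf{\Gamma}_{*,j'}\|_2$; together with $\|\mathbf{E}'\|_{\mathrm{op}}\le\sqrt d\,\max_j\|\mathbf{E}'_{*,j}\|_2$ and the entrywise bound $\|\mathbf{\Gamma}_{*,j'}\|_2\le\sqrt d\,b/2$ from (A\ref{assgamma}), this yields $\|\mathbf{E}'\|_{\mathrm{op}}\le\tfrac12 sbd$. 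The same entrywise bound gives $\|\mathbf{\Gamma}\|_{\mathrm{op}}\le\|\mathbf{\Gamma}\|_F\le\tfrac12 bd$, so Weyl's inequality gives $\sigma_1(\widebar{\mathbf{S}})\le\|\mathbf{\Gamma}\|_{\mathrm{op}}+\|\mathbf{E}'\|_{\mathrm{op}}\le\tfrac12 bd(1+s) < bd$, hence $2\sigma_1 + 1 \le (2b+1)d$ using $s<1$ and $d\ge 1$.

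For the lower bound on $\sigma_d$, Weyl again gives $\sigma_d(\widebar{\mathbf{S}}) \ge \sigma_d(\mathbf{\Gamma}) - \|\mathbf{E}'\|_{\mathrm{op}} \ge \sigma_d(\mathbf{\Gamma}) - \tfrac12 sbd$, and I would apply Lemma~\ref{lem:rudelson} with $\epsilon = (\tau - c_b^d)/C_b$ — admissible since the hypothesis $\tau > \tfrac12 C_b sbd + c_b^d$ forces $\tau > c_b^d$ — to obtain $\sigma_d(\mathbf{\Gamma}) \ge (\tau - c_b^d)/(C_b\sqrt d)$ on an event of probability at least $1-\tau$. On this event the hypothesis on $\tau$ is exactly what keeps $\sigma_d(\mathbf{\Gamma}) - \tfrac12 sbd$ strictly positive, so $\sigma_d(\widebar{\mathbf{S}})$ is bounded below by a positive multiple of $2(\tau - c_b^d) - C_b sbd$. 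Substituting the bounds on $2\sigma_1+1$ and on $1/\sigma_d^2$ into $Q$ and using $\sigma\ge 1$, $s<1$, $r\ge 1$ then gives the stated estimate after collecting constants, and I would finish by noting that (A\ref{assgamma}) and (A\ref{assZVbound}) jointly force $\|\mathbf{z}_t\|_2\le b$, so the scenario is consistent with (A\ref{assbz}) and Theorem~\ref{thm:unknownspan} applies. The main obstacle is this last step: one must make the high-probability invertibility of $\mathbf{\Gamma}$ outweigh the adversarial perturbation $\mathbf{E}'$ while spending only probability $\tau$, and it is precisely the scaling built into (A\ref{assZVbound}) together with the constraint $s < (1-c_b^d)/(C_b db)$ that makes the resulting lower bound on $\sigma_d$ strictly positive on the relevant event.
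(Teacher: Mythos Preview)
Your argument follows the paper's proof step for step: factor $\widebar{\mathbf{Q}}^*=\mathbf{U}\widebar{\mathbf{S}}$ using (A\ref{assUorth}), decompose $\widebar{\mathbf{S}}=\mathbf{\Gamma}+\mathbf{E}'$, bound $\sigma_1$ from above via column norms, bound $\sigma_d$ from below via Weyl's inequality together with Lemma~\ref{lem:rudelson}, and substitute into $Q$. There is, however, one quantitative slip that breaks the final step. You bound $\|\mathbf{E}'\|_{\mathrm{op}}\le\tfrac12 sbd$ by chaining $\|\mathbf{E}'\|_{\mathrm{op}}\le\sqrt d\,\max_j\|\mathbf{E}'_{*,j}\|_2$ with $\|\mathbf{E}'_{*,j}\|_2\le s\cdot\sqrt d\,b/2$, thereby collecting \emph{two} factors of $\sqrt d$. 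With that perturbation size, the positivity requirement $\sigma_d(\mathbf{\Gamma})-\tfrac12 sbd>0$ becomes $(\tau-c_b^d)/(C_b\sqrt d)>\tfrac12 sbd$, i.e.\ $\tau>\tfrac12 C_b sbd^{3/2}+c_b^d$, which for $d>1$ is strictly stronger than the theorem's hypothesis $\tau>\tfrac12 C_b sbd+c_b^d$. So your claim that ``the hypothesis on $\tau$ is exactly what keeps $\sigma_d(\mathbf{\Gamma})-\tfrac12 sbd$ strictly positive'' is false, and the stated denominator $2(\tau-c_b^d)-C_b sbd$ in the bound on $Q$ does not emerge from your estimates.

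The paper keeps only one factor of $\sqrt d$ at this point, obtaining $\sigma_1(\widebar{\mathbf{S}}-\mathbf{\Gamma})\le\tfrac{sb\sqrt d}{2}$ by using the column bound $\|\mathbf{\Gamma}_{*,j}\|_2\le b/2$ directly rather than your $\sqrt d\,b/2$. That tighter perturbation estimate yields $\sigma_d(\widebar{\mathbf{S}})\ge\dfrac{2(\tau-c_b^d)-C_b sbd}{2C_b\sqrt d}$, which is positive exactly under the stated hypothesis on $\tau$ and feeds straight into the claimed denominator. Apart from this constant in the $\|\mathbf{E}'\|_{\mathrm{op}}$ step, your proposal and the paper's proof are the same.
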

\begin{proof}
Recall that $\sigma_1$ (and $\sigma_d$) denote the largest (and smallest) nonzero singular values of the underlying rank $d$ matrix $\widebar{\mathbf{Q}}^*$ defined in (\ref{eq:qbar}).
For suitable constants $c_1, c_2$: we show that  $\sigma_1 \le c_1$ and $\sigma_d \ge c_2$ with high probability, which then implies the upper bound: ${Q \le \max\{1, \frac{\sigma}{c_2^2}(2c_1 + 1) \}}$. 
We first note that the orthogonality of $\mathbf{U}$ implies $\widebar{\mathbf{Q}}^* = \mathbf{U} \widebar{\mathbf{S}}$ has the same nonzero singular values as the square matrix $\widebar{\mathbf{S}}$, whose $j$th column is given by: 
\begin{equation}
\widebar{\mathbf{S}}_{*,j} =  \frac{d}{T} \sum_{i=1}^{T/d} \Big[ \mathbf{z}'_{j + d(i-1)} + \bm{\gamma}_{j + (i-1)d} -  \mathbf{V}_{j + (i-1)d} \mathbf{U}^T \mathbf{p}_{j + (i-1)d}  \Big] 
\end{equation}
As $\widebar{\mathbf{S}}$ has $d$ columns, we have:  
$$ \sigma_1( \widebar{\mathbf{Q}}^*) = \sigma_1(\widebar{\mathbf{S}})  \le \sqrt{d} \cdot \max_j ||\widebar{\mathbf{Q}}_{*,j}||_2  \le \frac{b(1 + s)\sqrt{d}}{2} 
$$
where the latter inequality derives from the fact that (A\ref{assballS}) and orthogonality of $\mathbf{U}$ imply:
 \begin{align*}
 ||\widebar{\mathbf{S}}_{*,j}||_2 & \le \frac{d}{T} \sum_{i = 1}^{T/d} \Big[ || \bm{\gamma}_{j + (i-1)d} ||_2  + || \mathbf{z}'_{j + (i-1)d} ||_2 + r || \mathbf{V}_{j + (i-1)d} ||_{\text{op}} \Big]
\\
 & \le (1 + s) \cdot ||\mathbf{\Gamma}_{*,j}||_2 \le \frac{b}{2}(1 + s)  \tag*{by conditions (A\ref{assgamma}), (A\ref{assZVbound}) }
\end{align*} 
Via similar reasoning, we also obtain the bound:
\begin{align*}
 \sigma_1(\widebar{\mathbf{S}} - \mathbf{\Gamma}) \le \frac{s b \sqrt{d}}{2}  \numberthis \label{eq:upperqdiff}
\end{align*}
Subsequently, we invoke Lemma \ref{lem:rudelson}, which implies that with probability  $1 - \tau$: 
\begin{equation}
\sigma_d(\mathbf{\Gamma}) \ge \frac{\tau - {c_b}^d}{C_b \sqrt{d} } 
\label{eq:boundgamma}
\end{equation}
Combining  (\ref{eq:upperqdiff}) and (\ref{eq:boundgamma}), we obtain a high probability lower bound for $\sigma_d$ via the additive Weyl inequality (cf.\ Theorem 3.3.16 in \citetsi{Horn91}):
\begin{align*} \sigma_d(\widebar{\mathbf{Q}}^*) = \sigma_d(\widebar{\mathbf{S}})   & \ge \sigma_d(\mathbf{\Gamma}) - \sigma_1(\widebar{\mathbf{S}} - \mathbf{\Gamma})   
\ge  \frac{\tau - {c_b}^d}{C_b \sqrt{d} }  -  \frac{s b \sqrt{d}}{2}  
\ \text{ with probability } \ge 1 - \tau
\end{align*} 
The proof is completed by defining $\displaystyle c_1 = \frac{b(1 + s)\sqrt{d}}{2}, \ c_2 = \frac{\tau - {c_b}^d}{C_b \sqrt{d} }  -  \frac{s b \sqrt{d}}{2} $, and subsequent simplification of the resulting bound using the fact that $d \ge 1$ and $ s < 1$.
 \end{proof}
 
 \clearpage 
 \section{Additional Experimental Results}
 \label{sec:moreresults}  
 
  \subsection{Misspecified Demand Models}
  \label{sec:misresults}  

Beyond evaluating our pricing strategies in settings where underlying demand curves adhere to our low-rank model in (\ref{eq:lowrank}), we now consider different environments where our assumptions are purposefully violated, in order to investigate robustness and how well each approach generalizes to other types of demand behavior.  As our interest lies in high-dimensional pricing applications, the number of products is fixed to $N = 100$ throughout this section.  Once again,    
$\mathbf{p}_t$ and $\mathbf{q}_t$ are presumed to represent suitably rescaled prices/aggregate-demands, such that the set of feasible prices $\mathcal{S}$ can always be fixed as a centered sphere of radius $r = 20$.
Although none of the demand models considered here possesses explicit low-rank structure, we nevertheless apply our \algsecondabbrev{} pricing algorithm with various choices of the rank parameter $ 1 \le d \le N = 100$.

\textbf{Linear full-rank model.} We first study a scenario where underlying demands follow the basic linear relationship described in (\ref{eq:basemodel}): ${\mathbf{q}_t = \mathbf{c}_t - \mathbf{B}_t \mathbf{p}_t + \bm{\epsilon}_t}$.   Under this setting, the entries of $\mathbf{c}_t$, $\mathbf{B}_t$, and $\bm{\epsilon}_t$ are independently drawn from $N(100,20)$, $N(0,2)$, and $N(0,10)$ distributions, respectively.  Before demands are generated, $\mathbf{B}_t$ is first projected onto the set of strongly positive-definite matrices $\{\mathbf{B} :  \mathbf{B}^T + \mathbf{B} \succeq \lambda \mathbb{I} \}$ with $\lambda = 10$ as done in \S\ref{sec:results}.
We consider both the stationary case where  $\mathbf{c}_t, \mathbf{B}_t$ are fixed over time as well as the case of demand shocks, in which these underlying parameters are re-sampled from their generating  distributions at times $T/3$ and $2T/3$.   
Note that  the demands in this setting do not possess any explicit low-rank structure, nor are they governed by low-dimensional featurizations of the products.

Figure \ref{fig:highrankregrets} depicts the performance of our pricing algorithms in this linear full-rank setting, showing the average cumulative regret (over 10 repetitions with standard-deviations shaded).
 Once again, the performance of the GDG approach and our \algsecondabbrev{} algorithm with $d = N$ are essentially identical.  In this setting, the standard bandit methods slightly outperform the $\mathrm{Explo}^\mathrm{re}_\mathrm{it}$ baseline, but they do not exhibit strong performance when optimizing over a 100-dimensional action space.  Despite the lack of explicit low-rank structure in the underlying demand model, the \algsecondabbrev{} algorithm produces greater revenues than the GDG and $\mathrm{Explo}^\mathrm{re}_\mathrm{it}$ baselines for all settings of $d \in [10, 90]$ (but does fare worse than GDG if $d \ll 10$ is chosen too small).  In particular, when operating with relatively low values of $d$, the \algsecondabbrev{} method very significantly outperforms the other pricing strategies.  Similar phenomena in bandit algorithms over projected low-dimensional action subspaces have been  documented by \citetsi{Wang13, Li16, Yu17}.

\textbf{Log-linear model.}  While the linear demand model studied in this paper is one of the most popular methods for pricing products with varying elasticities, demands for products with constant elasticity are often better fit via a log-linear function of the prices \citesi{Maurice10}.  
We also evaluate the performance of our bandit methods in such a setting, where demands are determined according to the following log-linear model:
\begin{equation}
\log (\mathbf{q}_t) = \widetilde{\mathbf{c}}_t + \widetilde{\mathbf{B}}_t \log (\mathbf{p}_t + 100) + \widetilde{\bm{\epsilon}}_t 
\label{eq:loglin}
\end{equation}
In our experiment  under this setting, the entries of $\widetilde{\mathbf{c}}_t$, $\widetilde{\mathbf{B}}_t$, $\widetilde{\bm{\epsilon}}_t $ are independently drawn from  $N(5,1)$, $N(0,0.1)$, and $N(0,1)$ distributions, respectively.  Before demands are generated, $\widetilde{\mathbf{B}}_t$ is first projected onto the set of strongly positive-definite matrices $\{\mathbf{B} :  \mathbf{B}^T + \mathbf{B} \succeq \lambda \mathbb{I} \}$ with $\lambda = 0.1$. 
Again, two scenarios are considered: the stationary case where  $\widetilde{\mathbf{c}}_t, \widetilde{\mathbf{B}}_t$ are fixed over time, and the case of demand shocks, in which these underlying parameters are re-sampled from their generating  distributions at times $T/3$ and $2T/3$.   Note that this log-linear model also does not possess any explicit low-rank properties.

Figure \ref{fig:logregrets} demonstrates that the same conclusions about our algorithm's behavior in the case of full-rank linear demands also hold for this log-linear setting.   
Even though it is now quite misspecified, the \algsecondabbrev{} algorithm with a small value of $d$ performs remarkably well.   
Furthermore, the decreasing regret in Figure \ref{fig:logregrets}B illustrates how bandit pricing methods can rapidly adapt to a changing marketplace, regardless whether the underlying demands are of varying or constant elasticities.

\begin{figure*}[t] \centering
\begin{overpic}[width=0.49\textwidth]{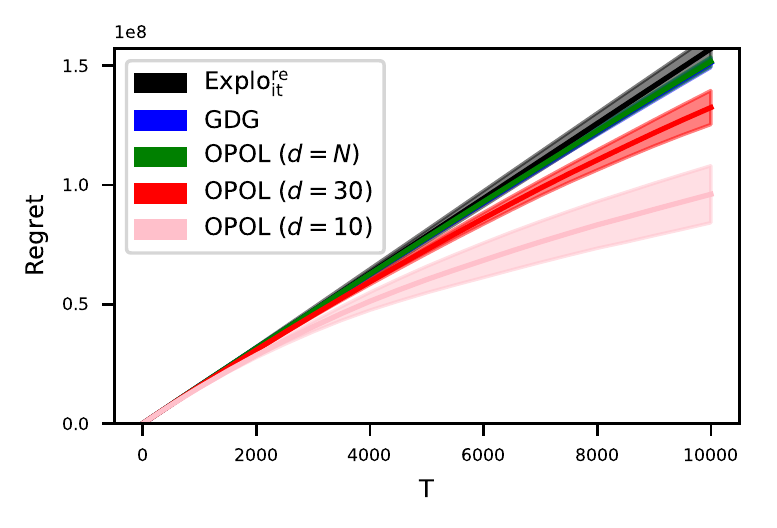}
 \put (15,68){\footnotesize \textbf{(A)} \ Model (\ref{eq:loglin}) without temporal change }
 \end{overpic} \hspace*{0mm}
\begin{overpic}[width=0.49\textwidth]{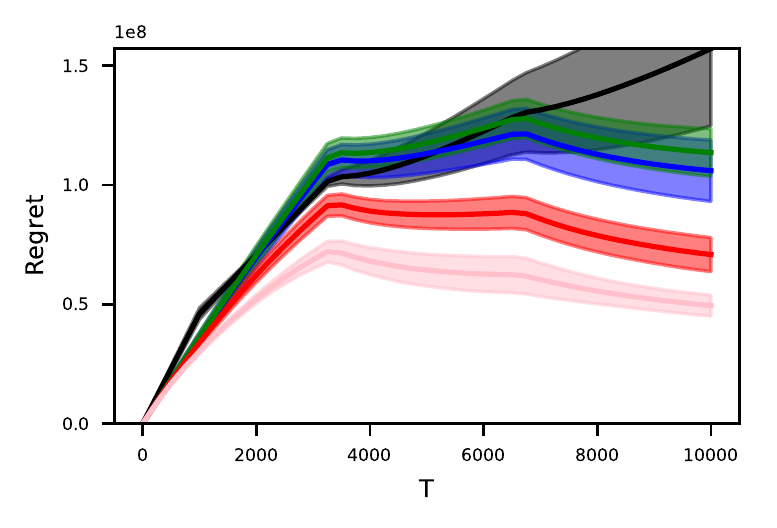} 
 \put (20, 68){\footnotesize \textbf{(B)} \ Model (\ref{eq:loglin}) with demand shocks }
\end{overpic}  \hspace*{1mm}
\vspace*{-4mm}
\caption{Average cumulative regret (over 10 repetitions with standard-deviations shaded) of various pricing strategies (for $N= 100$) when the underlying demand model is log-linear and:  \textbf{(A)}  stationary over time, \textbf{(B)} altered by structural shocks at $T/3$ and $2T/3$.   }
\label{fig:logregrets}
\end{figure*}

 \subsection{Further Details about Experiments} 
 \label{sec:expdetails}

Our simulations always set the first prices used to initialize each method, $\mathbf{p}_0$, at the center of $\mathcal{S}$.  
For each experiment in our paper, the bandit algorithm hyperparameters $\eta,\delta,\alpha$ are set as specified in Theorem \ref{thm:regretknownu}, but without knowledge of the underlying demand model (as would need to be done in practical applications).  Because the Lipschitz constant $L$ and bound $B$ are unknown in practice, these are crudely estimated prior to the initial round of our bandit pricing strategy from the observed (historical) revenues at a random collection of 100 minorly-varying prices.   To compute regret, we identify the optimal fixed price with knowledge of the underlying demand curves at each time, performing the fixed-price optimization via Sequential Least Squares Programming \citesi{Kraft88} which converges to the global optimum in our convex settings.    
In the $\mathrm{Explo}^\mathrm{re}_\mathrm{it}$ approach, transitioning from exploitation to exploration at time $T^{3/4}$ empirically outperformed the other choices we considered ($T^{1/2}, T^{2/3}, T/10, T/3$).  
Note that no matter how many experiments we run, the sensitive nature of pricing necessitates provable guarantees, which is a major strength of the adversarial regret bounds presented in this paper.

\clearpage
\section{Notation Glossary}
\label{sec:notation}

{\renewcommand{\arraystretch}{1.5} 
\begin{tabular}{l l}
$N > 0$ & Number of products to price (assumed to be large)
\\
$d > 0$ & Dimensionality of the product features (where $d \ll N$)  
\\
$t \in \{1,\dots, T\}$ & Index of each time period (i.e.\ \emph{round}) over which prices are fixed and demands aggregated
\\
$C > 0$ & A universal constant that is problem-independent and does not depend on values like $T, d, r$
\\
$\mathbf{p}_t \in \mathbb{R}^N$ &  Vector of prices for each product in period $t$ (rescaled  rather than absolute prices)
\\
$\mathbf{q}_t \in \mathbb{R}^N$ & Vector of demands for each product in period $t$ (rescaled rather than absolute demands)
\\
$R_t : \mathbb{R}^N \rightarrow \mathbb{R}$ &  Negative total revenue produced by product pricing in period $t$ (convex function)
\\
$\mathcal{S} \subset \mathbb{R}^N$ & Convex set of feasible prices (taken to be ball of radius $r$ throughout \S\ref{sec:latent}) 
\\
$\bm{\epsilon}_t \in \mathbb{R}^N$ & Random noise in observed demands of period $t$ (mean-zero with finite variance)
\\
$\bm{\epsilon}$ & Represents the full set of random demand effects  $\{\bm{\epsilon}_1, \dots, \bm{\epsilon}_T\}$
\\
$\bm{\xi}_t \in \mathbb{R}^d$ & Random noise variables drawn within each round of our bandit algorithms
\\
$\bm{\xi}$ & Represents the full set of random noise variables employed in our algorithms $\{\bm{\xi}_1, \dots, \bm{\xi}_T\}$
\\ 
$\mathbf{c}_t \in \mathbb{R}^N$ &  Vector of baseline aggregate demands for each product in period $t$ 
\\
 $\mathbf{B}_t \in \mathbb{R}^{N \times N}$ & Asymmetric positive-definite matrix of demand cross-elasticities in period $t$
 \\
$\mathbf{U} \in \mathbb{R}^{N \times d}$ & Matrix where $i$th row contains featurization of product $i$ (presumed orthogonal in \S\ref{sec:latent})
\\ 
$\widehat{\mathbf{U}} \in \mathbb{R}^{N \times d}$ & Matrix whose column-span is used to estimate the column-span of $\mathbf{U}$
\\
$\mathbf{z}_t \in \mathbb{R}^d$ & Vector which determines how product features affect the baseline demands in period $t$
\\
$\mathbf{V}_t \in  \mathbb{R}^{d \times d}$ & Asymmetric positive-definite matrix that defines changing demand cross-elasticies in period $t$
\\
$||\mathbf{x}||_2$ & Euclidean norm of vector $\mathbf{x}$
\\
$||\mathbf{A}||_\text{op}$ & Spectral norm of matrix $\mathbf{A}$ (magnitude of the largest singular value) \\
$||\mathbf{A}||_F$ & Frobenius norm of matrix $\mathbf{A}$
\\ $ \text{Unif}(\mathcal{S})$ &  Uniform distribution over set $\mathcal{S}$
\\
$ \mathbf{p}^* \in \mathbb{R}^N$ & Single best vector of prices chosen in hindsight: $\displaystyle \mathbf{p}^*
=  \argmin_{\mathbf{p} \in \mathcal{S}} \mathbb{E} \sum_{t=1}^T R_t(\mathbf{p})$ 
\\
$f_t : \mathbb{R}^d \rightarrow \mathbb{R}$ & Function such that $f_t(\mathbf{x}) = \mathbb{E}_{\bm{\epsilon}}[R_t(\mathbf{p})]$ for $\mathbf{x} = \mathbf{U}^T \mathbf{p}$
\\
$f_{t,\widehat{\mathbf{U}}} : \mathbb{R}^d \rightarrow \mathbb{R}$ & Function such that $f_{t,\widehat{\mathbf{U}}}(\mathbf{x}) = \mathbb{E}_{\bm{\epsilon}}[R_t(\mathbf{p})]$ for $\mathbf{x} = \widehat{\mathbf{U}}^T \mathbf{p}$
\\
$ \eta, \delta, \alpha > 0$ & User specified hyperparameters of our bandit pricing algorithms
\\
$\sigma^2 > 0$  & Sub-Gaussian parameter that specifies magnitude of noise effects in the observed demands
\\
$\mathbf{U}^T(\mathcal{S})$ & $d$-dimensional actions that correspond to feasible prices: $\big\{ \mathbf{x} \in \mathbb{R}^d : \mathbf{x} = \mathbf{U}^T \mathbf{p} \ \text{for some }  \mathbf{p}  \in \mathcal{S} \big\}$
\\ 
 $r_{\smallup}, r_{\smalldown} > 0$  & Radius of Euclidean balls containing/contained-within $\mathbf{U}^T(\mathcal{S})$, with $r_{\smallup} \ge r_{\smalldown}$
\\
$B > 0$ & Upper bounds the magnitude of $\mathbb{E} [R_t(\mathbf{p})]$ over all $\mathbf{p} \in \mathcal{S}$, $t = 1,\dots,T$ 
\\
$L > 0$ & Lipschitz constant of each $f_t (\mathbf{x})$ over all $\mathbf{x} \in \mathbf{U}^T(\mathcal{S})$, $t = 1,\dots,T$
\\
$b > 0$ & Upper bounds the magnitude of $\mathbf{z}_t, \mathbf{V}_t$ for $t = 1,\dots,T$ ($||\mathbf{z}_t||_2 \le b$ and $||\mathbf{V}_t||_\text{op} \le b$)
\\
$r \ge 1$ & Radius of Euclidean ball adopted as the feasible set of (rescaled) prices throughout \S\ref{sec:latent}
\end{tabular}
} 

\clearpage
\nocitesi{Flaxman05}
\bibliographystylesi{plainnat}
\bibliographysi{banditPricing}

\end{document}